\let\zz\[\let\zzz\]
\newtheorem{theorem}{Theorem}
\crefname{theorem}{theorem}{Theorems}
\Crefname{Theorem}{Theorem}{Theorems}
\newtheorem{proposition}{Proposition}
\crefname{proposition}{proposition}{propositions}
\Crefname{Proposition}{Proposition}{Propositions}
\newtheorem{lemma}{Lemma}
\crefname{lemma}{lemma}{lemmas}
\Crefname{Lemma}{Lemma}{Lemmas}
\newtheorem{corollary}{Corollary}
\crefname{corollary}{corollary}{corollaries}
\Crefname{Corollary}{Corollary}{Corollaries}
\newtheorem{definition}{Definition}
\crefname{definition}{definition}{definitions}
\Crefname{Definition}{Definition}{Definitions}
\newtheorem{assumption}{\textbf{A}\hspace{-3pt}}
\crefname{remark}{remark}{remarks}
\Crefname{Remark}{Remark}{Remarks}
\crefname{figure}{figure}{figures}
\Crefname{Figure}{Figure}{Figures}
\newcommand{\Rd}{\mathbb{R}^d}
\newcommand{\ie}{\textit{i.e.}}
\newcommand{\calM}{\mathcal{M}}
\newcommand{\calN}{\mathcal{N}}
\newcommand{\calP}{\mathcal{P}}
\newcommand{\calS}{\mathcal{S}}
\newcommand{\bbN}{\mathbb{N}}
\newcommand{\Esp}{\mathbb{E}}
\newcommand{\rmi}{\mathrm{i}}
\newcommand{\rme}{\mathrm{e}}
\newcommand{\bfI}{\mathbf{I}}
\newcommand{\bfm}{\mathbf{m}}
\newcommand{\bft}{\mathbf{t}}
\newcommand{\bfs}{\mathbf{s}}
\newcommand{\bfSigma}{\mathbf{\Sigma}}
\newcommand{\bfnot}{\mathbf{0}}
\def\msi{\mathsf{I}}
\def\msa{\mathsf{A}}
\def\msd{\mathsf{X}}
\def\msk{\mathsf{K}}
\def\mss{\mathsf{S}}
\def\msb{\mathsf{B}} 
\def\msd{\mathsf{D}}
\def\mse{\mathsf{E}}
\def\msi{\mathsf{I}}
\def\mso{\mathsf{O}}
\def\msu{\mathsf{U}}
\def\msv{\mathsf{V}}
\def\msz{\mathsf{Z}}
\def\msy{\mathsf{Y}}
\def\mcl{\mathcal{L}}
\def\mcs{\mathcal{S}}
\newcommand{\mcb}[1]{\mathcal{B}(#1)}
\def\mcy{\mathcal{Y}}
\def\mce{\mathcal{E}}
\def\mcf{\mathcal{F}}
\def\mcp{\mathcal{P}}
\def\rset{\mathbb{R}}
\def\nset{\mathbb{N}}
\def\nsets{\mathbb{N}^*}
\def\rmd{\mathrm{d}}
\def\rme{\mathrm{e}}
\newcommand{\abs}[1]{\left\vert #1 \right\vert}
\newcommandx{\Vnorm}[2][1=V]{\| #2 \|_{#1}}
\newcommandx{\normpi}[2][2=2]{\left\Vert  #1 \right\Vert_{#2}}
\newcommandx{\normH}[2][2=2]{\left\Vert  #1 \right\Vert}
\newcommandx{\normHLigne}[2][2=2]{\Vert  #1 \Vert}
\newcommandx{\normHLine}[2][2=2]{\Vert  #1 \Vert}
\newcommandx{\normmu}[2][2=2]{\left\Vert  #1 \right\Vert_{#2}}
\newcommandx{\normopmu}[2][2=2]{\left\vvvert  #1 \right\vvvert_{#2}}
\newcommandx{\normopH}[2][2=2]{\left\vvvert  #1 \right\vvvert}
\newcommandx{\normop}[2][2=2]{\left\vvvert  #1 \right\vvvert}
\newcommand{\ps}[2]{\left\langle#1,#2 \right\rangle}
\newcommandx{\normpiLine}[2][2=2]{\Vert  #1 \Vert_{#2}}
\newcommandx{\normmuLine}[2][2=2]{\Vert  #1 \Vert_{#2}}
\newcommandx{\normopmuLine}[2][2=2]{\vvvert  #1 \vvvert_{#2}}
\newcommandx{\normopHLine}[2][2=2]{\vvvert  #1 \vvvert}
\newcommandx{\normopLine}[2][2=2]{\vvvert  #1 \vvvert}
\newcommandx{\VnormEq}[2][1=V]{\left\| #2 \right\|_{#1}}
\newcommandx{\norm}[2][1=]{\ifthenelse{\equal{#1}{}}{\left\Vert #2 \right\Vert}{\left\Vert #2 \right\Vert^{#1}}}
\newcommandx{\normLigne}[2][1=]{\ifthenelse{\equal{#1}{}}{\Vert #2 \Vert}{\Vert #2\Vert^{#1}}}
\newcommandx{\norminf}[2][1=]{\ifthenelse{\equal{#1}{}}{\left\Vert #2 \right\Vert_{\infty}}{\left\Vert #2 \right\Vert^{#1}_{\infty}}}
\newcommand{\parenthese}[1]{\left(#1 \right)}
\newcommand{\parentheseDeux}[1]{\left[ #1 \right]}
\newcommand{\defEns}[1]{\left\lbrace #1 \right\rbrace }
\newcommand{\PE}{\mathbb{E}}
\newcommand{\PP}{\mathbb{P}}
\newcommand{\expe}[1]{\PE \left[ #1 \right]}
\newcommand{\expeY}[1]{\PE \left[ #1 \middle| Y_{1:n} \right]}
\newcommandx{\expeE}[2][2=]{\mathbb{E}^{#2}\left[ #1 \right]}
\newcommand{\expeLigne}[1]{\PE [ #1 ]}
\newcommand{\expeYLigne}[1]{\PE [ #1  | Y_{1:n} ]}
\newcommand{\plusinfty}{+\infty}
\def\ie{\textit{i.e.}}
\def\eqsp{\;}
\newcommand{\ocint}[1]{\left(#1\right]}
\newcommand{\ccint}[1]{\left[#1\right]}
\newcommandx\sequence[3][2=,3=]
\newcommandx\sequenceD[3][2=,3=]
\newcommandx{\sequencen}[2][2=n\in\nset]{\ensuremath{( #1)_{#2}}}
\newcommandx{\sequencek}[2][2=k\in\nset]{\ensuremath{( #1)_{#2}}}
\newcommandx{\sequencens}[2][2=n\in\nsets]{\ensuremath{( #1)_{#2}}}
\newcommandx{\sequenceks}[2][2=k\in\nsets]{\ensuremath{( #1)_{#2}}}
\newcommandx\sequenceDouble[4][3=,4=]
\newcommandx{\sequencenDouble}[3][3=n\in\nset]{\ensuremath{\{ (#1_{n},#2_{n}), \eqsp #3 \}}}
\def\iid{i.i.d.}
\def\Idd{\operatorname{I}_d}
\newcommand{\ensemble}[2]{\left\{#1\,:\eqsp #2\right\}}
\newcommand{\set}[2]{\ensemble{#1}{#2}}
\def\card{\operatorname{card}}
\def\sphere{\mathbb{S}}
\def\sphereD{\mathbb{S}^{d-1}}
\newcommand{\1}{\mathbbm{1}}
\def\distance{\ell}
\newcommandx{\wasserstein}[3][1=\distance,3=]{\mathbf{W}_{#1}^{#3}\left(#2\right)}
\newcommandx{\wassersteinLigne}[3][1=\distance,3=]{\mathbf{W}_{#1}^{#3}(#2)}
\newcommandx{\wassersteinD}[1][1=\distance]{\mathbf{W}_{#1}}
\newcommandx{\wassersteinDLigne}[1][1=\distance]{\mathbf{W}_{#1}}
\newcommandx{\swasserstein}[3][1=\distance,3=]{\mathbf{SW}_{#1}^{#3}\left(#2\right)}
\newcommandx{\swassersteinLigne}[3][1=\distance,3=]{\mathbf{SW}_{#1}^{#3}(#2)}
\newcommandx{\swassersteinD}[1][1=\distance]{\mathbf{SW}_{#1}}
\newcommandx{\swassersteinDLigne}[1][1=\distance]{\mathbf{SW}_{#1}}
\newcommandx{\hatswassersteinD}[1][1=\distance]{\widehat{\mathbf{SW}}_{#1}}
\def\dist{\mathbf{d}}
\def\Leb{\mathrm{Leb}}
\def\argmin{\mathrm{argmin}}
\def\us{u}
\def\uss{u^{\star}}
\def\usss{u^{\star}_{\sharp}}
\def\unifS{\boldsymbol{\sigma}}
\def\lsc{l.s.c.}
\def\wc{\xrightarrow{w}}
\def\ec{\xrightarrow{e}}
\def\proj{\operatorname{proj}}
\def\projD{\proj(\msd)}
\def\ellstable{\mce \alpha \mcs_c}
\newcommand\fraca[2]{(#1/#2)}
\def\bfepsilon{\boldsymbol{\epsilon}}
\def\hmu{\hat{\mu}}
\def\hnu{\hat{\nu}}
\def\htheta{\hat{\theta}}
\title{Asymptotic Guarantees for Learning Generative Models with the Sliced-Wasserstein Distance}
\author{%
    Kimia Nadjahi$^{1}$, Alain Durmus$^{2}$, Umut \c{S}im\c{s}ekli$^{1,3}$, Roland Badeau$^{1}$  \\
    1: LTCI, T\'{e}l\'{e}com Paris, Institut Polytechnique de Paris, France\\
    2: CMLA, ENS Cachan, CNRS, Universit\'e Paris-Saclay, France \\
    3: Department of Statistics, University of Oxford, UK \\
    \texttt{\{kimia.nadjahi, umut.simsekli, roland.badeau\}@telecom-paris.fr} \\
    \texttt{alain.durmus@cmla.ens-cachan.fr}
}
\begin{document}

\maketitle

\begin{abstract}
Minimum expected distance estimation (MEDE) algorithms have been widely used for probabilistic models with intractable likelihood functions and they have become increasingly popular due to their use in implicit generative modeling (e.g.\ Wasserstein generative adversarial networks, Wasserstein autoencoders). Emerging from computational optimal transport, the Sliced-Wasserstein (SW) distance has become a popular choice in MEDE thanks to its simplicity and computational benefits. While several studies have reported empirical success on generative modeling with SW, the theoretical properties of such estimators have not yet been established. In this study, we investigate the asymptotic properties of estimators that are obtained by minimizing SW. We first show that convergence in SW implies weak convergence of probability measures in general Wasserstein spaces. Then we show that estimators obtained by minimizing SW (and also an approximate version of SW) are asymptotically consistent. We finally prove a central limit theorem, which characterizes the asymptotic distribution of the estimators and establish a convergence rate of $\sqrt{n}$, where $n$ denotes the number of observed data points. We illustrate the validity of our theory on both synthetic data and neural networks. 
\end{abstract}

\section{Introduction}

Minimum distance estimation (MDE) is a generalization of maximum-likelihood inference, where the goal is to minimize a distance between the empirical distribution of a set of independent and identically distributed (i.i.d.) observations $Y_{1:n} =(Y_1,\dots,Y_n)$ and a family of distributions indexed by a parameter $\theta$. The problem is formally defined as follows \cite{Wolfowitz1957, Basu2011Statistical}:
\begin{align} 
  \label{eq:mde}
    \hat{\theta}_n = \argmin_{\theta \in \Theta}\ 
    \mathbf{D}(\hat{\mu}_n, \mu_\theta) \eqsp,
\end{align}
where $\mathbf{D}$ denotes a distance (or a divergence in general) between probability measures, $\mu_\theta$ denotes a probability measure indexed by $\theta$, $\Theta$ denotes the parameter space, and
\begin{equation}
  \label{eq:def_empirical_measure_Y}
  \hat{\mu}_n= \frac{1}{n}  \sum\nolimits_{i=1}^n \updelta_{Y_i}
\end{equation}
denotes the empirical measure of $Y_{1:n}$, with $\updelta_Y$ being the Dirac distribution with mass on the point $Y$. When $\mathbf{D}$ is chosen as the Kullback-Leibler divergence, this formulation coincides with the maximum likelihood estimation (MLE) \cite{Basu2011Statistical}.

While MDE provides a fruitful framework for statistical inference, when working with generative models, solving the optimization problem in \eqref{eq:mde} might be intractable since it might be impossible to evaluate the probability density function associated with $\mu_\theta$.
Nevertheless, in various settings, even if the density is not available, one can still generate samples from the distribution $\mu_\theta$, and such samples turn out to be useful for making inference. More precisely, under such settings, a natural alternative to \eqref{eq:mde} is the minimum \emph{expected} distance estimator, which is defined as follows \cite{Bernton2019}: 
\begin{equation} \label{eq:mede}
    \hat{\theta}_{n, m} = \argmin_{\theta \in \Theta}\ \expe{ \mathbf{D}(\hat{\mu}_n, \hat{\mu}_{\theta,m}) \middle| Y_{1:n} } \eqsp.
\end{equation}
Here,
\begin{equation}
  \label{eq:def_empirical_measure_Z}
  \hat{\mu}_{\theta, m} = \frac{1}{m} \sum\nolimits_{i=1}^m \updelta_{Z_i} 
\end{equation}
denotes the empirical distribution of $Z_{1:m}$, that is a sequence of i.i.d.\ random variables with distribution $\mu_\theta$. %
This algorithmic framework has computationally favorable properties since one can replace the expectation with a simple Monte Carlo average in practical applications. 

In the context of MDE, distances that are based on optimal transport (OT) have become increasingly popular due to their computational and theoretical properties \cite{arjovsky2017wasserstein,tolstikhin2017wasserstein,genevay2017learning,patrini2018sinkhorn,adler2018banach}. For instance, if we replace the distance $\mathbf{D}$ in \eqref{eq:mede} with the Wasserstein distance (defined in \Cref{sec:prel-techn-backgr} below), we obtain the minimum expected Wasserstein estimator \cite{Bernton2019}. In the classical statistical inference setting, the typical use of such an estimator is to infer the parameters of a measure whose density does not admit an analytical closed-form formula \cite{Basu2011Statistical}. On the other hand, in the implicit generative modeling (IGM) setting, this estimator forms the basis of two popular IGM strategies: Wasserstein generative adversarial networks (GAN) \cite{arjovsky2017wasserstein} and Wasserstein variational auto-encoders (VAE) \cite{tolstikhin2017wasserstein} (cf.\ \cite{genevay2017gan} for their relation). The goal of these two methods is to find the best parametric \emph{transport map} $T_\theta$, such that $T_\theta$ transforms a simple distribution $\mu$ (e.g.\ standard Gaussian or uniform) to a potentially complicated data distribution $\hat{\mu}_n$ by minimizing the Wasserstein distance between the transported distribution $\mu_\theta = T_{\theta\sharp}\mu$ and $\hat{\mu}_n$, where $\sharp$ denotes the push-forward operator, to be defined in the next section. In practice, $\theta$ is typically chosen as a neural network,
for which it is often impossible to evaluate the induced density $\mu_\theta$. However, one can easily generate samples from $\mu_\theta$ by first generating a sample from $\mu$ and then applying $T_\theta$ to that sample, making minimum expected distance estimation \eqref{eq:mede} feasible for this setting. Motivated by its practical success, the theoretical properties of this estimator have been recently taken under investigation \cite{bousquet2017optimal,liu2017approximation} and very recently Bernton et al.\ \cite{Bernton2019} have established the consistency (for the general setting) and the asymptotic distribution (for one dimensional setting) of this estimator.
 
Even though estimation with the Wasserstein distance has served as a fertile ground for many generative modeling applications, except for the case when the measures are supported on $\rset^1$, the computational complexity of minimum Wasserstein estimators rapidly becomes excessive with the increasing problem dimension, and developing accurate and efficient approximations is a highly non-trivial task. Therefore, there have been several attempts to use more practical alternatives to the Wasserstein distance \cite{cuturi2013sinkhorn,genevay2017learning}. In this context, the Sliced-Wasserstein (SW) distance \cite{rabin:et:al:2011,Bonnotte2013,bonneel2015sliced} has been an increasingly popular alternative to the Wasserstein distance, which is defined as an average of \emph{one-dimensional} Wasserstein distances, which allows it to be computed in an efficient manner. 

While several studies have reported empirical success on generative modeling with SW \cite{Deshpande2018,Kolouri2018Sliced,csimcsekli2018sliced,wu2017sliced}, the theoretical properties of such estimators have not yet been fully established. Bonnotte \cite{Bonnotte2013} proved that SW is a proper metric, and in \emph{compact} domains SW is equivalent to the Wasserstein distance, hence convergence in SW implies weak convergence in compact domains. \cite{Bonnotte2013} also analyzed the gradient flows based on SW, which then served as a basis for a recently proposed IGM algorithm \cite{csimcsekli2018sliced}. Finally, recent studies \cite{Deshpande2018,deshpande2019max} investigated the sample complexity of SW and established bounds for the SW distance between two measures and their empirical instantiations. 

In this paper, we investigate the asymptotic properties of estimators given in \eqref{eq:mde} and \eqref{eq:mede} when $\mathbf{D}$ is replaced with the SW distance. We first prove that convergence in SW implies weak convergence of probability measures defined on general domains, which generalizes the results given in \cite{Bonnotte2013}. Then, by using similar techniques to the ones given in \cite{Bernton2019}, we show that the estimators defined by \eqref{eq:mde} and \eqref{eq:mede} are consistent, meaning that as the number of observations $n$ increases the estimates will get closer to the data-generating parameters. We finally prove a central limit theorem (CLT) in the multidimensional setting, which characterizes the asymptotic distribution of these estimators and establishes a convergence rate of $\sqrt{n}$. The CLT that we prove is stronger than the one given in \cite{Bernton2019} in the sense that it is not restricted to the one-dimensional setting as opposed to \cite{Bernton2019}.

We support our theory with experiments that are conducted on both synthetic and real data. We first consider a more classical statistical inference setting, where we consider a Gaussian model and a multidimensional $\alpha$-stable model whose density is not available in closed-form. In both models, the experiments validate our consistency and CLT results. We further observe that, especially for high-dimensional problems, the estimators obtained by minimizing SW have significantly better computational properties when compared to the ones obtained by minimizing the Wasserstein distance, as expected. In the IGM setting, we consider the neural network-based generative modeling algorithm proposed in \cite{Deshpande2018} and show that our results also hold in the real data setting as well.  

\section{Preliminaries and Technical Background}
\label{sec:prel-techn-backgr}

We consider a probability space $(\Omega, \mathcal{F}, \mathbb{P})$ with associated expectation operator $\mathbb{E}$, on which all the random variables are defined. Let $(Y_k)_{k \in \mathbb{N}}$ be a sequence of random variables associated with observations, where each observation takes value in $\msy \subset \mathbb{R}^{d}$. We assume that these observations are \iid~according to $\mu_\star \in \mcp(\msy)$, where $\mathcal{P}(\msy)$ stands for the set of probability measures on $\msy$. 

A statistical model is a family of distributions on $\msy$ and is denoted by $\mathcal{M} = \{ \mu_\theta \in \mcp(\msy),\ \theta \in \Theta \}$, where $\Theta \subset \mathbb{R}^{d_\theta}$ is the parametric space. In this paper, we focus on parameter inference for purely generative models: for all $\theta \in \Theta$, we can generate \iid~samples $(Z_{k})_{k \in \nsets}\in \msy^{\nsets}$ from $\mu_\theta$, but the associated likelihood is numerically intractable. In the sequel, $(Z_k)_{k \in \nsets}$  denotes an \iid~sequence from $\mu_{\theta}$ with $\theta \in \Theta$, and for any $m \in \nsets$,  $\hat{\mu}_{\theta, m} = \fraca{1}{m} \sum_{i=1}^m \updelta_{Z_i}$ denotes the corresponding empirical distribution. 

Throughout our study, we assume that the following conditions hold: (1) $\msy$, endowed with the Euclidean distance $\rho$, is a Polish space, (2) $\Theta$, endowed with the distance $\rho_\Theta$, is a Polish space, (3) $\Theta$ is a $\sigma$-compact space, \ie~the union of countably many compact subspaces, and (4) parameters are identifiable, \textit{i.e.} $\mu_\theta = \mu_{\theta'}$ implies $\theta = \theta'$. We endow $\mathcal{P}(\msy)$ with the L\'{e}vy-Prokhorov distance $\dist_{\mathcal{P}}$, which metrizes the weak convergence by \cite[Theorem 6.8]{Billingsley1999} since $\msy$ is assumed to be a Polish space. We denote by $\mcy$ the Borel $\sigma$-field of $(\msy,\rho)$. 

\textbf{Wasserstein distance. } 
For $p \geq 1$, we denote by $\calP_p(\msy)$ the set of probability measures on $\msy$ with finite $p$'th moment: $\mathcal{P}_p(\msy) = \set{ \mu \in \mcp(\msy)}{\int_{\msy} \norm{y - y_0 }^p \rmd\mu(y) < \plusinfty, \, \text{ for some $y_0 \in \msy$}}$. The Wasserstein distance of order $p$ between any $\mu, \nu \in \calP_p(\msy)$ is defined by  %
\cite{Villani2008}, 
\begin{equation}
\label{eq:def_wasser}
 \wassersteinD[p]^{p}(\mu, \nu) = \inf_{\gamma \in \Gamma(\mu, \nu)} \defEns{ \int_{\msy \times \msy} \norm{ x - y }^p \rmd\gamma(x,y)}  \eqsp,
 \end{equation}
where $\Gamma(\mu, \nu)$ is the set of probability measures $\gamma$ on $(\msy \times \msy,\mcy \otimes \mcy)$ satisfying $\gamma(\msa \times \msy) = \mu(\msa)$ and $\gamma(\msy\times \msa) = \nu(\msa)$ for any $\msa \in \mcb{\msy}$. The space  $\mathcal{P}_p(\msy)$ endowed with the distance   $\wassersteinD[p]$ is a Polish space by \cite[Theorem 6.18]{Villani2008} since $(\msy, \rho)$ is assumed to be Polish.

The one-dimensional case is a  favorable scenario for which computing the Wasserstein distance of order $p$ between $\mu, \nu \in \calP_p(\rset)$ becomes relatively easy since it has a closed-form formula, given by \cite[Theorem 3.1.2.(a)]{rachev:ruschendorf:1998}:
\begin{align}
  \label{eq:wp1d}
    \wassersteinD[p]^p(\mu, \nu)    = \int_{0}^1 \abs{ F_\mu^{-1}(t) - F_\nu^{-1}(t) }^p \rmd t = \int_{\rset} \abs{ s - F_\nu^{-1}(F_\mu(s))}^p \rmd\mu(s) 
\eqsp,
\end{align}
where $F_\mu$ and $F_\nu$ denote the cumulative distribution functions (CDF) of $\mu$ and $\nu$ respectively, and $F_\mu^{-1}$ and $F_\nu^{-1}$ are the quantile functions of $\mu$ and $\nu$ respectively. %
For empirical distributions, \eqref{eq:wp1d} is calculated by simply sorting the $n$ samples drawn from each distribution and computing the average cost between the sorted samples.

\textbf{Sliced-Wasserstein distance. }
The analytical form of the Wasserstein distance for one-dimensional distributions is an attractive property that gives rise to an alternative metric referred to as the Sliced-Wasserstein (SW) distance \cite{rabin:et:al:2011,bonneel2015sliced}. The idea behind SW is to first, obtain a family of one-dimensional representations for a higher-dimensional probability distribution through linear projections, and then, compute the average of the Wasserstein distance between these one-dimensional representations. 

More formally, let $\sphere^{d-1} = \set{\us \in \rset^d}{\norm{\us} = 1}$ be the $d$-dimensional unit sphere, and denote by $\ps{\cdot}{\cdot}$ the Euclidean inner-product. For any $\us \in \sphereD$, we define $\uss$ the linear form associated with $\us$ for any $y \in \msy$ by $\uss(y) = \ps{u}{y}$.  The Sliced-Wasserstein distance of order $p$ is defined for any $\mu,\nu \in \mathcal{P}_p(\msy)$ as,
\begin{equation}
\label{eq:def_sliced_wasser}
  \swassersteinD[p]^{p}(\mu, \nu) = \int_{\sphere^{d-1}} \wassersteinD[p]^p(\uss_{\sharp} \mu, \uss_{\sharp} \nu) \rmd\unifS(\us) 
\end{equation}
where $\unifS$ is the uniform distribution on $\sphere^{d-1}$ and for any measurable function $f :\msy \to \rset$ and $\zeta \in \mcp(\msy)$,  $f_{\sharp}\zeta$ is the push-forward measure of $\zeta$ by $f$, \ie~for any $\msa \in \mcb{\rset}$, $f_{\sharp}\zeta(\msa) = \zeta(f^{-1}(\msa))$ where $f^{-1}(\msa) = \{y \in \msy \, : \, f(y) \in \msa\}$.

$\swassersteinD[p]$ is a distance on $\calP_p(\msy)$ \cite{Bonnotte2013} and has important practical implications: in practice, the integration in \eqref{eq:def_sliced_wasser} is approximated using a Monte Carlo scheme that randomly draws a finite set of samples from $\unifS$ on $\sphere^{d-1}$ and replaces the integral with a finite-sample average. Therefore, the evaluation of the SW distance between $\mu, \nu \in \calP_p(\msy)$ has significantly lower computational requirements than the Wasserstein distance, since it consists in solving several one-dimensional optimal transport problems, which have closed-form solutions. 

\section{Asymptotic Guarantees for Minimum Sliced-Wasserstein Estimators}

We define the \emph{minimum Sliced-Wasserstein estimator} (MSWE) \emph{of order $p$} as the
estimator obtained by plugging $\swassersteinD[p]$ in place of $\mathbf{D}$ in \eqref{eq:mde}. Similarly, we define the \emph{minimum expected Sliced-Wasserstein estimator} (MESWE) \emph{of order $p$} as the estimator obtained by plugging $\swassersteinD[p]$ in place of $\mathbf{D}$ in \eqref{eq:mede}. In the rest of the paper, MSWE and MESWE will be denoted by $\hat{\theta}_{n}$ and $\hat{\theta}_{n,m}$ respectively. 

We present the asymptotic properties that we derived for MSWE and MESWE, namely their existence and consistency. We study their measurability in \Cref{subsec:measurability} of the supplementary document. We also formulate a CLT that characterizes the asymptotic distribution of MSWE and establishes a convergence rate for any dimension. We provide all the proofs in \Cref{sec:postponed-proofs} of the supplementary document. Note that, since the Sliced-Wasserstein distance is an average of one-dimensional Wasserstein distances, some proofs are, inevitably, similar to the proofs done in \cite{Bernton2019}. However, the adaptation of these techniques to the SW case is made possible by the identification of novel properties regarding the topology induced by the SW distance, to the best of our knowledge, which we establish for the first time in this study. %

\subsection{Topology induced by the Sliced-Wasserstein distance}
\label{sec:weak_conv}

We begin this section by a useful result which we believe is interesting on its own and implies that the topology induced by $\swassersteinD[p]$ on $\mcp_p(\rset^d)$ is finer than the weak topology induced by the L\'{e}vy-Prokhorov metric $\dist_{\mathcal{P}}$.
  \begin{theorem} \label{thm:SWp_metrizes_Pp}
    Let $p \in [1, \plusinfty)$. The convergence in $\swassersteinD[p]$ implies the weak convergence in $\calP(\Rd)$. In other words, if $\sequencek{\mu_k}$ is a sequence of measures in $\calP_p(\Rd)$ satisfying %
$     \lim_{k \rightarrow \plusinfty} \swassersteinD[p](\mu_k, \mu) = 0$, with $\mu \in \mcp_p(\rset^d)$, then $\sequencek{\mu_k} \wc \mu$.
 
\end{theorem}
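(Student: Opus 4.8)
The plan is to deduce weak convergence from convergence of the one-dimensional projections together with a tightness argument, following the Cram\'er--Wold/L\'evy circle of ideas. Recall that $\swassersteinD[p]^p(\mu_k,\mu) = \int_{\sphereD}\wassersteinD[p]^p(\uss_{\sharp}\mu_k,\uss_{\sharp}\mu)\rmd\unifS(\us)$, so the hypothesis says exactly that the nonnegative functions $\us \mapsto \wassersteinD[p]^p(\uss_{\sharp}\mu_k,\uss_{\sharp}\mu)$ tend to $0$ in $L^1(\unifS)$. Since weak convergence on $\calP(\Rd)$ is metrized by $\dist_{\mcp}$, it suffices to prove that $\sequencek{\mu_k}$ is tight and that its only weak limit point is $\mu$; Prokhorov's theorem then forces $\mu_k \wc \mu$.

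First I would establish tightness through a uniform bound on $p$-th moments. For any $y$, rotation invariance of $\unifS$ gives $\int_{\sphereD}\abs{\ps{\us}{y}}^p\rmd\unifS(\us) = c_{d,p}\norm{y}^p$ with $c_{d,p} = \int_{\sphereD}\abs{\ps{\us}{v}}^p\rmd\unifS(\us) > 0$ for any unit vector $v$. Writing $M_p(\uss_{\sharp}\mu_k) = \int_{\rset}\abs{s}^p\rmd(\uss_{\sharp}\mu_k)(s) = \wassersteinD[p]^p(\uss_{\sharp}\mu_k,\updelta_0)$ and combining the triangle inequality for $\wassersteinD[p]$ with $(a+b)^p \le 2^{p-1}(a^p+b^p)$, I get $M_p(\uss_{\sharp}\mu_k) \le 2^{p-1}\bigl(\wassersteinD[p]^p(\uss_{\sharp}\mu_k,\uss_{\sharp}\mu) + M_p(\uss_{\sharp}\mu)\bigr)$. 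Integrating over $\unifS$ and applying Tonelli to exchange the integrals in $\us$ and $y$ yields
\[
  c_{d,p}\int_{\Rd}\norm{y}^p\rmd\mu_k(y) \le 2^{p-1}\Bigl(\swassersteinD[p]^p(\mu_k,\mu) + c_{d,p}\int_{\Rd}\norm{y}^p\rmd\mu(y)\Bigr).
\]
Because $\swassersteinD[p](\mu_k,\mu)\to 0$ is bounded and $\mu \in \mcp_p(\Rd)$, the right-hand side is bounded uniformly in $k$, so $\sup_k \int_{\Rd}\norm{y}^p\rmd\mu_k < \plusinfty$ and Markov's inequality gives tightness.

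Next I would identify the weak limit points via characteristic functions, writing $\Phi_\nu(\xi) = \int_{\Rd}\rme^{\rmi\ps{\xi}{y}}\rmd\nu(y)$ and noting $\Phi_\nu(t\us) = \int_{\rset}\rme^{\rmi ts}\rmd(\uss_{\sharp}\nu)(s)$. Let $\tilde\mu$ be the weak limit of some subsequence $(\mu_{k_j})$. Along it the $L^1(\unifS)$ convergence persists, so I can extract a further subsequence $(\mu_{k_{j_\ell}})$ for which $\wassersteinD[p](\uss_{\sharp}\mu_{k_{j_\ell}},\uss_{\sharp}\mu)\to 0$ for $\unifS$-almost every $\us$. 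For each such $\us$, convergence in $\wassersteinD[p]$ on $\rset$ implies $\uss_{\sharp}\mu_{k_{j_\ell}} \wc \uss_{\sharp}\mu$, hence $\Phi_{\mu_{k_{j_\ell}}}(t\us)\to\Phi_\mu(t\us)$ for every $t\in\rset$. On the other hand $\mu_{k_{j_\ell}}\wc\tilde\mu$ gives $\Phi_{\mu_{k_{j_\ell}}}\to\Phi_{\tilde\mu}$ pointwise on $\Rd$. Therefore $\Phi_{\tilde\mu}$ and $\Phi_\mu$ agree on the cone $\{t\us : t\in\rset,\ \us\in A\}$ over a set $A\subset\sphereD$ of full $\unifS$-measure, which has full Lebesgue measure in $\Rd$; since both characteristic functions are continuous, they coincide everywhere, and uniqueness of characteristic functions forces $\tilde\mu = \mu$.

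The main obstacle is precisely the mismatch between the almost-everywhere (in direction) information supplied by $L^1(\unifS)$ convergence and the everywhere pointwise convergence of characteristic functions required by L\'evy's continuity theorem: a fixed coordinate direction need not be a point of convergence, so Cram\'er--Wold cannot be invoked directly. Tightness is what repairs this, since it provides, through Prokhorov, weakly convergent subsequences whose limits are genuine probability measures with continuous characteristic functions, and continuity upgrades the almost-everywhere agreement on the cone to agreement everywhere. Combining the two steps, every subsequence of the tight sequence $\sequencek{\mu_k}$ has a further subsequence converging weakly to $\mu$, which is equivalent to $\mu_k\wc\mu$ in the metric space $(\calP(\Rd),\dist_{\mcp})$.
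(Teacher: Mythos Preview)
Your proof is correct and takes a genuinely different route from the paper's. Both arguments start the same way: from $\swassersteinD[p](\mu_k,\mu)\to 0$ one extracts, via $L^1(\unifS)$ convergence, a subsequence along which $\wassersteinD[p](\uss_{\sharp}\mu_{k_\ell},\uss_{\sharp}\mu)\to 0$ for $\unifS$-almost every $\us$, yielding convergence of characteristic functions $\Phi_{\mu_{k_\ell}}(\xi)\to\Phi_\mu(\xi)$ for Lebesgue-almost every $\xi\in\Rd$. The divergence is in how this almost-everywhere information is upgraded to weak convergence. The paper does \emph{not} prove tightness; instead it mollifies test functions by Gaussians, writes $\int f_\sigma\,\rmd\mu_{k_\ell}$ as an integral of $\Phi_{\mu_{k_\ell}}$ against a smooth $L^1$ kernel, applies dominated convergence to pass to the limit, and then lets $\sigma\to 0$ using uniform convergence of $f_\sigma\to f$ on compactly supported continuous $f$. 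This Fourier-analytic detour yields weak convergence of the subsequence directly, and a short contradiction argument finishes. Your approach instead exploits the specific structure of $\swassersteinD[p]$ to obtain a uniform $p$-th moment bound via the identity $\int_{\sphereD}\abs{\ps{\us}{y}}^p\rmd\unifS(\us)=c_{d,p}\norm{y}^p$, which gives tightness for free; Prokhorov then supplies weakly convergent subsequences whose limits have continuous characteristic functions, and continuity upgrades the a.e.\ agreement to $\tilde\mu=\mu$. Your argument is arguably more elementary (no mollification), and it yields the bonus information $\sup_k\int\norm{y}^p\rmd\mu_k<\infty$, while the paper's argument is slightly more self-contained in that it does not rely on Prokhorov's theorem.
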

The property that convergence in $\swassersteinD[p]$ implies weak convergence has already been proven in \cite{Bonnotte2013} only for \emph{compact} domains. While the implication of weak convergence is one of the most crucial requirements that a distance metric should satisfy, to the best of our knowledge, this implication has not been proved for general domains before. In \cite{Bonnotte2013}, the main proof technique was based on showing that $\swassersteinD[p]$ is equivalent to $\wassersteinD[p]$ in compact domains, whereas we follow a different path and use the L\'{e}vy characterization.

\subsection{Existence and consistency of MSWE and MESWE}

In our next set of results, we will show that both MSWE and MESWE are consistent, in the sense that, when the number of observations $n$ increases, the estimators will converge to a parameter $\theta_\star$ that minimizes the ideal problem $\theta \mapsto \swassersteinD[p](\mu_\star,\mu_\theta)$. Before we make this argument more precise, let us first present the assumptions that will imply our results. 
\begin{assumption} \label{assumption:continuousmap}
    The map $\theta \mapsto \mu_\theta$ is continuous from $(\Theta,\rho_{\Theta})$ to $(\mcp(\msy),\dist_{\mcp})$, \ie~ for any sequence $(\theta_n)_{n \in \nset}$ in $\Theta$, satisfying  $\lim_{n \to \plusinfty} \rho_\Theta(\theta_n, \theta) = 0$, we have  $\sequencen{\mu_{\theta_n}} \wc \mu_\theta$.
\end{assumption}
\begin{assumption} \label{assumption:datagen}
    The data-generating process is such that $\lim_{n \rightarrow \plusinfty} \swassersteinD[p](\hat{\mu}_n, \mu_\star) = 0$, $\mathbb{P}$-almost surely.
\end{assumption}
\begin{assumption} \label{assumption:boundedset}
There exists $\epsilon > 0$, such that setting  $\epsilon_\star = \inf_{\theta \in \Theta} \swassersteinD[p](\mu_\star, \mu_\theta)$, the set $\Theta^\star_\epsilon = \{ \theta \in \Theta : \swassersteinD[p](\mu_\star, \mu_\theta) \leq \epsilon_\star + \epsilon \}$ is bounded.
\end{assumption}
These assumptions are mostly related to the identifiability of the statistical model and the regularity of the data generating process. They are arguably mild assumptions, analogous to those that have already been considered in the literature \cite{Bernton2019}. Note that, without \Cref{thm:SWp_metrizes_Pp}, the formulation and use of \Cref{assumption:datagen} in our proofs in the supplementary document would not be possible. In the next result, we establish the consistency of MSWE.
\begin{theorem}[Existence and consistency of MSWE] Assume \Cref{assumption:continuousmap}, \Cref{assumption:datagen} and \Cref{assumption:boundedset}. There exists $\mse  \in \mcf$ with $\mathbb{P}(\mse) = 1$ such that, for all $\omega \in \mse$, 
\begin{align}
  \lim_{n \rightarrow \plusinfty} \inf_{\theta \in \Theta} \swassersteinD[p](\hat{\mu}_n(\omega), \mu_\theta) &= \inf_{\theta \in \Theta} \swassersteinD[p](\mu_\star, \mu_\theta), \; \text{ and } \label{eqn:mswe_consist_a} \\
\limsup_{n \rightarrow \plusinfty} \argmin_{\theta \in \Theta} \swassersteinD[p](\hat{\mu}_n(\omega), \mu_\theta) &\subset \argmin_{\theta \in \Theta} \swassersteinD[p](\mu_\star, \mu_\theta) \eqsp, \label{eqn:mswe_consist_b}
\end{align}
where $\hmu_n$ is defined by \eqref{eq:def_empirical_measure_Y}.
Besides, for all $\omega \in \mse$, there exists $n(\omega)$ such that, for all $n \geq n(\omega)$, the set $\argmin_{\theta \in \Theta} \swassersteinD[p](\hat{\mu}_n(\omega), \mu_\theta)$ is non-empty. \label{thm:existence_consistency_mswe}
\end{theorem}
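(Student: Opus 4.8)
The plan is to treat this as a classical argmin-consistency (epi-convergence) statement for the random objective $g_n(\theta) = \swassersteinD[p](\hmu_n, \mu_\theta)$ and its deterministic limit $g(\theta) = \swassersteinD[p](\mu_\star, \mu_\theta)$, exploiting crucially that $\swassersteinD[p]$ is a genuine metric on $\calP_p(\Rd)$. The engine of the whole argument is the triangle inequality, which yields the \emph{uniform} (in $\theta$) control
\begin{equation*}
  \sup_{\theta \in \Theta} \absolute{g_n(\theta) - g(\theta)} \leq \swassersteinD[p](\hmu_n, \mu_\star) =: \delta_n \eqsp.
\end{equation*}
By \Cref{assumption:datagen}, $\delta_n \to 0$ $\PP$-almost surely, so I would take $\mse \in \mcf$ to be the event $\{\lim_n \delta_n = 0\}$, which satisfies $\PP(\mse)=1$, and fix $\omega \in \mse$ throughout.

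First I would prove \eqref{eqn:mswe_consist_a}: since $\phi \mapsto \inf_\theta \phi(\theta)$ is $1$-Lipschitz for the supremum norm, the uniform bound immediately gives $\absolute{\inf_\theta g_n(\theta) - \inf_\theta g(\theta)} \leq \delta_n \to 0$, which is exactly the claimed convergence of the minimal values.

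Next, the inclusion \eqref{eqn:mswe_consist_b} rests on lower semicontinuity of $g$ and of each $g_n$, which I would obtain from \Cref{assumption:continuousmap}: if $\theta_j \to \theta$ in $\Theta$, then $\mu_{\theta_j} \wc \mu_\theta$, hence $\uss_\sharp \mu_{\theta_j} \wc \uss_\sharp \mu_\theta$ for every $\us \in \sphereD$ (push-forward by the continuous linear form $\uss$), and since the one-dimensional $\wassersteinD[p]$ is lower semicontinuous for the weak topology, applying Fatou's lemma to the defining integral \eqref{eq:def_sliced_wasser} gives $g(\theta)^p \leq \liminf_j g(\theta_j)^p$. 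With this in hand, I would take any $\theta_\infty$ arising as the limit of minimizers $\theta_{n_k} \in \argmin_\theta g_{n_k}(\theta)$ along a subsequence; chaining lower semicontinuity, the triangle inequality $g(\theta_{n_k}) \leq g_{n_k}(\theta_{n_k}) + \delta_{n_k}$, optimality of $\theta_{n_k}$, and \eqref{eqn:mswe_consist_a} yields
\begin{equation*}
  g(\theta_\infty) \leq \liminf_k g(\theta_{n_k}) \leq \liminf_k \inf_{\theta} g_{n_k}(\theta) = \inf_\theta g(\theta) \eqsp,
\end{equation*}
so that $\theta_\infty \in \argmin_\theta g(\theta)$, which is \eqref{eqn:mswe_consist_b}.

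Finally, for non-emptiness of $\argmin_\theta g_n$ at large $n$ I would invoke the direct method. Combining the uniform bound with \Cref{assumption:boundedset}: once $\delta_n$ is small enough, every near-minimizer $\theta$ of $g_n$ satisfies $g(\theta) \leq g_n(\theta) + \delta_n \leq \epsilon_\star + \epsilon$, hence lies in the bounded set $\Theta^\star_\epsilon$, so a minimizing sequence for $g_n$ is bounded in $\rset^{d_\theta}$; together with lower semicontinuity of $g_n$ and the $\sigma$-compactness of $\Theta$, a suitable cluster point is a genuine minimizer. The main obstacle is precisely this existence step: because $\Theta$ is only assumed Polish and $\sigma$-compact, and need not be closed in $\rset^{d_\theta}$, mere boundedness of the near-minimizing set does not by itself force a minimizing sequence to converge \emph{inside} $\Theta$, so the three ingredients — lower semicontinuity of $g_n$, the coercivity furnished by \Cref{assumption:boundedset}, and the exhaustion of $\Theta$ by compacta — must be combined carefully to trap the limit within $\Theta$. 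The measurability required for $\mse$ and $n(\omega)$ to be well defined I would defer to the separate measurability analysis.
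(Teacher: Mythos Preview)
Your proposal is correct and takes a genuinely different, more elementary route than the paper. The paper works through the full epi-convergence framework of Rockafellar--Wets: it verifies the characterization of epi-convergence via infima over compact and open sets (their Proposition~\ref{prop:729rockafellar}) and then invokes their Theorem~\ref{thm:731rockafellar} to deduce both \eqref{eqn:mswe_consist_a} and \eqref{eqn:mswe_consist_b}. You instead observe that because $\swassersteinD[p]$ is a metric, the triangle inequality gives the \emph{uniform} bound $\sup_\theta |g_n(\theta)-g(\theta)| \le \delta_n$, which is strictly stronger than epi-convergence and makes both conclusions essentially one-liners: \eqref{eqn:mswe_consist_a} follows from $1$-Lipschitzness of $\inf$ in sup-norm, and \eqref{eqn:mswe_consist_b} from the standard chain ``lsc $+$ uniform convergence $+$ optimality''. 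The lower-semicontinuity step you sketch matches the paper's Lemma~\ref{lem:sw_semicontinuous}/Corollary~\ref{coro:sw_semicontinuous_2}. What the paper's epi-convergence route buys is a template that ports verbatim to the MESWE setting (Theorem~\ref{thm:existence_consistency_meswe}), where the extra term $\expeYLigne{\swassersteinD[p](\mu_\theta,\hat\mu_{\theta,m(n)})}$ depends on $\theta$ and the uniform triangle-inequality bound is no longer available; your approach is cleaner here but would need modification there. Your caution about the existence step is also warranted: the paper simply asserts that $\Theta^\star_\epsilon$, being closed (in $\Theta$) and bounded, is compact --- implicitly using Heine--Borel, which requires $\Theta$ closed in $\rset^{d_\theta}$ --- so the subtlety you flag is present in the paper as well and is not a defect of your argument.
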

Our proof technique is similar to the one given in \cite{Bernton2019}. This result shows that, when the number of observations goes to infinity, the estimate $\hat{\theta}_n$ will converge to a global minimizer of the problem $\min_{\theta \in \Theta} \swassersteinD[p](\mu_\star, \mu_\theta)$.

In our next result, we prove a similar property for MESWEs as $\min(m,n)$ goes to infinity. In order to increase clarity, and without loss of generality, in this setting, we consider $m$ as a function of $n$ such that $\lim_{n \rightarrow \plusinfty} m(n) = \plusinfty$. 
Now, we derive an analogous version of Theorem~\ref{thm:existence_consistency_mswe} for MESWE. For this result, we need to introduce another continuity assumption. 
\begin{assumption} \label{assumption:sw32}
    If $\lim_{n \rightarrow \plusinfty} \rho_\Theta(\theta_n, \theta) = 0$, then $\lim_{n \rightarrow \plusinfty} \expeLigne{ \swassersteinD[p](\mu_{\theta_n}, \hat{\mu}_{\theta_n, n}) | Y_{1:n}  } = 0$.
\end{assumption}
The next theorem establishes the consistency of MESWE.
\begin{theorem}[Existence and consistency of MESWE] 
Assume \cref{assumption:continuousmap}, \cref{assumption:datagen}, \cref{assumption:boundedset} and \cref{assumption:sw32}. Let $(m(n))_{n \in \nsets}$ be an increasing sequence satisfying $\lim_{n \to \plusinfty} m(n) = \plusinfty$. There exists a set $\mse \subset \Omega$ with $\mathbb{P}(\mse) = 1$ such that, for all $w \in \mse$, 
\begin{align}
  \lim_{n \rightarrow \plusinfty} \inf_{\theta \in \Theta} \expe{ \swassersteinD[p](\hat{\mu}_n, \hat{\mu}_{\theta, m(n)}) \middle| Y_{1:n}} &= \inf_{\theta \in \Theta} \swassersteinD[p](\mu_\star, \mu_\theta), \; \text{ and }  \label{eqn:meswe_consist_a} \\
\limsup_{n \rightarrow \plusinfty} \argmin_{\theta \in \Theta}\ \expe{ \swassersteinD[p](\hat{\mu}_n, \hat{\mu}_{\theta, m(n)}) \middle| Y_{1:n}} &\subset \argmin_{\theta \in \Theta}\ \swassersteinD[p](\mu_\star, \mu_\theta) \eqsp, \label{eqn:meswe_consist_b} 
\end{align}
where $\hat{\mu}_n$ and $ \hat{\mu}_{\theta, m(n)}$ are defined by \eqref{eq:def_empirical_measure_Y} and \eqref{eq:def_empirical_measure_Z} respectively.
Besides, for all $\omega \in \mse$, there exists $n(\omega)$ such that, for all $n \geq n(\omega)$, the set $\argmin_{\theta \in \Theta}\ \expeLigne{ \swassersteinD[p](\hat{\mu}_n, \hat{\mu}_{\theta, m(n)}) | Y_{1:n}}$ is non-empty. \label{thm:existence_consistency_meswe}
\end{theorem}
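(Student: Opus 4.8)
The plan is to reduce the consistency of the MESWE to the consistency of the MSWE already established in \Cref{thm:existence_consistency_mswe}, by controlling the gap between the expected objective $\theta \mapsto \expeLigne{\swassersteinD[p](\hat{\mu}_n, \hat{\mu}_{\theta, m(n)}) \mid Y_{1:n}}$ and the ideal objective $\theta \mapsto \swassersteinD[p](\mu_\star, \mu_\theta)$. First I would fix an event $\mse$ of probability one on which \Cref{assumption:datagen} holds (so that $\swassersteinD[p](\hat{\mu}_n, \mu_\star) \to 0$) and on which the conclusions of \Cref{thm:existence_consistency_mswe} hold; the final $\mse$ will be the intersection of these full-measure sets. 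The key structural tool throughout is that $\swassersteinD[p]$ is a genuine metric on $\calP_p(\Rd)$ by \cite{Bonnotte2013}, so the triangle inequality applies to all the quantities below.

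The central estimate is a triangle-inequality decomposition. By the triangle inequality for $\swassersteinD[p]$,
\begin{equation*}
  \abs{\swassersteinD[p](\hat{\mu}_n, \hat{\mu}_{\theta, m(n)}) - \swassersteinD[p](\mu_\star, \mu_\theta)}
  \leq \swassersteinD[p](\hat{\mu}_n, \mu_\star) + \swassersteinD[p](\hat{\mu}_{\theta, m(n)}, \mu_\theta) \eqsp.
\end{equation*}
Taking conditional expectation given $Y_{1:n}$ and using that $\hat{\mu}_n$ is $Y_{1:n}$-measurable, I would obtain that the conditional expected objective differs from $\swassersteinD[p](\mu_\star,\mu_\theta)$ by at most $\swassersteinD[p](\hat{\mu}_n,\mu_\star) + \expeLigne{\swassersteinD[p](\hat{\mu}_{\theta,m(n)},\mu_\theta)\mid Y_{1:n}}$. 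The first term vanishes on $\mse$ by \Cref{assumption:datagen} uniformly in $\theta$; the second term is where the data-generating sampling enters. The analogue of \Cref{assumption:sw32} (applied along with an argument that $Z_{1:m}$ are drawn independently of $Y_{1:n}$, so the conditional expectation is just an ordinary expectation over the simulated samples) controls this term for each fixed $\theta$, and I would upgrade this to the uniform control needed on the relevant bounded parameter set supplied by \Cref{assumption:boundedset}.

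Having established that the expected objective converges to the ideal objective, I would then replay the argument of \Cref{thm:existence_consistency_mswe} verbatim: the infima converge, giving \eqref{eqn:meswe_consist_a}; the level sets eventually lie inside a fixed compact set by \Cref{assumption:boundedset}; lower semicontinuity of $\theta \mapsto \swassersteinD[p](\mu_\star,\mu_\theta)$, which follows from \Cref{assumption:continuousmap} together with \Cref{thm:SWp_metrizes_Pp} (continuity of $\theta \mapsto \mu_\theta$ in the weak topology, which is coarser than the $\swassersteinD[p]$-topology), yields both the $\limsup$ inclusion of argmin sets \eqref{eqn:meswe_consist_b} and non-emptiness of the argmin sets for large $n$ via a standard compactness-plus-lower-semicontinuity argument. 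The main obstacle I anticipate is not the topological bookkeeping but the passage from the pointwise-in-$\theta$ control of $\expeLigne{\swassersteinD[p](\hat{\mu}_{\theta,m(n)},\mu_\theta)\mid Y_{1:n}}$ furnished by \Cref{assumption:sw32} to a bound uniform over the bounded set of \Cref{assumption:boundedset}, since consistency of the argmin requires controlling the objective simultaneously over all candidate minimizers; I would handle this by combining \Cref{assumption:continuousmap} (continuity of $\theta\mapsto\mu_\theta$) with a careful $\liminf$/$\limsup$ argument along near-minimizing sequences, rather than attempting a genuinely uniform bound.
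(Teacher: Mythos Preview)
Your proposal is essentially correct and uses the same ingredients as the paper's proof: the triangle-inequality split into $\swassersteinD[p](\hat{\mu}_n,\mu_\star)$ (controlled by \Cref{assumption:datagen}) and $\expeLigne{\swassersteinD[p](\mu_\theta,\hat{\mu}_{\theta,m(n)})\mid Y_{1:n}}$ (controlled sequentially by \Cref{assumption:sw32}), combined with a $\liminf$/$\limsup$ analysis along near-minimizing sequences rather than a uniform bound. The paper packages precisely this argument in the language of \emph{epi-convergence}: it verifies Rockafellar's characterization (his Proposition~7.29) that $\theta \mapsto \expeLigne{\swassersteinD[p](\hat{\mu}_n,\hat{\mu}_{\theta,m(n)})\mid Y_{1:n}}$ epi-converges to $\theta \mapsto \swassersteinD[p](\mu_\star,\mu_\theta)$ via a $\liminf$ inequality over compacta and a $\limsup$ inequality over opens, then reads off both \eqref{eqn:meswe_consist_a} and \eqref{eqn:meswe_consist_b} from Rockafellar's Theorem~7.31. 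In particular the paper does not reduce to \Cref{thm:existence_consistency_mswe} as a black box; it re-runs that proof scheme with the extra simulation-error term inserted, which is exactly what your final sentence anticipates. The epi-convergence framework buys a clean separation between verifying the two inequalities and harvesting the argmin conclusions, and it sidesteps any need for the uniform-in-$\theta$ control you correctly flag as unavailable.

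One technical correction: lower semi-continuity of $\theta\mapsto\swassersteinD[p](\mu_\star,\mu_\theta)$, and more importantly of the expected objective $\theta\mapsto\expeLigne{\swassersteinD[p](\nu,\hat{\mu}_{\theta,m})}$ (needed to extract minimizers over compacta in the $\liminf$ step), does \emph{not} follow from \Cref{thm:SWp_metrizes_Pp}. That theorem says the $\swassersteinD[p]$-topology is finer than the weak topology, which is the wrong direction; you need $\swassersteinD[p]$ to be \lsc\ in the weak topology. The paper proves this separately (\Cref{lem:sw_semicontinuous} via tightness of optimal couplings, and \Cref{lemma:lsc_Esw} for the expected version via Skorokhod representation), and these lemmas are what actually drive the compactness-plus-\lsc\ step you invoke.
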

Similar to Theorem~\ref{thm:existence_consistency_mswe}, this theorem shows that, when the number of observations goes to infinity, the estimator obtained with the expected distance will converge to a global minimizer.

\subsection{Convergence of MESWE to MSWE}

In practical applications, we can only use a finite number of generated samples $Z_{1:m}$. In this subsection, we analyze the case where the observations $Y_{1:n}$ are kept fixed while the number of generated samples increases, \ie~$m \rightarrow \plusinfty$ and we show in this scenario that MESWE converges to MSWE, assuming the latter exists. %

Before deriving this result, we formulate a technical assumption below.

\begin{assumption} \label{assumption:boundedset_eps_n}
    For some $\epsilon > 0$ and $\epsilon_n = \inf_{\theta \in \Theta} \swassersteinD[p](\hat{\mu}_n, \mu_\theta)$, the set $\Theta_{\epsilon, n} = \{ \theta \in \Theta : \swassersteinD[p](\hat{\mu}_n, \mu_\theta) \leq \epsilon_n + \epsilon \}$ is bounded almost surely.
\end{assumption}

\begin{theorem}[MESWE converges to MSWE as $m \rightarrow \plusinfty$] \label{thm:cvg_meswe_to_mswe} Assume \cref{assumption:continuousmap}, \cref{assumption:sw32} and \cref{assumption:boundedset_eps_n}. Then,
  \begin{align}
    \lim_{m \rightarrow \plusinfty} \inf_{\theta \in \Theta} \expe{ \swassersteinD[p](\hat{\mu}_n, \hat{\mu}_{\theta, m})  \middle| Y_{1:n}} &= \inf_{\theta \in \Theta} \swassersteinD[p](\hat{\mu}_n, \mu_\theta) \label{eqn:meswe_to_mswe_a} \\
    \limsup_{m \rightarrow \plusinfty} \argmin_{\theta \in \Theta} \expe{ \swassersteinD[p](\hat{\mu}_n, \hat{\mu}_{\theta, m}) \middle| Y_{1:n}} &\subset \argmin_{\theta \in \Theta} \swassersteinD[p](\hat{\mu}_n, \mu_\theta) \label{eqn:meswe_to_mswe_b}
  \end{align} 
  Besides, there exists $m^*$ such that, for any $ m \geq m^*$, the set $\argmin_{\theta \in \Theta} \expe{ \swassersteinD[p](\hat{\mu}_n, \hat{\mu}_{\theta, m}) | Y_{1:n}}$ is non-empty.
\end{theorem}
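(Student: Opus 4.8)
The plan is to run a $\Gamma$-convergence (epi-convergence) argument in the variable $\theta$, now keeping $\hat{\mu}_n$ fixed and letting the Monte Carlo size $m$ play the role previously played by the sample size in \cref{thm:existence_consistency_mswe}. Write $J_m(\theta) = \expe{\swassersteinD[p](\hat{\mu}_n, \hat{\mu}_{\theta, m}) \mid Y_{1:n}}$ for the MESWE objective, $J(\theta) = \swassersteinD[p](\hat{\mu}_n, \mu_\theta)$ for the MSWE objective, and $r_m(\theta) = \expe{\swassersteinD[p](\mu_\theta, \hat{\mu}_{\theta, m}) \mid Y_{1:n}}$ for the expected Monte Carlo error. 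The backbone is the pair of bounds coming from the triangle inequality for the metric $\swassersteinD[p]$ (a genuine distance by \cite{Bonnotte2013}): taking conditional expectations yields $J(\theta) - r_m(\theta) \le J_m(\theta) \le J(\theta) + r_m(\theta)$, i.e.\ $\abs{J_m(\theta) - J(\theta)} \le r_m(\theta)$. Once I have established $J_m \ec J$ together with an equi-coercivity (eventual level-boundedness) property, the conclusions \eqref{eqn:meswe_to_mswe_a}--\eqref{eqn:meswe_to_mswe_b} and the eventual non-emptiness of the $\argmin$ will follow from the standard equivalence between epi-convergence plus eventual level-boundedness and the convergence of infima and minimizers, exactly as for \cref{thm:existence_consistency_mswe}.

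First I would verify the two epi-convergence inequalities. For the recovery (limsup) direction the constant sequence $\theta_m \equiv \theta$ suffices: the upper bound $J_m(\theta) \le J(\theta) + r_m(\theta)$ combined with \cref{assumption:sw32} (which forces $r_m(\theta) \to 0$) gives $\limsup_m J_m(\theta) \le J(\theta)$. For the liminf direction, take any $\theta_m \to \theta$; the lower bound gives $J_m(\theta_m) \ge \swassersteinD[p](\hat{\mu}_n, \mu_{\theta_m}) - r_m(\theta_m)$, where $r_m(\theta_m) \to 0$ by \cref{assumption:sw32} and $\liminf_m \swassersteinD[p](\hat{\mu}_n, \mu_{\theta_m}) \ge \swassersteinD[p](\hat{\mu}_n, \mu_\theta) = J(\theta)$, because $\mu_{\theta_m} \wc \mu_\theta$ by \cref{assumption:continuousmap} and $\swassersteinD[p]$ is lower semicontinuous for the weak topology (each one-dimensional $\wassersteinD[p]^p$ is weakly \lsc\ and Fatou's lemma passes this through the integral over $\sphereD$ in \eqref{eq:def_sliced_wasser}). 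I would also record the sharper observation that \cref{assumption:sw32} is precisely a continuous-convergence statement for the maps $\theta \mapsto r_m(\theta)$ toward the (continuous) limit $0$, hence equivalent to $\sup_{\theta \in K} r_m(\theta) \to 0$ on every compact $K \subset \Theta$; together with $\abs{J_m - J} \le r_m$ this upgrades to uniform convergence $J_m \to J$ on compacta, which makes the behavior of near-minimizers inside any fixed compact set transparent.

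The main obstacle is the equi-coercivity: I must prevent the minimizers of $J_m$ from escaping to regions of $\Theta$ where $r_m$ is not uniformly controlled, so that the uniform-on-compacta convergence can actually be applied. Here \cref{assumption:boundedset_eps_n} enters, guaranteeing that the sublevel set $\Theta_{\epsilon, n} = \{\theta : J(\theta) \le \epsilon_n + \epsilon\}$ is bounded; since $\Theta \subset \rset^{d_\theta}$ is $\sigma$-compact I may enclose it in a compact $K$. A recovery point $\theta_0$ with $J(\theta_0)$ close to $\epsilon_n = \inf_\Theta J$ gives $\inf_\Theta J_m \le \epsilon_n + \epsilon/2$ for large $m$, so the only remaining (and delicate) point is to rule out minimizers outside $K$. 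The difficulty is that the global bound $J_m \ge J - r_m$ is vacuous far out, so it must be replaced by an $m$-uniform coercive minorant; one route I would pursue is a barycenter-separation bound such as $\swassersteinD[p](\hat{\mu}_n, \hat{\mu}_{\theta,m}) \ge \swassersteinD[1](\hat{\mu}_n, \hat{\mu}_{\theta,m}) \ge c_d \norm{\int y\,\rmd\hat{\mu}_n(y) - \int y\,\rmd\hat{\mu}_{\theta,m}(y)}$, whose conditional expectation is $\ge c_d \norm{\int y\,\rmd\hat{\mu}_n(y) - \int y\,\rmd\mu_\theta(y)}$ by Jensen and is uniform in $m$, reconciling this minorant with the coercivity supplied by \cref{assumption:boundedset_eps_n} (possibly via a tightness argument) is precisely the crux. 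Once the minimizers are confined to $K$, lower semicontinuity of $J_m$ (from \cref{assumption:continuousmap} and the weak \lsc\ of $\swassersteinD[p]$, modulo the measurability arguments deferred to the supplement) produces a minimizer on $K$, hence eventual non-emptiness of the $\argmin$, and the uniform convergence $J_m \to J$ on $K$ delivers \eqref{eqn:meswe_to_mswe_a} and the inclusion \eqref{eqn:meswe_to_mswe_b}; everything past the equi-coercivity step is a routine transcription of the epi-convergence machinery already used for \cref{thm:existence_consistency_mswe}.
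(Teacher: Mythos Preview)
Your overall strategy coincides with the paper's: both establish epi-convergence of $J_m(\theta)=\expeYLigne{\swassersteinD[p](\hat{\mu}_n,\hat{\mu}_{\theta,m})}$ to $J(\theta)=\swassersteinD[p](\hat{\mu}_n,\mu_\theta)$ via the sandwich $|J_m-J|\le r_m$ and the lower semi-continuity of $\swassersteinD[p]$ under \cref{assumption:continuousmap}, then invoke the Rockafellar--Wets machinery. The paper works with the compact/open characterization (Proposition~7.29) rather than the sequential definition you use, but this is cosmetic.

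The substantive divergence is in the equi-coercivity step. The paper does \emph{not} seek an $m$-uniform coercive minorant of the barycenter-separation type you sketch. Instead it argues directly on near-minimizers: having already shown $\inf_\Theta J_m\le\epsilon_n+\epsilon/4$ for large $m$, it takes any $\theta$ in the sublevel set $\{\,J_m\le\epsilon_n+\epsilon/2\,\}$ and applies the triangle inequality in the other direction,
\[
\swassersteinD[p](\hat{\mu}_n,\mu_\theta)\;\le\;\expeYLigne{\swassersteinD[p](\hat{\mu}_n,\hat{\mu}_{\theta,m})}+\expeYLigne{\swassersteinD[p](\mu_\theta,\hat{\mu}_{\theta,m})}\;\le\;\epsilon_n+\tfrac{\epsilon}{2}+\tfrac{\epsilon}{2},
\]
so that $\theta\in\Theta_{\epsilon,n}$ and hence $\inf_{\Theta_{\epsilon,n}}J_m=\inf_\Theta J_m$. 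This is exactly the bound $J\le J_m+r_m$ that you call ``vacuous far out''; the paper simply invokes \cref{assumption:sw32} to produce an $m_{\epsilon/2}$ with $r_m(\theta)\le\epsilon/2$. Your instinct that this step is the delicate one is correct---the uniformity in $\theta$ of that threshold is used but, as written, \cref{assumption:sw32} is a continuous-convergence hypothesis giving uniform control only on compacta, not globally---but the paper's route is much shorter than the barycenter detour you propose, and the latter would require model-specific structure not contained in the stated assumptions. If you want to tighten the argument, the cleanest fix is to stay with the paper's inclusion $\{\,J_m\le\epsilon_n+\epsilon/2\,\}\subset\Theta_{\epsilon,n}$ and justify the uniform $r_m$-bound on $\Theta_{\epsilon,n}$ (which is compact) rather than manufacturing a separate coercive lower bound.
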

This result shows that MESWE would be indeed promising in practice, as one get can more accurate estimations by increasing $m$.

\subsection{Rate of convergence and the asymptotic distribution}

In our last set of theoretical results, we investigate the asymptotic distribution of MSWE and we establish a rate of convergence. 
We now suppose that we are in the well-specified setting, \ie~there exists $\theta_\star$ in the interior of $\Theta$ such that $\mu_{\theta_\star} = \mu_\star$, and we consider the following two assumptions. 
For any $u \in \sphereD$ and $t \in \mathbb{R}$, we define
$F_\theta(u,t) = \int_\msy \1_{\ocint{-\infty,t}} (\ps{u}{y}) \rmd \mu_\theta(y)  $. %
Note that for any $u \in \sphereD$, $F_\theta(u,\cdot)$ is the cumulative distribution function (CDF) associated to the measure $\usss \mu_\theta$.
\begin{assumption} \label{assumption:well_separation}
  For all $\epsilon > 0$, there exists $\delta > 0$ such that 
    $\inf_{\theta \in \Theta :\ \rho_{\Theta}(\theta , \theta_\star) \geq \epsilon} \swassersteinD[1](\mu_{\theta_\star}, \mu_\theta) > \delta \eqsp.$
\end{assumption}
Let $\mcl^1(\sphereD \times \rset)$ denote the class of functions that are absolutely integrable on the domain $\sphereD \times \rset$, with respect to the measure $d\unifS \otimes \Leb $, where $\Leb$ denotes the Lebesgue measure.  
\begin{assumption} \label{assumption:form_derivative}
Assume that there exists a measurable function $D_{\star} = (D_{\star,1}, \dots, D_{\star,d_\theta}) : \sphereD \times \rset \mapsto \mathbb{R}^{d_\theta}$ such that for each $i =1, \dots , d_\theta$, $ D_{\star,i} \in \mcl^1(\sphereD \times \rset)$ and
  \begin{align*}
    \int_{\sphereD} \int_\rset \left| F_{\theta}(u,t) - F_{\theta_\star}(u,t) - \langle \theta - \theta_\star, D_{\star}(u,t) \rangle \right| \rmd t \rmd \unifS(u) = \bfepsilon( \rho_\Theta( \theta , \theta_\star) ) \eqsp,
  \end{align*}
  where $\bfepsilon : \rset_+ \to \rset_+$ satisfies $\lim_{t \to 0} \bfepsilon(t) = 0$. Besides, $\{D_{\star,i}\}_{i=1}^{d_\theta}$ are linearly independent in $\mcl^1(\sphereD \times \rset)$.
\end{assumption}

For any $u \in \sphereD$,  and $t \in \mathbb{R}$, define: 
$\hat{F}_{n}(u,t) = n^{-1} \card\{ i \in \{ 1, \dots, n\} : \ps{u}{Y_i} \leq t \}$,
where $\card$ denotes the cardinality of a set, and for any $u \in \sphereD$, $\hat{F}_n(u,\cdot)$ is the CDF associated to the measure $\usss \hat{\mu}_n$.

\begin{assumption} \label{assumption:weak_convergence_without_norm}
There exists a random element $G_{\star} : \sphereD \times \rset \mapsto \rset$ such that the stochastic process $\sqrt{n} ( \hat{F}_n - F_{\theta_\star} )$ converges weakly in $\mcl_1(\sphereD \times \rset)$ to $G_\star$\footnote{Under mild assumptions on the tails of $\usss \mu_\star$ for any $u \in \sphereD$, we believe that one can prove that \cref{assumption:weak_convergence_without_norm} holds in general by extending \cite[Proposition 3.5]{dede2009empirical} and \cite[Theorem 2.1a]{delbarrio1999}.}.
\end{assumption}

\begin{theorem} \label{thm:asymptotic_1}
  Assume \cref{assumption:continuousmap}, \cref{assumption:datagen}, \cref{assumption:boundedset}, \cref{assumption:well_separation}, \cref{assumption:form_derivative} and \cref{assumption:weak_convergence_without_norm}. Then, the asymptotic distribution of the goodness-of-fit statistic is given by,
  \begin{equation*} 
    \sqrt{n} \inf_{\theta \in \Theta} \swassersteinD[1](\hat{\mu}_n, \mu_\theta) \wc \inf_{\theta \in \Theta} \int_{\sphereD} \int_\rset \left| G_{\star}(u,t) - \langle \theta, D_{\star}(u,t) \rangle \right| \rmd t \rmd \unifS(u), \quad \text{ as }  n \rightarrow \plusinfty \eqsp,
  \end{equation*}
  where $\hmu_n$ is defined by \eqref{eq:def_empirical_measure_Y}.
\end{theorem}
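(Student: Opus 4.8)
The plan is to turn the order-$1$ Sliced-Wasserstein statistic into an $L^1$-distance between (sliced) cumulative distribution functions, localize the minimization around $\theta_\star$ at the $1/\sqrt{n}$ scale, prove a functional weak-convergence statement for the localized objective, and finally pass the infimum through the limit by a continuous-mapping argument. First I would invoke the closed form \eqref{eq:wp1d} with $p=1$, i.e.\ $\wassersteinD[1](\alpha,\beta) = \int_\rset \absolute{ F_\alpha(s) - F_\beta(s) } \rmd s$, applied slicewise, together with \eqref{eq:def_sliced_wasser}. This gives, for every $\theta$,
\[
  \swassersteinD[1](\hat{\mu}_n, \mu_\theta) = \int_{\sphereD} \int_\rset \absolute{ \hat{F}_n(u,t) - F_\theta(u,t) } \rmd t \, \rmd\unifS(u) = \normLigne{ \hat{F}_n - F_\theta }_{\mcl^1(\sphereD \times \rset)},
\]
so the goodness-of-fit statistic becomes an $\mcl^1$-norm, for which $g \mapsto \normLigne{g}_{\mcl^1}$ is $1$-Lipschitz.

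Next I would localize by writing $\theta = \theta_\star + h/\sqrt{n}$ and, using the well-specified identity $\mu_{\theta_\star} = \mu_\star$ so that $\hat{F}_n$ centers at $F_{\theta_\star}$, decompose
\[
  \sqrt{n}\bigl( \hat{F}_n - F_{\theta_\star + h/\sqrt{n}} \bigr) = W_n - R_n(h), \quad \text{with } W_n = \sqrt{n}( \hat{F}_n - F_{\theta_\star}), \ R_n(h) = \sqrt{n}( F_{\theta_\star + h/\sqrt{n}} - F_{\theta_\star}).
\]
By \cref{assumption:weak_convergence_without_norm}, $W_n \wc G_\star$ in $\mcl^1(\sphereD \times \rset)$, while the differentiability encoded in \cref{assumption:form_derivative} gives $\normLigne{ R_n(h) - \ps{h}{D_\star} }_{\mcl^1} = \sqrt{n}\,\bfepsilon(\abs{h}/\sqrt{n})$, which I would argue vanishes uniformly for $h$ in any compact $K$, so that $R_n(h) \to \ps{h}{D_\star}$ in $\mcl^1$ locally uniformly. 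Since $g \mapsto [\,h \mapsto \normLigne{ g - \ps{h}{D_\star} }_{\mcl^1}\,]$ is continuous (indeed Lipschitz) from $\mcl^1$ into $C(K)$, the continuous mapping theorem applied to $W_n$, combined with the uniform remainder bound and an asymptotic-equivalence (Slutsky) argument, yields that the localized objective $f_n(h) := \sqrt{n}\,\swassersteinD[1](\hat{\mu}_n, \mu_{\theta_\star + h/\sqrt{n}})$ converges weakly in $C(K)$ to $f(h) := \normLigne{ G_\star - \ps{h}{D_\star} }_{\mcl^1}$.

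With process convergence in hand, I would pass the infimum through: the functional $\varphi \mapsto \inf_{h \in K} \varphi(h)$ is continuous on $C(K)$, so $\inf_{h \in K} f_n(h) \wc \inf_{h \in K} f(h)$ for each fixed compact $K$. It then remains to show that, for one fixed large ball $K$, both infima agree with the infima over the full range. For the limit, linear independence of $\{D_{\star,i}\}$ in \cref{assumption:form_derivative} makes $h \mapsto \normLigne{ \ps{h}{D_\star} }_{\mcl^1}$ a norm on $\rset^{d_\theta}$, whence $f(h) \geq c\abs{h} - \normLigne{G_\star}_{\mcl^1} \to \plusinfty$, i.e.\ $f$ is coercive and its infimum (which, after the natural identification of the localized coordinate with $\theta$, is the claimed right-hand side) is attained inside a ball. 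For the pre-limit, the upper bound $\inf_h f_n(h) \le \normLigne{W_n}_{\mcl^1} = \mathrm{O}_{\mathbb{P}}(1)$ together with \cref{assumption:well_separation} and the triangle inequality with \cref{assumption:datagen} force $\sqrt{n}\,\swassersteinD[1](\hat{\mu}_n, \mu_\theta) \to \plusinfty$ for any $\theta$ bounded away from $\theta_\star$, while the local lower bound $\swassersteinD[1](\mu_\star, \mu_\theta) \geq c'\rho_\Theta(\theta,\theta_\star)$ (again from injectivity of $\theta \mapsto \ps{\theta}{D_\star}$) confines the minimizing $h$ to a fixed compact set with probability tending to one; restricting to that set and invoking the previous step concludes.

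The hard part will be this last confinement step, namely establishing the $\sqrt{n}$-rate/tightness of the localized minimizer over the \emph{non-compact} space $\Theta$ uniformly in $n$. The global control far from $\theta_\star$ comes cheaply from well-separation, but converting it into a uniform compactness statement at the $1/\sqrt{n}$ scale requires carefully coupling the global separation of \cref{assumption:well_separation} with the local coercivity supplied by the linear independence in \cref{assumption:form_derivative}, and — crucially — checking that the scaled linearization remainder $\sqrt{n}\,\bfepsilon(\cdot/\sqrt{n})$ remains negligible on the entire confining region rather than merely on fixed compacta.
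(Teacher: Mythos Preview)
Your proposal is correct in spirit and, modulo the honest caveat you flag at the end, would go through. The paper, however, takes a much shorter route: it observes that once $\swassersteinD[1](\hat{\mu}_n,\mu_\theta)$ is rewritten as the $\mcl^1(\sphereD\times\rset)$-distance $\normLigne{\hat F_n - F_\theta}$ (your first display), the problem is exactly an instance of Pollard's minimum-distance framework, and it simply invokes \cite[Theorem~4.2]{Pollard1980}, noting that Pollard's conditions (i)--(iii) are precisely \cref{assumption:well_separation}, \cref{assumption:form_derivative}, and \cref{assumption:weak_convergence_without_norm}.

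What you have written is essentially an unpacking of Pollard's argument specialized to this $\mcl^1$ setting: the localization $\theta=\theta_\star+h/\sqrt{n}$, the decomposition into empirical process plus linearized model, the continuous-mapping step on $C(K)$, and the coercivity/confinement step are exactly the ingredients of Pollard's proof. Your route has the pedagogical merit of being self-contained and of making transparent \emph{why} each assumption is needed (linear independence of $D_\star$ for coercivity, well-separation for global confinement, the $\mcl^1$ weak limit for the empirical part), at the price of having to redo the delicate tightness argument you rightly single out as the hard part. The paper's route buys brevity and avoids re-deriving that step, but hides the mechanics inside the citation.
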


\begin{theorem} \label{thm:asymptotic_2}
  Assume \cref{assumption:continuousmap}, \cref{assumption:datagen}, \cref{assumption:boundedset}, \cref{assumption:well_separation}, \cref{assumption:form_derivative} and \cref{assumption:weak_convergence_without_norm}. Suppose also that the random map $\theta \mapsto \int_{\sphereD} \int_\rset \left| G_{\star}(u,t) - \langle \theta, D_{\star}(u,t) \rangle \right| \rmd t \rmd \unifS(u)$ has a unique infimum almost surely. 
  Then, MSWE with $p = 1$ satisfies,
  \begin{equation*}
    \sqrt{n} ( \hat{\theta}_n - \theta_\star ) \wc \argmin_{\theta \in \Theta} \int_{\sphereD} \int_\rset \left| G_{\star}(u,t) - \langle \theta, D_{\star}(u,t) \rangle \right| \rmd t \rmd \unifS(u), \quad \text{ as }  n \rightarrow \plusinfty \eqsp,
  \end{equation*}
  where $\htheta_n$ is defined by \eqref{eq:mde} with $\swassersteinD[1]$ in place of $    \mathbf{D}$.
\end{theorem}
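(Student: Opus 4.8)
The plan is to realize $\htheta_n$ as the minimizer of a rescaled random criterion and then pass from the convergence of its infimum (already the content of \cref{thm:asymptotic_1}) to the convergence of its argmin via an argmin continuous-mapping theorem. First I would use the one-dimensional closed form \eqref{eq:wp1d}, $\wassersteinD[1](\mu,\nu)=\int_\rset|F_\mu(s)-F_\nu(s)|\rmd s$, together with the definition \eqref{eq:def_sliced_wasser}, to write the criterion as $\swassersteinD[1](\hat{\mu}_n,\mu_\theta)=\int_{\sphereD}\int_\rset|\hat{F}_n(u,t)-F_\theta(u,t)|\rmd t\rmd\unifS(u)$, so that $\htheta_n$ minimizes an integrated absolute CDF-difference. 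Since \cref{assumption:well_separation} makes $\theta_\star$ the unique minimizer of $\theta\mapsto\swassersteinD[1](\mu_\star,\mu_\theta)$, \cref{thm:existence_consistency_mswe} gives $\htheta_n\to\theta_\star$ almost surely, and because $\theta_\star$ is interior the reparametrization $\theta=\theta_\star+h/\sqrt{n}$ is eventually admissible. I would then define $Z_n(h)=\sqrt{n}\,\swassersteinD[1](\hat{\mu}_n,\mu_{\theta_\star+h/\sqrt{n}})$, whose minimizer is exactly $\hat{h}_n=\sqrt{n}(\htheta_n-\theta_\star)$; note that the limit variable written ``$\theta$'' in the statement plays the role of this local coordinate $h$, the constraint set $\sqrt{n}(\Theta-\theta_\star)$ expanding to all of $\rset^{d_\theta}$.

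The core step is the weak convergence $Z_n\wc Z$ in $\ell^\infty(K)$ for every compact $K\subset\rset^{d_\theta}$, where $Z(h)=\int_{\sphereD}\int_\rset|G_\star(u,t)-\langle h,D_\star(u,t)\rangle|\rmd t\rmd\unifS(u)$. Writing $\hat{F}_n-F_{\theta_\star+h/\sqrt{n}}=n^{-1/2}\big[\sqrt{n}(\hat{F}_n-F_{\theta_\star})-\sqrt{n}(F_{\theta_\star+h/\sqrt{n}}-F_{\theta_\star})\big]$, I would use \cref{assumption:form_derivative} to replace $\sqrt{n}(F_{\theta_\star+h/\sqrt{n}}-F_{\theta_\star})$ by $\langle h,D_\star\rangle$ up to an $L^1$-remainder $\sqrt{n}\,\bfepsilon(\|h\|/\sqrt{n})$ negligible in this regime, and \cref{assumption:weak_convergence_without_norm}, which gives $\sqrt{n}(\hat{F}_n-F_{\theta_\star})\wc G_\star$ in $\mcl^1(\sphereD\times\rset)$. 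The extended continuous mapping theorem, applied to the uniformly $1$-Lipschitz functionals $g\mapsto\int_{\sphereD}\int_\rset|g-\langle h,D_\star\rangle|\rmd t\rmd\unifS(u)$, then yields the finite-dimensional limits, while asymptotic equicontinuity in $h$ follows from $|Z_n(h_1)-Z_n(h_2)|\le\int_{\sphereD}\int_\rset|\langle h_1-h_2,D_\star\rangle|\rmd t\rmd\unifS(u)+o(1)\le C\|h_1-h_2\|$, again by the derivative expansion; hence the convergence holds at the process level. This is exactly the process convergence underlying \cref{thm:asymptotic_1}.

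Finally I would invoke the argmin continuous-mapping theorem. The map $h\mapsto\int_{\sphereD}\int_\rset|\langle h,D_\star\rangle|\rmd t\rmd\unifS(u)$ is a norm on $\rset^{d_\theta}$ by the linear independence of $\{D_{\star,i}\}_{i=1}^{d_\theta}$ assumed in \cref{assumption:form_derivative}, so $Z$ is coercive, $Z(h)\to\plusinfty$ as $\|h\|\to\plusinfty$; together with $Z_n\wc Z$ uniformly on compacts this precludes the minimizers from escaping to infinity and gives $\hat{h}_n=O_\PP(1)$. With the standing hypothesis that $Z$ has an almost surely unique minimizer, the argmin theorem then delivers $\hat{h}_n=\argmin_h Z_n(h)\wc\argmin_h Z(h)$, which upon the identification $h\leftrightarrow\theta$ is precisely the assertion.

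I expect the main obstacle to be the tightness $\hat{h}_n=O_\PP(1)$, \ie~ruling out that a near-minimizing sequence drifts off before the local asymptotics engage: coercivity of $Z$ only controls minimizers once $Z_n\wc Z$ holds uniformly on \emph{large} compacts, so one must combine the process convergence with \cref{assumption:well_separation} and the almost-sure consistency of \cref{thm:existence_consistency_mswe} to certify, at a macroscopic scale, that $\htheta_n$ is confined to neighborhoods of $\theta_\star$ shrinking at the $n^{-1/2}$ rate. A secondary point is ensuring the remainder $\sqrt{n}\,\bfepsilon(\|h\|/\sqrt{n})$ is genuinely negligible, which needs the differentiability in \cref{assumption:form_derivative} to be of first order, \ie~the remainder to be of smaller order than $\rho_\Theta(\theta,\theta_\star)$.
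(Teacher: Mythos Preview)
Your proposal is correct and follows the same route the paper takes: the paper's proof is a one-line appeal to Theorems~4.2 and~7.2 of Pollard (1980), stating that their hypotheses (i)--(iii) are supplied by \cref{assumption:well_separation}, \cref{assumption:form_derivative} and \cref{assumption:weak_convergence_without_norm}, while you have effectively unpacked what Pollard's argument does---rewriting $\swassersteinD[1]$ via the CDF formula, localizing by $h=\sqrt{n}(\theta-\theta_\star)$, obtaining process convergence $Z_n\wc Z$ from the empirical-process assumption and the $L^1$-differentiability, and then invoking an argmin continuous-mapping theorem with coercivity of $Z$ coming from the linear independence of the $D_{\star,i}$. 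The two concerns you flag (tightness of $\hat h_n$ and the order of the remainder $\bfepsilon$) are exactly the substantive checks hidden inside Pollard's conditions, so your self-contained sketch is a faithful elaboration of the paper's citation rather than a different method.
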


These results show that the estimator and the associated goodness-of-fit statistics will converge to a random variable in distribution, where the rate of convergence is $\sqrt{n}$. Note that $G_{\star}$ is defined as a random element (see \Cref{assumption:weak_convergence_without_norm}), therefore we can not claim that the convergence in distribution derived in \Cref{thm:asymptotic_1} and \ref{thm:asymptotic_2} implies the convergence in probability. 

This CLT is also inspired by \cite{Bernton2019}, where they identified the asymptotic distribution associated to the minimum Wasserstein estimator. However, since $\wassersteinD[p]$ admits an analytical form only when $d=1$, their result is restricted to the scalar case, and in their conclusion, \cite{Bernton2019} conjecture that the rate of the minimum Wasserstein estimators would depend negatively on the dimension of the observation space. On the contrary, since $\swassersteinD[p]$ is defined in terms of one-dimensional $\wassersteinD[p]$ distances, we circumvent the curse of dimensionality and our result holds for any finite dimension. While the perceived computational burden has created a pessimism in the machine learning community about the use of Wasserstein-based methods in large dimensional settings, which motivated the rise of regularized optimal transport \cite{peyre2019computational}, we believe that our findings provide an interesting counter-example to this conception.

\newpage 
\section{Experiments}

We conduct experiments on synthetic and real data to empirically confirm our theorems. We explain in \Cref{appendix:computational} of the supplementary document the optimization methods used to find the estimators. Specifically, we can use stochastic iterative optimization algorithm (e.g., stochastic gradient descent). Note that, since we calculate (expected) SW with Monte Carlo approximations over a finite set of projections (and a finite number of \emph{`generated datasets'}), MSWE and MESWE fall into the category of doubly stochastic algorithms. Our experiments on synthetic data actually show that using only one random projection and one randomly generated dataset at each iteration of the optimization process is enough to illustrate our theorems. We provide the code to reproduce the experiments.\footnote{See \url{https://github.com/kimiandj/min_swe}.}

\begin{wrapfigure}{r}{0.5\textwidth}
  \vspace{-12pt} 
  \centering
  \includegraphics[width=0.41\textwidth]{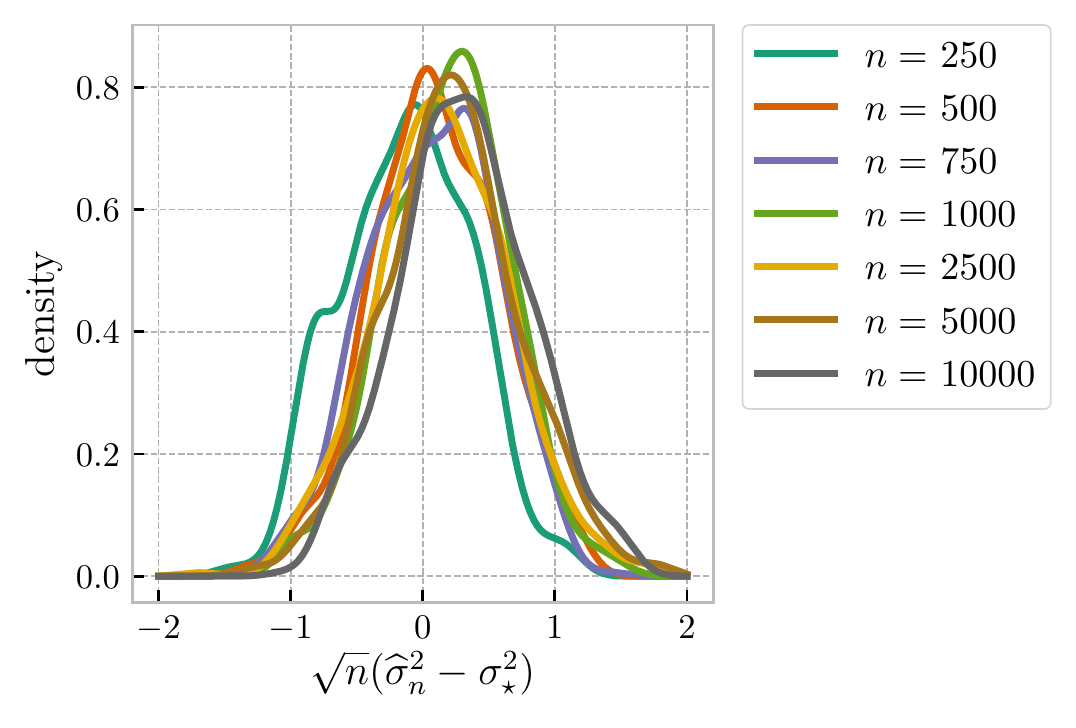}
  \caption{Probability density estimates of the MSWE $\hat{\sigma}^2_n$ of order 1, centered and rescaled by $\sqrt{n}$, on the 10-dimensional Gaussian model for different values of $n$.}
  \label{fig:results_gaussian_asymptotic}
  \vspace{-5pt}
\end{wrapfigure} 
\begin{figure}[t]
  \centering
    \subfloat[MSWE vs. $n$]{
      \includegraphics[width=0.32\textwidth]{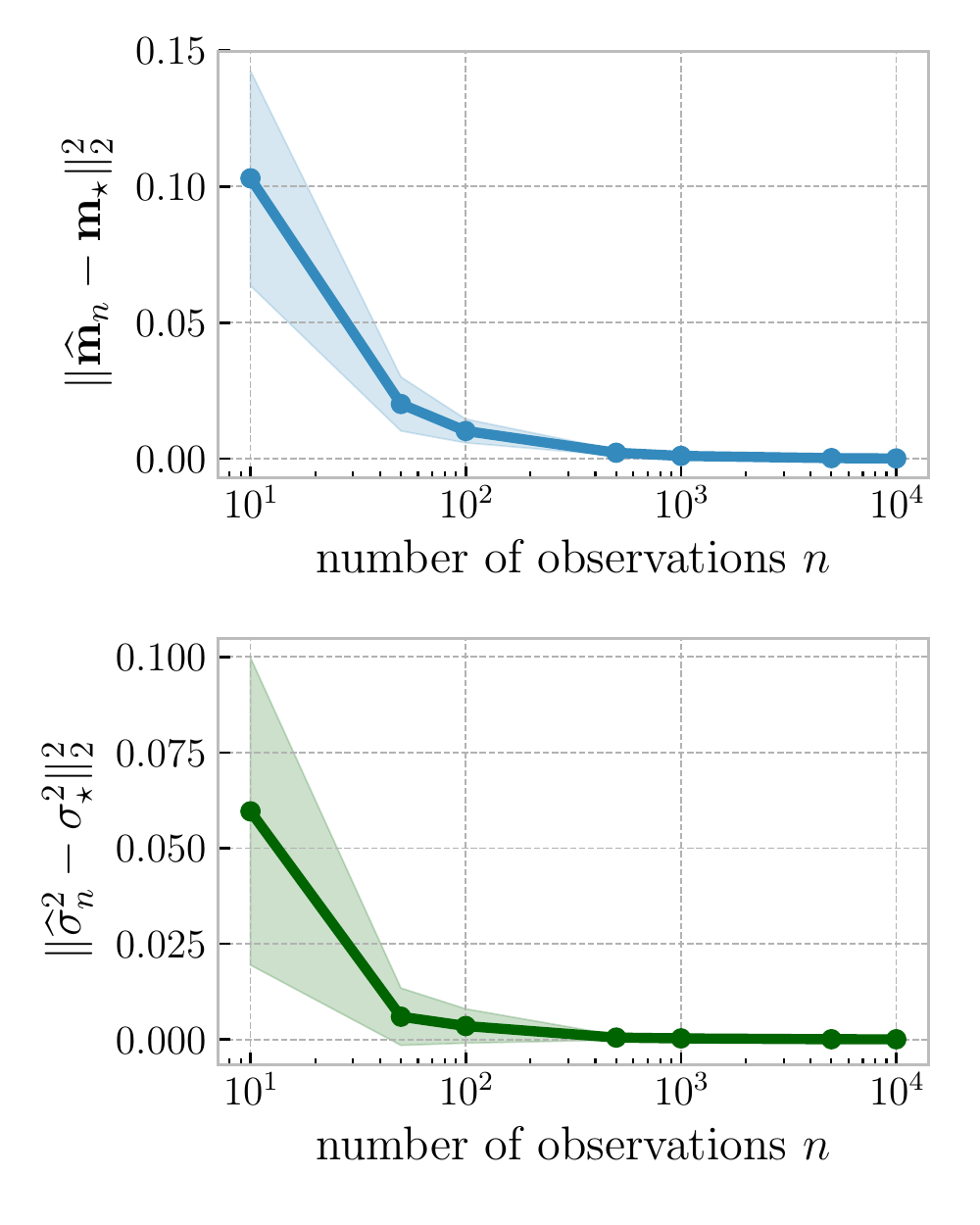}
    \label{fig:results_gaussian_exp_a}
    }
    \subfloat[MESWE vs. $n = m$]{
      \includegraphics[width=0.32\textwidth]{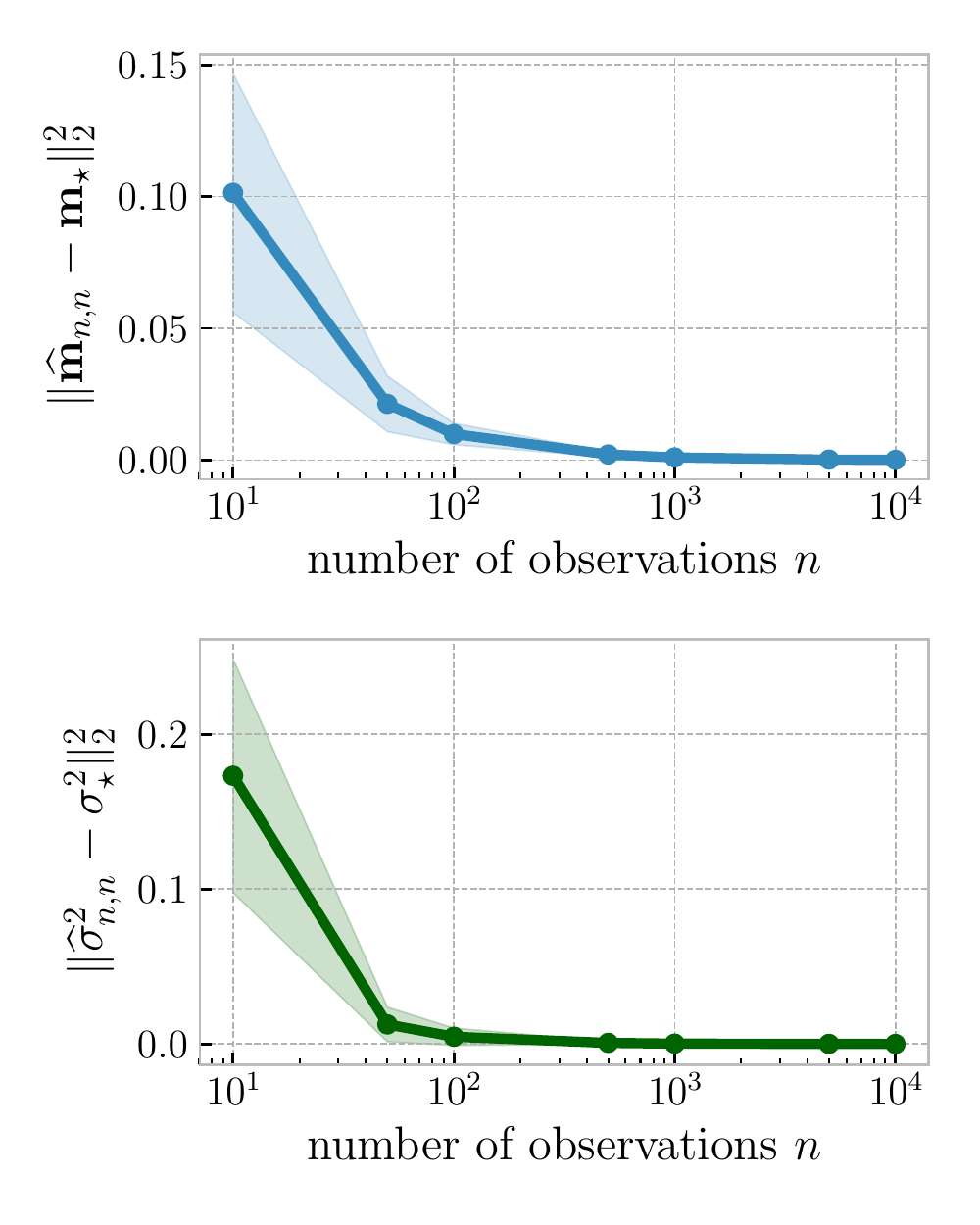}
    \label{fig:results_gaussian_exp_b}
    }
    \subfloat[MESWE with $n=2000$ vs. $m$]{
      \includegraphics[width=0.32\textwidth]{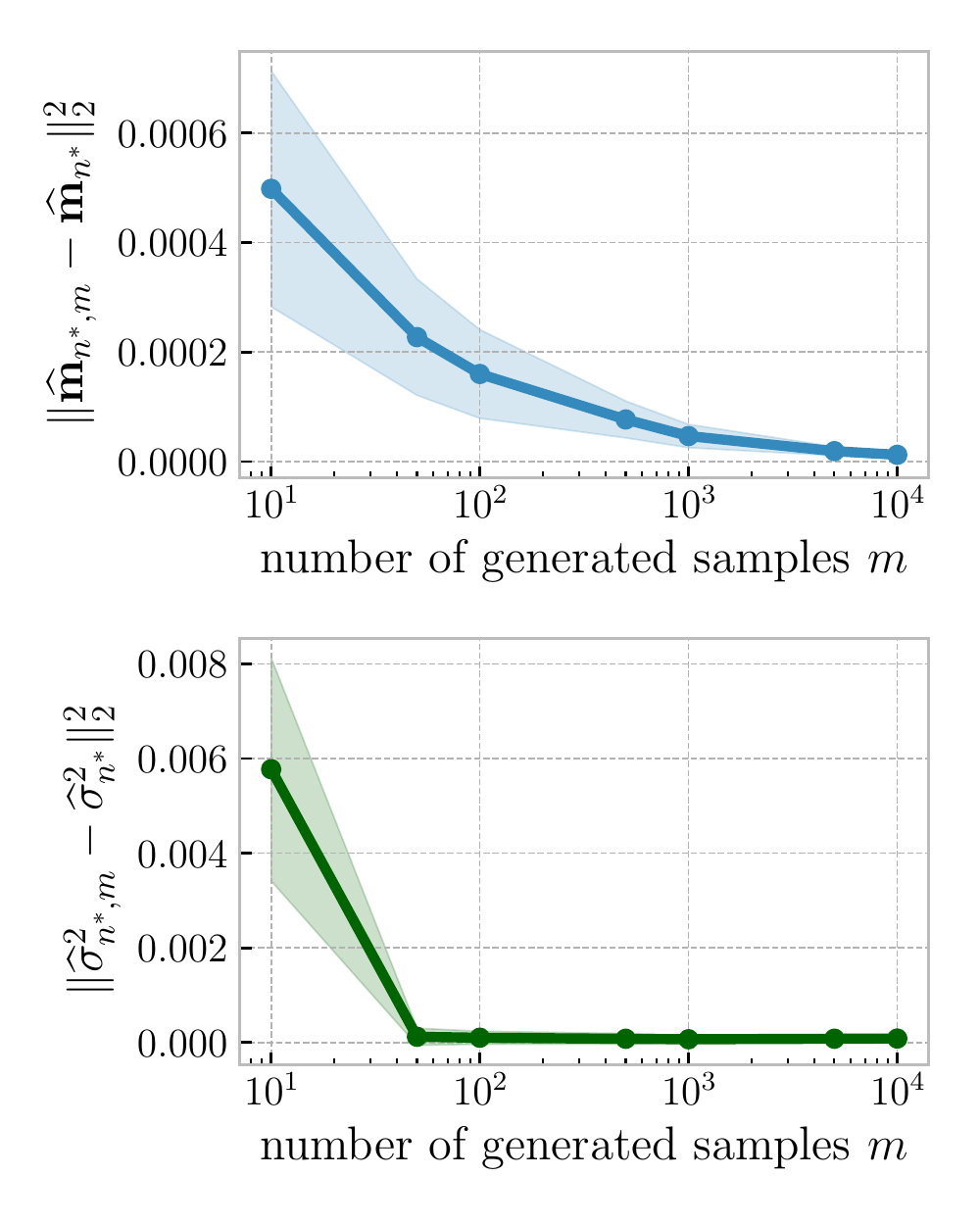}
    \label{fig:results_gaussian_exp_c}
    } 
    \caption{Min. SW estimation on Gaussians in $\rset^{10}$. \Cref{fig:results_gaussian_exp_a} and \Cref{fig:results_gaussian_exp_b} show the mean squared error between $(\bfm_\star, \sigma^2_\star) = (\bfnot, 1)$ and MSWE $(\hat{\bfm}_n, \hat{\sigma}^2_n)$ (resp. MESWE $(\hat{\bfm}_{n,n}, \hat{\sigma}^2_{n,n})$) for $n$ from 10 to 10\,000, illustrating Theorems~\ref{thm:existence_consistency_mswe} and \ref{thm:existence_consistency_meswe}. \Cref{fig:results_gaussian_exp_c} shows the error between $(\hat{\bfm}_n, \hat{\sigma}^2_n)$ and $(\hat{\bfm}_{n,m}, \hat{\sigma}^2_{n,m})$ for 2000 observations and $m$ from 10 to 10\,000, to illustrate \Cref{thm:cvg_meswe_to_mswe}. Results are averaged over 100 runs, the shaded areas represent the standard deviation.}\label{fig:results_gaussian_exp}
    \vspace{-5pt}
\end{figure}

\textbf{Multivariate Gaussian distributions:} We consider the task of estimating the parameters of a 10-dimensional Gaussian distribution using our SW estimators: we are interested in the model $\calM = \left\{ \calN(\bfm, \sigma^2\bfI)\ :\ \bfm \in \rset^{10},\ \sigma^2 > 0 \right\}$ and we draw i.i.d. observations with $(\bfm_\star, \sigma^2_\star) = (\bfnot, 1)$. The advantage of this simple setting is that the density of the generated data has a closed-form expression, which makes MSWE tractable. We empirically verify our central limit theorem: for different values of $n$, we compute 500 times MSWE of order 1 using one random projection, then we estimate the density of $\hat{\sigma}^2_n$ with a kernel density estimator. \Cref{fig:results_gaussian_asymptotic} shows the distributions centered and rescaled by $\sqrt{n}$ for each $n$, and confirms the convergence rate that we derived (\Cref{thm:asymptotic_2}). To illustrate the consistency property in \Cref{thm:existence_consistency_mswe}, we approximate MSWE of order 2 for different numbers of observed data $n$ using one random projection and we report for each $n$ the mean squared error between the estimate mean and variance and the data-generating parameters $(\bfm_\star, \sigma^2_\star)$. We proceed the same way to study the consistency of MESWE (\Cref{thm:existence_consistency_meswe}), which we approximate using one random projections and one generated dataset $z_{1:m}$ of size $m = n$ for different values of $n$. We also verify the convergence of MESWE to MSWE (\Cref{thm:cvg_meswe_to_mswe}): we compute these estimators on a fixed set of $n = 2000$ observations for different $m$, and we measure the error between them for each $m$. Results are shown in \Cref{fig:results_gaussian_exp}. We see that our estimators indeed converge to $(\bfm_\star, \sigma^2_\star)$ as the number of observations increases (Figures~\ref{fig:results_gaussian_exp_a}, \ref{fig:results_gaussian_exp_b}), and on a fixed observed dataset, MESWE converges to MSWE as we generate more samples (\Cref{fig:results_gaussian_exp_c}).

\textbf{Multivariate elliptically contoured stable distributions:} We focus on parameter inference for a subclass of multivariate stable distributions, called elliptically contoured stable distributions and denoted by $\ellstable$ \cite{Nolan2013}. Stable distributions refer to a family of heavy-tailed probability distributions that generalize Gaussian laws and appear as the limit distributions in the generalized central limit theorem \cite{samorodnitsky1994stable}. These distributions have many attractive theoretical properties and have been proven useful in modeling financial \cite{mandelbrot2013fractals} data or audio signals \cite{csimcsekli2015alpha,leglaive2017alpha}. While special univariate cases include Gaussian, L\'{e}vy and Cauchy distributions, the density of stable distributions has no general analytic form, which restricts their practical application, especially for the multivariate case. 

If $Y \in \Rd \sim \ellstable(\bfSigma, \bfm)$, then its joint characteristic function is defined for any $\bft \in \Rd$ as $ \Esp [ \exp (i\bft^T Y) ] = \exp \left( - (\bft^T \bfSigma \bft)^{\alpha / 2} + i \bft^T \bfm \right)$, where $\bfSigma$ is a positive definite matrix (akin to a correlation matrix), $\bfm \in \Rd$ is a location vector (equal to the mean if it exists) and $\alpha \in (0, 2)$ controls the thickness of the tail. Even though their densities cannot be evaluated easily, it is straightforward to sample from $\ellstable$ \cite{Nolan2013}, therefore it is particularly relevant here to apply MESWE instead of MLE. 

To demonstrate the computational advantage of MESWE over the minimum expected Wasserstein estimator \cite[MEWE]{Bernton2019}, we consider observations in $\Rd$ i.i.d. from $\ellstable(\bfI, \bfm_\star)$ where each component of $\bfm_\star$ is 2 and $\alpha = 1.8$, and $\calM = \left\{ \ellstable(\bfI, \bfm)\ :\ \bfm \in \Rd \right\}$. The Wasserstein distance on multivariate data is either computed exactly by solving the linear program in \eqref{eq:def_wasser}, or approximated by solving a regularized version of this problem with Sinkhorn's algorithm \cite{cuturi2013sinkhorn}. The MESWE is approximated using 10 random projections and 10 sets of generated samples. Then, following the approach in \cite{Bernton2019}, we use the gradient-free optimization method Nelder-Mead to minimize the Wasserstein and SW distances. We report on \Cref{fig:results_comparison} the mean squared error between each estimate and $\bfm_\star$, as well as their average computational time for different values of dimension $d$. We see that MESWE provides the same quality of estimation as its Wasserstein-based counterparts while considerably reducing the computational time, especially in higher dimensions. We focus on this model in $\rset^{10}$ and we illustrate the consistency of the MESWE $\hat{\bfm}_{n,m}$, approximated with one random projection and one generated dataset, the same way as for the Gaussian model: see \Cref{fig:results_alphastable_exp_a}. To confirm the convergence of $\hat{\bfm}_{n,m}$ to the MSWE $\hat{\bfm}_n$, we fix $n=100$ observations and we compute the mean squared error between the two approximate estimators (using one random projection and one generated dataset) for different values of $m$ (\Cref{fig:results_alphastable_exp_b}). Note that the MSWE is approximated with the MESWE obtained for a large enough value of $m$: $\hat{\bfm}_n \approx \hat{\bfm}_{n, 10\,000}$.

\begin{figure}[t]
  \centering
  \subfloat[Comparison of MESWE and MEWE]{
      \includegraphics[width=0.48\textwidth]{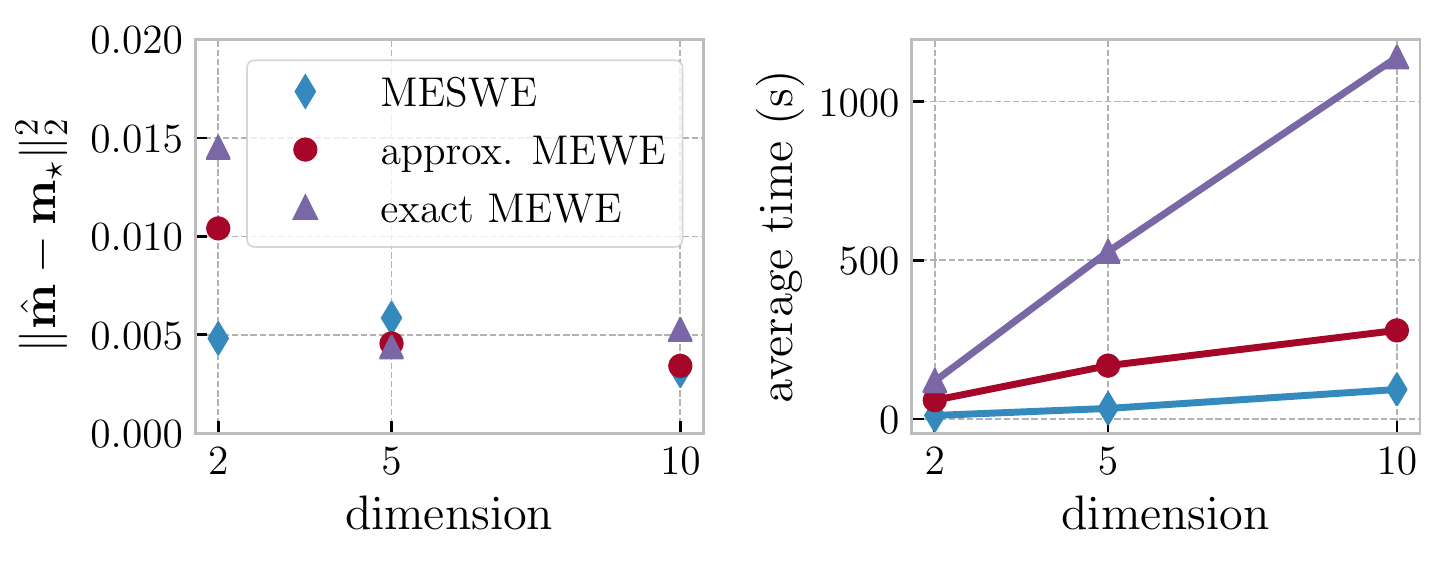}
    \label{fig:results_comparison}
    } %
    \subfloat[MESWE]{
      \includegraphics[width=0.22\textwidth]{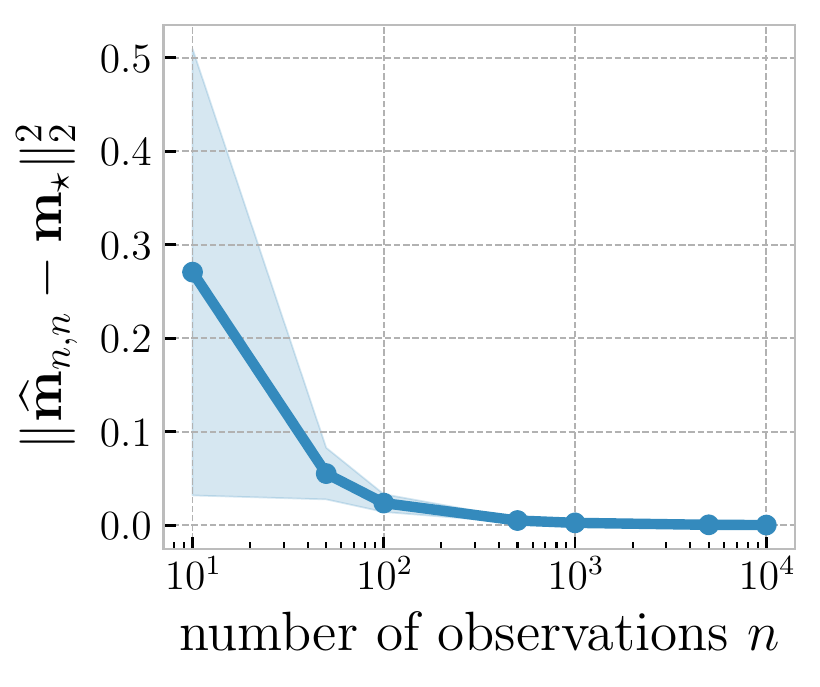}
    \label{fig:results_alphastable_exp_a}
    } %
    \subfloat[MESWE, $n^* = 100$]{
      \includegraphics[width=0.24\textwidth]{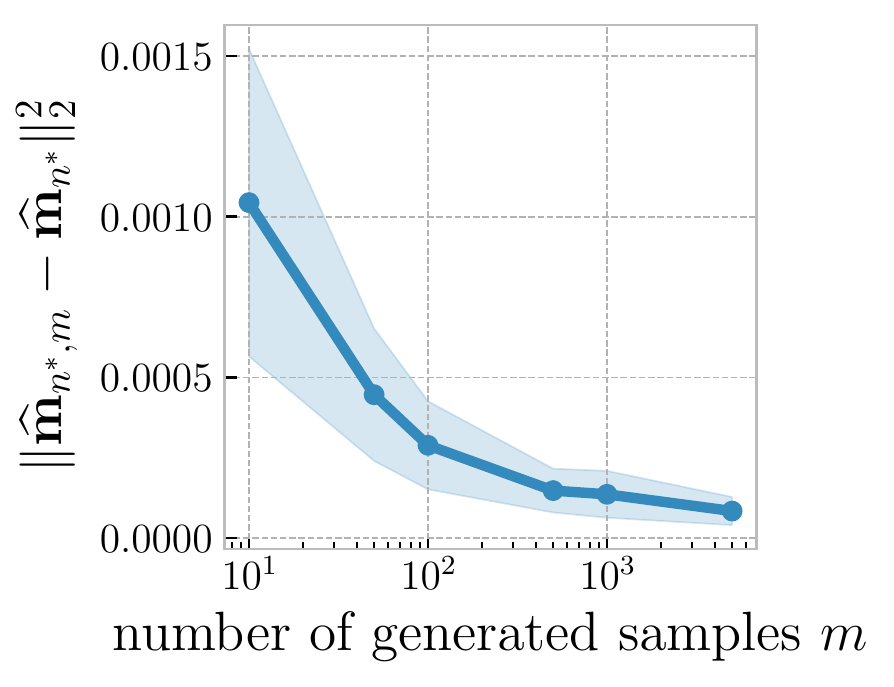}
    \label{fig:results_alphastable_exp_b}
    }
    \caption{Min. SW estimation for the location parameter of multivariate elliptically contoured stable distributions. \Cref{fig:results_comparison} compares the quality of the estimation provided by SW and Wasserstein-based estimators as well as their average computational time, for different values of dimension $d$. \Cref{fig:results_alphastable_exp_a} and \Cref{fig:results_alphastable_exp_b} illustrate, for $d=10$, the consistency of MESWE $\hat{\bfm}_{n,m}$ and its convergence to the MSWE $\hat{\bfm}_n$. Results are averaged over 100 runs, the shaded area represent the standard deviation.}
    \vspace{-5pt}
\end{figure}

\textbf{High-dimensional real data using GANs:} Finally, we run experiments on image generation using the Sliced-Wasserstein Generator (SWG), an alternative GAN formulation based on the minimization of the SW distance \cite{Deshpande2018}. Specifically, the generative modeling approach consists in introducing a random variable $Z$ which takes value in $\msz$ with a fixed distribution, and then transforming $Z$ through a neural network. This defines a parametric function $T_\theta : \msz \rightarrow \msy$ that is able to produce images from a distribution $\mu_\theta$.
\newpage
 \begin{wrapfigure}{r}{0.6\textwidth}
 \vspace{-5pt}
  \centering
  \includegraphics[width=0.4\textwidth]{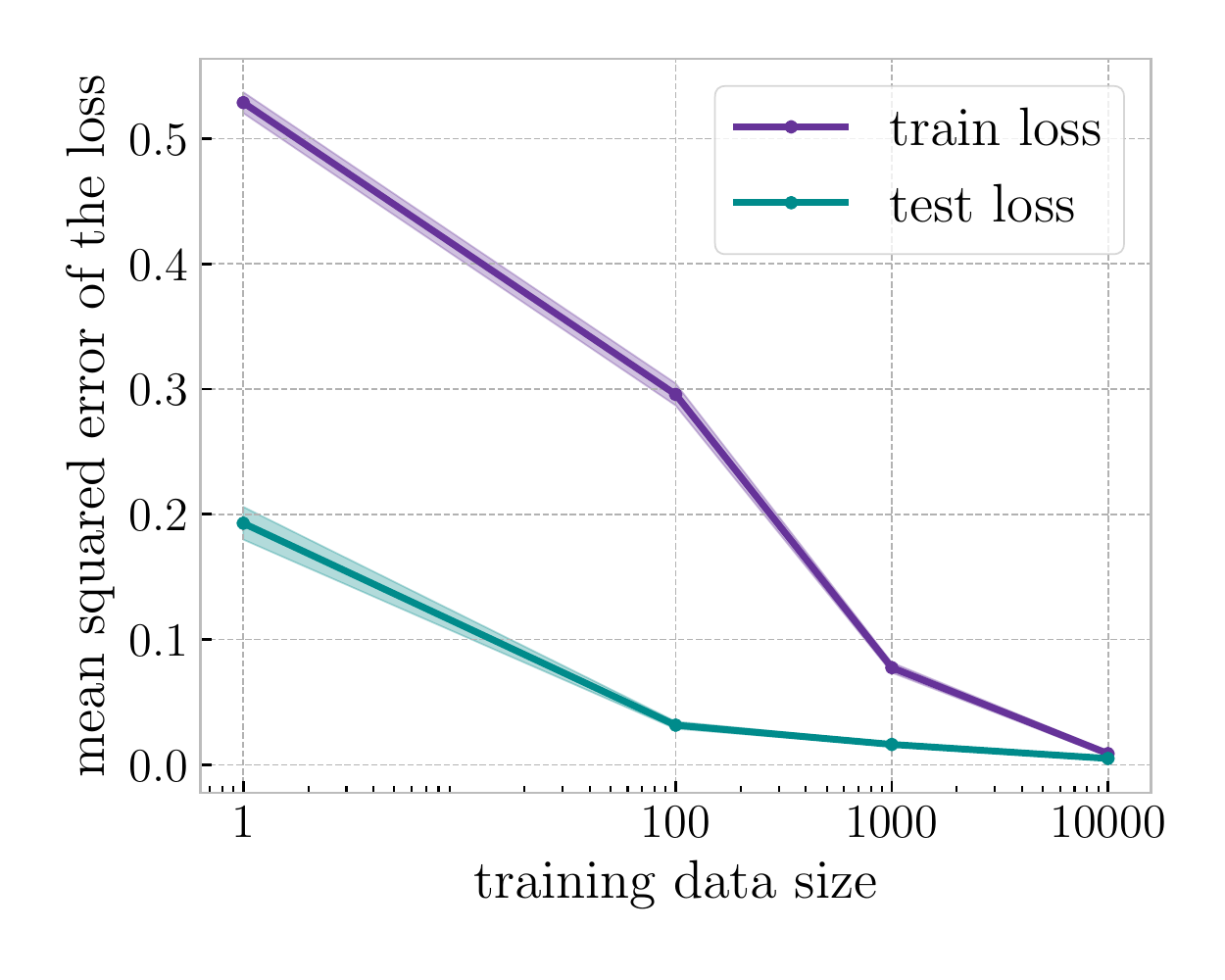}
  \vspace{-10pt}
  \caption{Mean-squared error between the training (test) loss for $(n,m) \in \big\{ (1, 1), (100, 20), (1000, 40), (10\,000, 60)\big\}$ and the training (test) loss for $(n, m) = (60\,000, 200)$ on MNIST using the SW generator. We trained for 20\,000 iterations with the ADAM optimizer \cite{Adam}.}
  \label{fig:results_nn_exp_a}
  \vspace{-8pt}
\end{wrapfigure}
The goal is to optimize the neural network parameters such that the generated images are close to the observed ones. \cite{Deshpande2018} proposes to minimize the SW distance between $\mu_\theta$ and the real data distribution over $\theta$ as the generator objective, and train on MESWE in practice. For our experiments, we design a neural network with the fully-connected configuration given in \cite[Appendix D]{Deshpande2018} and we use the MNIST dataset, made of 60\,000 training images and 10\,000 test images of size $28 \times 28$. Our training objective is MESWE of order 2 approximated with 20 random projections and 20 different generated datasets. We study the consistent behavior of the MESWE by training the neural network on different sizes $n$ of training data and different numbers $m$ of generated samples and by comparing the final training loss and test loss to the ones obtained when learning on the whole training dataset ($n = 60\,000$) and $m = 200$. Results are averaged over 10 runs and shown on \Cref{fig:results_nn_exp_a}, where the shaded areas correspond to the standard deviation over the runs. We observe that our results confirm \Cref{thm:existence_consistency_meswe}. 

We would like to point out that, in all of our experiments, the random projections used in the Monte Carlo average that estimates the integral in \eqref{eq:def_sliced_wasser} were picked uniformly on $\sphereD$ (see \Cref{appendix:computational} in the supplementary document for more details). The sampling on $\sphereD$ directly impacts the quality of the resulting approximation of SW, and might induce variance in practice when learning generative models. On the theoretical side, studying the asymptotic properties of SW-based estimators obtained with a finite number of projections is an interesting question (e.g., their behavior might depend on the sampling method or the number of projections used). We leave this study for future research.

\section{Conclusion}

The Sliced-Wasserstein distance has been an attractive metric choice for learning in generative models, where the densities cannot be computed directly. In this study, we investigated the asymptotic properties of estimators that are obtained by minimizing SW and the expected SW. We showed that (i) convergence in SW implies weak convergence of probability measures in general Wasserstein spaces, (ii) the estimators are consistent, (iii) the estimators converge to a random variable in distribution with a rate of $\sqrt{n}$. We validated our mathematical results on both synthetic data and neural networks. We believe that our techniques can be further extended to the extensions of SW such as \cite{deshpande2019max,paty2019subspace,kolouri2019generalized}.

\subsection*{Acknowledgements}

The authors are grateful to Pierre Jacob for his valuable comments on an earlier version of this manuscript. This work is partly supported by the French National Research Agency (ANR) as a part of the FBIMATRIX project (ANR-16-CE23-0014) and by the industrial chair Machine Learning for Big Data from Télécom ParisTech. Alain Durmus acknowledges support from Polish National Science Center grant: NCN UMO-2018/31/B/ST1/00253.

\bibliography{min_swe_arxiv.bib}
\bibliographystyle{unsrt}

\newpage
\appendix  

\section{Preliminaries}

\subsection{Convergence and lower semi-continuity}

\begin{definition}[Weak convergence] 
  Let $\sequencek{\mu_k}$ be a sequence of probability measures on $\msy$. We say that $\mu_k$ converges weakly to a probability measure $\mu$ on $\msy$, and write $\sequencek{\mu_k} \wc \mu$ (or $\mu_k \wc \mu$), if for any continous and bounded function $f$ : $\msy \rightarrow \rset$, we have
$$ \lim_{k \to \plusinfty}\int f\ \rmd\mu_k = \int f\ \rmd\mu \eqsp.$$
\end{definition} %

\begin{definition}[Epi-convergence]
  Let $\Theta$ be a metric space and $f : \Theta \rightarrow \rset$. Consider a sequence  $\sequencek{f_k}$ of functions from $\Theta$ to $\rset$. We say that the sequence $\sequencek{f_k}$ epi-converges to a function $f : \Theta \rightarrow \rset$, and write $\sequencek{f_k} \ec f$, if for each $\theta \in \Theta$,
  \begin{align*}
    \liminf_{k \rightarrow \infty} f_k(\theta_k) &\geq f(\theta) \; \text{ for every sequence } (\theta_k)_{n \in \nset} \text{ such that } \lim_{k \to \plusinfty} \theta_k =  \theta \eqsp,\\
\text{ and } \quad     \limsup_{k \rightarrow \infty} f_k(\theta_k) &\leq f(\theta) \; \text{ for a sequence } (\theta_k)_{n \in \nset} \text{ such that } \lim_{k \to \plusinfty} \theta_k =  \theta \eqsp.
  \end{align*}
\end{definition}

An equivalent and useful characterization of epi-convergence is given in \cite[Proposition 7.29]{Rockafellar2009}, which we paraphrase in Proposition \ref{prop:729rockafellar} after recalling the definition of lower semi-continuous functions. 

\begin{definition}[Lower semi-continuity]
  Let $\Theta$ be a metric space and $f : \Theta \rightarrow \rset$. We say that $f$ is lower semi-continuous (\lsc) on $\Theta$ if for any $\theta_0 \in \Theta$, 
  \begin{equation*} 
    \liminf_{\theta \rightarrow \theta_0} f(\theta) \geq f(\theta_0)
  \end{equation*}
\end{definition}

\begin{proposition}[Characterization of epi-convergence via minimization, Proposition 7.29 of \cite{Rockafellar2009}] \label{prop:729rockafellar}
    Let $\Theta$ be a metric space and $f : \Theta \rightarrow \rset$ be a \lsc~function.
  The sequence $\sequencek{f_k}$, with $f_k : \Theta \rightarrow \rset$~for any $n \in \nset$, epi-converges to $f$  if and only if
  \begin{itemize}
    \item[(a)] $\liminf_{k \rightarrow \infty} \inf_{\theta \in \msk} f_k(\theta) \geq \inf_{\theta \in \msk} f(\theta)$ \; for every compact set $\msk \subset \Theta$ ; 
    \item[(b)] $\limsup_{k \rightarrow \infty} \inf_{\theta \in \mso} f_k(\theta) \leq \inf_{\theta \in \mso} f(\theta)$ \; for every open set $\mso \subset \Theta$.
  \end{itemize}
\end{proposition}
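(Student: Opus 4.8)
The plan is to prove both directions of the equivalence, relying throughout on two elementary facts about the metric space $\Theta$: first, that a convergent sequence together with its limit is a compact set, and second, that the open balls $B(\theta, r)$ form a neighborhood basis at each point. These two facts are exactly what is needed to translate the pointwise (sequential) definition of epi-convergence into the set-wise infimum conditions (a)--(b), and back again.

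First I would show that $\sequencek{f_k} \ec f$ implies (a) and (b). For (b), fix an open set $\mso$ and a point $\theta \in \mso$. The recovery (limsup) part of epi-convergence supplies a sequence $\theta_k \to \theta$ with $\limsup_k f_k(\theta_k) \leq f(\theta)$; since $\mso$ is open and $\theta_k \to \theta \in \mso$, we have $\theta_k \in \mso$ for $k$ large, whence $\inf_{\mso} f_k \leq f_k(\theta_k)$ eventually and $\limsup_k \inf_{\mso} f_k \leq f(\theta)$. As the left-hand side does not depend on $\theta$, taking the infimum over $\theta \in \mso$ gives (b). For (a), fix a compact set $\msk$ and set $\alpha = \liminf_k \inf_{\msk} f_k$ (assume $\alpha < \plusinfty$, else there is nothing to prove). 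Along a subsequence realizing $\alpha$, choose near-minimizers $\theta_{k_j} \in \msk$ with $f_{k_j}(\theta_{k_j}) \leq \inf_{\msk} f_{k_j} + 1/j$; by compactness a further subsequence converges to some $\bar\theta \in \msk$. Embedding this subsequence into a full sequence converging to $\bar\theta$ and invoking the liminf part of epi-convergence yields $\alpha = \liminf_j f_{k_j}(\theta_{k_j}) \geq f(\bar\theta) \geq \inf_{\msk} f$, which is (a).

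Conversely, assuming (a) and (b), I would recover the two halves of the definition. The recovery sequence is built by a diagonal argument: applying (b) to the shrinking balls $\mso_i = B(\theta, 1/i)$, each of which contains $\theta$ so that $\inf_{\mso_i} f \leq f(\theta)$, produces indices $N_i$ after which $\inf_{\mso_i} f_k \leq f(\theta) + 1/i$; choosing for each $k \in [N_i, N_{i+1})$ a point $\theta_k \in \mso_i$ nearly attaining this infimum gives $\theta_k \to \theta$ with $\limsup_k f_k(\theta_k) \leq f(\theta)$. For the liminf inequality, fix $\theta$ and an arbitrary sequence $\theta_k \to \theta$, and consider the compact sets $\msk_j = \{\theta_k : k \geq j\} \cup \{\theta\}$. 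For $k \geq j$ we have $f_k(\theta_k) \geq \inf_{\msk_j} f_k$, so by (a), $\liminf_k f_k(\theta_k) \geq \liminf_k \inf_{\msk_j} f_k \geq \inf_{\msk_j} f$; letting $j \to \plusinfty$ and showing that $\inf_{\msk_j} f \to f(\theta)$ completes the proof.

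The main obstacle, and the only place where the hypothesis that $f$ is \lsc~is genuinely used, is this last step. Since $\inf_{\msk_j} f = \min\{ f(\theta),\, \inf_{k \geq j} f(\theta_k)\}$ and $\msk_j$ shrinks to $\{\theta\}$, lower semicontinuity gives $\liminf_k f(\theta_k) \geq f(\theta)$, so the tail infima $\inf_{k \geq j} f(\theta_k)$ increase to a limit that is $\geq f(\theta)$, forcing $\inf_{\msk_j} f \to f(\theta)$; without \lsc~the set-infimum conditions (a)--(b) cannot control the pointwise values of $f$ and the equivalence breaks down. A secondary, purely bookkeeping difficulty arises when applying the liminf part of the definition to a subsequence in the direction from epi-convergence to (a); this is handled by the standard device of interleaving the subsequence with the constant value $\bar\theta$ to form a full sequence converging to $\bar\theta$, exploiting that a subsequence's liminf dominates the full sequence's.
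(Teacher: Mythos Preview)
The paper does not supply its own proof of this proposition; it is quoted verbatim from \cite[Proposition~7.29]{Rockafellar2009} as a tool to be used later, so there is no ``paper's proof'' to compare against. Your argument is essentially the standard one found in Rockafellar--Wets and is correct in outline and in detail: the forward direction extracts near-minimizers on a compact set and uses sequential compactness plus the liminf half of the definition, while the recovery sequence for~(b) is immediate from openness; the converse direction builds the recovery sequence by a diagonal extraction over shrinking balls and recovers the liminf inequality by applying~(a) to the compact tails $\msk_j=\{\theta_k:k\geq j\}\cup\{\theta\}$. You correctly isolate the one place where lower semicontinuity of $f$ is needed, namely to force $\inf_{\msk_j} f \to f(\theta)$ as $j\to\infty$; without it the set-wise conditions cannot pin down pointwise values of $f$. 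The only cosmetic remark is that in the step ``$\alpha=\liminf_j f_{k_j}(\theta_{k_j})\geq f(\bar\theta)$'' you have already passed to a further subsequence to obtain $\theta_{k_{j_\ell}}\to\bar\theta$, so strictly speaking the equality $\alpha=\lim_\ell f_{k_{j_\ell}}(\theta_{k_{j_\ell}})$ holds along that sub-subsequence (which is enough, since it inherits the limit $\alpha$); your interleaving trick then gives the inequality with $f(\bar\theta)$ exactly as you describe.
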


 \cite[Theorem 7.31]{Rockafellar2009}, paraphrased below, gives asymptotic properties for the infimum and argmin of epiconvergent functions and will be useful to prove the existence and consistency of our estimators.

 \begin{theorem}[Inf and argmin in epiconvergence, Theorem 7.31 of \cite{Rockafellar2009}] \label{thm:731rockafellar}
     Let $\Theta$ be a metric space, $f : \Theta \rightarrow \rset$ be a \lsc~function and $\sequencek{f_k}$ be a sequence with $f_k : \Theta \rightarrow \rset$~for any $n \in \nset$.
  Suppose $\sequencek{f_k} \ec f$ with $- \infty < \inf_{\theta \in \Theta} f(\theta) < \infty$. 
  \begin{itemize}
    \item[(a)] It holds $\lim_{k \to \infty} \inf_{\theta \in \Theta} f_k(\theta) = \inf_{\theta \in \Theta} f(\theta)$ if and only if  for every $\eta > 0$ there exists a compact set $\msk \subset \Theta$ and $N \in \nset$ such for any $k \geq N$,
      \begin{equation*}
        \inf_{\theta \in \msk} f_k(\theta) \leq \inf_{\theta \in \Theta} f_k(\theta) + \eta \eqsp. 
      \end{equation*}
    \item[(b)] In addition, $\limsup_{k\rightarrow \infty} \argmin_{\theta \in \Theta} f_k(\theta) \subset \argmin_{\theta \in \Theta} f(\theta)$.
  \end{itemize}
\end{theorem}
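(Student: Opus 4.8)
The plan is to derive this statement from the minimization characterization of epi-convergence recorded in \Cref{prop:729rockafellar}, rather than from the sequential definition directly. The single most useful observation is that $\Theta$ is itself open and trivially contains every compact subset of itself, so specializing \Cref{prop:729rockafellar} produces two ``free'' inequalities valid under $\sequencek{f_k} \ec f$: taking the open set $\mso = \Theta$ in part (b) gives $\limsup_k \inf_{\theta \in \Theta} f_k(\theta) \leq \inf_{\theta \in \Theta} f(\theta)$; and taking any compact $\msk \subset \Theta$ in part (a) gives $\liminf_k \inf_{\theta \in \msk} f_k(\theta) \geq \inf_{\theta \in \msk} f(\theta) \geq \inf_{\theta \in \Theta} f(\theta)$. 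Thus the only thing that can fail in the claimed limit is the matching $\liminf$ lower bound for the infimum over all of $\Theta$, and the tightness condition in (a) is precisely what controls it.

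For the sufficiency direction of (a), I would fix $\eta > 0$ and take the compact $\msk$ and index $N$ furnished by the hypothesis, so that $\inf_{\theta \in \Theta} f_k \geq \inf_{\theta \in \msk} f_k - \eta$ for $k \geq N$. Passing to $\liminf_k$ and invoking the compact-set inequality above yields $\liminf_k \inf_{\theta \in \Theta} f_k \geq \inf_{\theta \in \Theta} f - \eta$; letting $\eta \downarrow 0$ and combining with the free $\limsup$ inequality gives the equality. For necessity, I would assume the limit holds, write $m = \inf_{\theta \in \Theta} f$, and use finiteness of $m$ to choose $\theta_0$ with $f(\theta_0) < m + \eta/2$. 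The recovery-sequence half of epi-convergence supplies $\theta_k \to \theta_0$ with $\limsup_k f_k(\theta_k) \leq f(\theta_0)$, and I would set $\msk = \{\theta_0\} \cup \{\theta_k : k \in \nset\}$, which is compact. For large $k$ one then gets $\inf_{\theta \in \msk} f_k \leq f_k(\theta_k) < m + \eta/2 < \inf_{\theta \in \Theta} f_k + \eta$, where the final step uses $\inf_{\theta \in \Theta} f_k \to m$ to absorb the slack.

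For (b), I would unfold the Kuratowski upper limit: if $\theta^\star \in \limsup_k \argmin_{\theta \in \Theta} f_k$, there is a subsequence $(k_j)$ and points $\theta_{k_j} \in \argmin_{\theta \in \Theta} f_{k_j}$ with $\theta_{k_j} \to \theta^\star$. Extending this subsequence to a full sequence converging to $\theta^\star$ and applying the $\liminf$ half of epi-convergence gives $\liminf_j f_{k_j}(\theta_{k_j}) \geq f(\theta^\star)$. Since $f_{k_j}(\theta_{k_j}) = \inf_{\theta \in \Theta} f_{k_j}$ and the free inequality gives $\limsup_j \inf_{\theta \in \Theta} f_{k_j} \leq \inf_{\theta \in \Theta} f$, I can sandwich $f(\theta^\star) \leq \inf_{\theta \in \Theta} f$; as the reverse inequality is automatic, $\theta^\star \in \argmin_{\theta \in \Theta} f$, which is exactly the claimed inclusion.

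The main obstacle I anticipate is bookkeeping rather than conceptual. In (b) the $\liminf$ bound of epi-convergence is stated for full sequences, so I must carefully pass to the subsequence version, filling the missing indices with $\theta^\star$ and using that the $\liminf$ along a subsequence dominates that along the full sequence. In the necessity part of (a) the delicate points are verifying that the constructed $\msk$ is genuinely compact (it is, being a convergent sequence together with its limit) and that the recovery sequence lands inside it. Throughout, the hypothesis $-\infty < \inf_{\theta \in \Theta} f < \infty$ is what guarantees the existence of the near-minimizer $\theta_0$ and keeps every additive $\eta$-estimate meaningful.
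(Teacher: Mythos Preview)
The paper does not prove this statement at all: \Cref{thm:731rockafellar} is recorded in the preliminaries as a citation of \cite[Theorem~7.31]{Rockafellar2009}, with no accompanying proof, and is then invoked as a black box in the proofs of \Cref{thm:existence_consistency_mswe}, \Cref{thm:existence_consistency_meswe}, and \Cref{thm:cvg_meswe_to_mswe}. So there is nothing in the paper to compare your argument against.

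That said, your proposal is a correct and standard derivation of the result from \Cref{prop:729rockafellar}. The sufficiency direction of (a), the necessity direction via the compact set $\{\theta_0\}\cup\{\theta_k\}$, and the argument for (b) via the Kuratowski upper limit all go through as you describe. One small remark on (b): you do not actually need to pad the subsequence out to a full sequence, since the $\liminf$ inequality in the definition of epi-convergence applies to \emph{every} sequence converging to $\theta^\star$, and a subsequence is itself such a sequence (epi-convergence being a topological notion on $\Theta$, not tied to the indexing); but your padding trick is harmless and works too.
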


\section{Preliminary results} 

In this section, we gather technical results regarding lower semi-continuity of (expected) Sliced-Wasserstein distances and measurability of MSWE which will be needed in our proofs. 
\subsection{Lower semi-continuity of Sliced-Wasserstein distances}

\begin{lemma}[Lower semi-continuity of $\mathbf{SW}_p$] \label{lem:sw_semicontinuous}
Let $p \in [1,\infty)$. The Sliced-Wasserstein distance of order $p$ is lower semi-continuous on $\calP_p(\msy) \times \calP_p(\msy)$ endowed with the topology of weak convergence, \ie~for any sequences $\sequencek{\mu_k}$ and $\sequencek{\nu_k}$ of $\calP_p(\msy)$ which converge weakly to $\mu \in \mcp_p(\msy)$ and $\nu\in\mcp_p(\msy)$ respectively, we have:
  \begin{equation*}
    \swassersteinD[p](\mu,\nu) \leq \liminf_{k \to \plusinfty}     \swassersteinD[p] \, (\mu_k,\nu_k) \eqsp. 
  \end{equation*}
\end{lemma}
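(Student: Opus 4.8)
The plan is to peel off the integral over the sphere and reduce the statement, slice by slice, to the lower semi-continuity of the \emph{one-dimensional} Wasserstein distance, then recombine via Fatou's lemma. First I would fix a direction $u \in \sphereD$ and observe that the linear form $\uss : y \mapsto \ps{u}{y}$ is $1$-Lipschitz on $\msy$ (since $\norm{u} = 1$), hence continuous. Because the push-forward by a continuous map preserves weak convergence, the hypotheses $\mu_k \wc \mu$ and $\nu_k \wc \nu$ give $\uss_{\sharp}\mu_k \wc \uss_{\sharp}\mu$ and $\uss_{\sharp}\nu_k \wc \uss_{\sharp}\nu$ in $\mcp(\rset)$ for every $u$. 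The same $1$-Lipschitz property ensures that $\uss_{\sharp}\mu, \uss_{\sharp}\nu \in \calP_p(\rset)$ whenever $\mu,\nu \in \calP_p(\msy)$, so every one-dimensional Wasserstein distance appearing below is finite.

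Next I would invoke the lower semi-continuity of $\wassersteinD[p]$ with respect to weak convergence on $\calP_p(\rset)$: for each fixed $u \in \sphereD$,
\begin{equation*}
  \wassersteinD[p]^p(\uss_{\sharp}\mu, \uss_{\sharp}\nu) \leq \liminf_{k \to \plusinfty} \wassersteinD[p]^p(\uss_{\sharp}\mu_k, \uss_{\sharp}\nu_k) \eqsp.
\end{equation*}
This is classical (e.g.\ \cite[Remark 6.12]{Villani2008}), but if a self-contained argument is preferred I would recover it through the transport formulation: let $\gamma_k \in \Gamma(\uss_{\sharp}\mu_k, \uss_{\sharp}\nu_k)$ be optimal; along a subsequence realizing the $\liminf$, the marginals converge weakly and are therefore tight, so $\sequencek{\gamma_k}$ is tight by Prokhorov's theorem and admits a further weak limit $\gamma$, which necessarily belongs to $\Gamma(\uss_{\sharp}\mu, \uss_{\sharp}\nu)$ by the first step; since $(s,s') \mapsto \abs{s - s'}^p$ is nonnegative and lower semi-continuous, $\int \abs{s-s'}^p \rmd\gamma \leq \liminf_k \int \abs{s-s'}^p \rmd\gamma_k$, and the left-hand side dominates $\wassersteinD[p]^p(\uss_{\sharp}\mu, \uss_{\sharp}\nu)$.

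It then remains to integrate over $\sphereD$ and push the $\liminf$ through the integral. As the integrands are nonnegative and $\unifS$-measurable (measurability being implicit in the well-posedness of \eqref{eq:def_sliced_wasser} and inherited by the pointwise $\liminf$), Fatou's lemma for the measure $\unifS$ yields
\begin{align*}
  \int_{\sphereD} \wassersteinD[p]^p(\uss_{\sharp}\mu, \uss_{\sharp}\nu) \rmd\unifS(u)
  &\leq \int_{\sphereD} \liminf_{k \to \plusinfty} \wassersteinD[p]^p(\uss_{\sharp}\mu_k, \uss_{\sharp}\nu_k) \rmd\unifS(u) \\
  &\leq \liminf_{k \to \plusinfty} \swassersteinD[p]^p(\mu_k, \nu_k) \eqsp,
\end{align*}
where the first inequality integrates the slice-wise bound and the second is Fatou together with the definition \eqref{eq:def_sliced_wasser}. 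The left-hand side equals $\swassersteinD[p]^p(\mu,\nu)$, so $\swassersteinD[p]^p(\mu,\nu) \leq \liminf_k \swassersteinD[p]^p(\mu_k,\nu_k)$; taking $p$-th roots (the map $x \mapsto x^{1/p}$ is continuous and nondecreasing, hence commutes with $\liminf$) gives the claimed inequality.

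The only genuinely delicate point is the slice-wise lower semi-continuity: one must verify that the weak limit $\gamma$ of the couplings carries the correct marginals $\uss_{\sharp}\mu, \uss_{\sharp}\nu$ — which holds because weak convergence of $\gamma_k$ forces weak convergence of its marginals, already identified in the first step — and that the lower semi-continuity of $\gamma \mapsto \int \abs{s-s'}^p \rmd\gamma$ survives the unboundedness of the cost, which is precisely where nonnegativity and lower semi-continuity of $\abs{s-s'}^p$ enter. Everything else is a routine application of Fatou's lemma and the monotonicity of $x \mapsto x^{1/p}$.
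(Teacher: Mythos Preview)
Your proposal is correct and follows essentially the same route as the paper: push weak convergence through the continuous projection $\uss$, establish slice-wise lower semi-continuity of $\wassersteinD[p]^p$ via tightness of optimal couplings and Prokhorov, then apply Fatou over $\sphereD$. If anything, your treatment of the unbounded cost (invoking lower semi-continuity of $\gamma \mapsto \int |s-s'|^p\,\rmd\gamma$ rather than asserting equality of the limit) is slightly more careful than the paper's write-up.
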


\begin{proof}
  First, by the continuous mapping theorem, if a sequence $\sequencek{\mu_k}$ of elements of $\calP_p(\msy)$ converges weakly to $\mu$, then for any continuous function $f : \msy \to \rset$, $\sequencek{f_{\sharp} \mu_k}$ converges weakly to $f_{\sharp} \mu$. In particular, for any $\us \in \sphere^{d-1}$, $\usss\mu_k \wc \usss\mu$ since $\uss$ is a bounded linear form thus continuous.
    
  Let $p \in [1,\infty)$. We introduce the two sequences $\sequencek{\mu_k}$ and $\sequencek{\nu_k}$ of elements of $\calP_p(\msy)$ such that $\mu_k \wc \mu$ and $\nu_k \wc \nu$. We show that for any $\us \in \sphereD$,
  \begin{equation}
    \label{eq:lsc_proof_0}
    \wassersteinD[p]^p(\usss \mu, \usss \nu) \leq \liminf_{k \to \plusinfty}     \wassersteinD[p]^p(\usss \mu_k , \usss \nu_k) \eqsp.
  \end{equation}

  Indeed, if \eqref{eq:lsc_proof_0} holds, then the proof is completed using the definition of the Sliced-Wasserstein distance  \eqref{eq:def_sliced_wasser} and Fatou's Lemma. Let $\us \in \sphereD$. For any $k \in \nset$, let $\gamma_k\in \mcp(\rset \times \rset)$ be an  optimal transference plan between $\usss \mu_k$ and $\usss \nu_k$ for the Wasserstein distance of order $p$ which exists by \cite[Theorem 4.1]{Villani2008}  \ie~
  \begin{equation*}
    \wassersteinD[p]^p(\usss \mu_k , \usss \nu_k)  = \int_{\rset \times \rset} \abs{a-b} \rmd \gamma_k(a,b) \eqsp.
  \end{equation*}
 Note that by \cite[Lemma 4.4]{Villani2008} and Prokhorov's Theorem, $\sequencek{\gamma_k}$ is sequentially compact in $\mcp(\rset\times \rset)$ for the topology associated with the weak convergence. Now, consider a subsequence $\sequencek{\gamma_{\phi_1(k)}}$ where $\phi_1 : \nset \to \nset$ is increasing such that
  \begin{align}
    \lim_{k \to \plusinfty} \int_{\rset \times \rset} \abs{a-b}^p \rmd \gamma_{\phi_1(k)}(a,b) &= \lim_{k \to \plusinfty}   \wassersteinD[p]^p(\usss \mu_{\phi_1(k)}, \usss \nu_{\phi_1(k)}) \nonumber \\
    &= \liminf_{k \to \plusinfty}  \wassersteinD[p]^p(\usss \mu_k, \usss\nu_k) \eqsp. \label{eq:lsc_proof_1}
  \end{align}
  Since  $\sequencek{\gamma_k}$ is sequentially compact,  $\sequencek{\gamma_{\phi_1(k)}}$ is sequentially compact as well, and therefore there exists an increasing function $\phi_2 : \nset \to \nset$ and a probability distribution $\gamma \in \mcp(\rset \times \rset)$ such that $\sequencek{\gamma_{\phi_2(\phi_1(k))}}$ converges weakly to $\gamma$. Then, we obtain by \eqref{eq:lsc_proof_1}, 
  \begin{equation*}
\int_{\rset \times \rset} \norm{a-b}^p \rmd \gamma(a,b) =  \lim_{k \to \plusinfty} \int_{\rset \times \rset} \norm{a-b}^p \rmd \gamma_{\phi_2(\phi_1(k))}(a,b) =  \liminf_{k \to \plusinfty}  \wassersteinD[p]^p(\usss \mu_k, \usss \nu_k) \eqsp. 
\end{equation*}
If we show that $\gamma \in \Gamma(\usss \mu, \usss \nu)$, it will conclude the proof of \eqref{eq:lsc_proof_0} by definition of the Wasserstein distance \eqref{eq:def_wasser}. But for any continuous and bounded function $f : \rset \to \rset$, since for any $k \in \nset$, $\gamma_k \in \Gamma(\mu_k,\nu_k)$, and $\sequencek{\mu_k},\sequencek{\nu_k}$ converge weakly to $\mu$ and $\nu$ respectively, we have:
\begin{multline*}
  \int_{\rset \times \rset} f(a) \rmd \gamma(a,b) = \lim_{k \to \plusinfty} \int_{\rset \times \rset} f(a) \rmd \gamma_{\phi_2(\phi_1(k))}(a,b)  =
  \lim_{k \to \plusinfty} \int_{\rset} f(a) \rmd \usss \mu_{\phi_2(\phi_1(k))}(a) \\  = \int_{\rset} f(a) \rmd \usss \mu(a) \eqsp,
\end{multline*}
and similarly
\begin{equation*}
  \int_{\rset \times \rset} f(b) \rmd \gamma(a,b) = \int_{\rset} f(b) \rmd \usss \nu(a) \eqsp. 
\end{equation*}

This shows that $\gamma \in \Gamma(\usss \mu, \usss \nu)$ and therefore, \eqref{eq:lsc_proof_0} is true. We conclude by applying Fatou's Lemma. 

\end{proof} 

By a direct application of \Cref{lem:sw_semicontinuous}, we have the following result. 
\begin{corollary} \label{coro:sw_semicontinuous_2}
  Assume \Cref{assumption:continuousmap}. Then, $(\mu, \theta) \mapsto \swassersteinD[p](\mu, \mu_\theta)$ is lower semi-continuous in $\mathcal{P}_p(\msy) \times \Theta$.
\end{corollary}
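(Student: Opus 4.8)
The plan is to realize the map in the statement as the composition of the jointly lower semi-continuous map $\swassersteinD[p]$ from \Cref{lem:sw_semicontinuous} with a continuous map built from \Cref{assumption:continuousmap}. Since $\mcp_p(\msy)$, equipped with the topology of weak convergence (metrized by $\dist_{\mcp}$), and $(\Theta, \rho_\Theta)$ are both metric spaces, their product is a metric space as well, and on a metric space lower semi-continuity is equivalent to its sequential characterization. It therefore suffices to show that for every sequence $(\mu_k, \theta_k)_{k \in \nset}$ in $\mcp_p(\msy) \times \Theta$ with $\mu_k \wc \mu$ and $\lim_{k \to \plusinfty} \rho_\Theta(\theta_k, \theta) = 0$, one has $\swassersteinD[p](\mu, \mu_\theta) \leq \liminf_{k \to \plusinfty} \swassersteinD[p](\mu_k, \mu_{\theta_k})$.

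First I would invoke \Cref{assumption:continuousmap}: since $\rho_\Theta(\theta_k, \theta) \to 0$, the continuity of $\theta \mapsto \mu_\theta$ yields $\sequencek{\mu_{\theta_k}} \wc \mu_\theta$. Combined with the hypothesis $\sequencek{\mu_k} \wc \mu$, this places the two sequences $\sequencek{\mu_k}$ and $\sequencek{\mu_{\theta_k}}$ precisely in the setting of \Cref{lem:sw_semicontinuous}, with respective weak limits $\mu$ and $\mu_\theta$. Applying that lemma with $\nu_k = \mu_{\theta_k}$ and $\nu = \mu_\theta$ delivers the displayed inequality directly, which establishes the sequential lower semi-continuity of $(\mu, \theta) \mapsto \swassersteinD[p](\mu, \mu_\theta)$ and hence the corollary.

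The argument is essentially a composition of lower semi-continuity with continuity, and I do not anticipate a substantive obstacle; the only points that require care are bookkeeping rather than mathematical depth. One must confirm that the topology intended on $\mcp_p(\msy)$ in the corollary is indeed that of weak convergence, so that both \Cref{assumption:continuousmap} and \Cref{lem:sw_semicontinuous} apply verbatim, and one must ensure that the measures $\mu_{\theta_k}$ and $\mu_\theta$ genuinely lie in $\mcp_p(\msy)$ so that $\swassersteinD[p]$ is finite and \Cref{lem:sw_semicontinuous} is applicable. The latter is implicit in working with the minimum Sliced-Wasserstein estimator of order $p$, for which $\mu_\theta \in \mcp_p(\msy)$ is assumed for every $\theta \in \Theta$.
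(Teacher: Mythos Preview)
Your proposal is correct and follows precisely the approach the paper intends: the paper states the corollary as ``a direct application of \Cref{lem:sw_semicontinuous},'' and what you have written is exactly that direct application spelled out---use \Cref{assumption:continuousmap} to convert $\theta_k \to \theta$ into $\mu_{\theta_k} \wc \mu_\theta$, then feed the two weakly convergent sequences into \Cref{lem:sw_semicontinuous}.
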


\begin{lemma}[Lower semi-continuity of $\Esp \mathbf{SW}_p$] \label{lemma:lsc_Esw}
  Let $p \in [1, \infty)$ and $m \in \bbN^*$. Denote for any $\mu \in \mcp_p(\msy)$, $\hat{\mu}_m = (1/m) \sum_{i=1}^m \updelta_{Z_i}$, where $Z_{1:m}$ are \iid~samples from $\mu$. Then, the map $(\nu, \mu) \mapsto \expe{ \swassersteinD[p](\nu, \hat{\mu}_m)}$ is lower semi-continuous on $\calP_p(\msy) \times \calP_p(\msy)$ endowed with the topology of weak convergence.
\end{lemma}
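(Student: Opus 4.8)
The plan is to reduce the statement to the lower semi-continuity of $\swassersteinD[p]$ on $\calP_p(\msy) \times \calP_p(\msy)$ already proved in \Cref{lem:sw_semicontinuous}, combined with Fatou's lemma; the only genuine work is to arrange that the random empirical measures built from $\mu_k$ converge weakly \emph{almost surely} to the one built from $\mu$. Concretely, let $\sequencek{\nu_k}$ and $\sequencek{\mu_k}$ be sequences in $\calP_p(\msy)$ with $\nu_k \wc \nu$ and $\mu_k \wc \mu$, and for each $k$ write $\hat{\mu}_{m,k} = (1/m)\sum_{i=1}^m \updelta_{Z_i^{(k)}}$, where $Z_{1:m}^{(k)}$ are \iid~from $\mu_k$. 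The goal is to establish $\expe{\swassersteinD[p](\nu,\hat{\mu}_m)} \leq \liminf_{k\to\plusinfty}\expe{\swassersteinD[p](\nu_k,\hat{\mu}_{m,k})}$.

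First I would pass to a convenient coupling. Since $\mu_k \wc \mu$ and $\msy$ is Polish, the product measures satisfy $\mu_k^{\otimes m} \wc \mu^{\otimes m}$ on the (Polish) space $\msy^m$. By the Skorokhod representation theorem there is a probability space carrying random vectors $(\tilde{Z}_1^{(k)},\dots,\tilde{Z}_m^{(k)}) \sim \mu_k^{\otimes m}$ and $(\tilde{Z}_1,\dots,\tilde{Z}_m) \sim \mu^{\otimes m}$ such that $(\tilde{Z}_i^{(k)})_{i=1}^m \to (\tilde{Z}_i)_{i=1}^m$ almost surely as $k \to \plusinfty$. Setting $\tilde{\hat\mu}_{m,k} = (1/m)\sum_{i=1}^m \updelta_{\tilde{Z}_i^{(k)}}$ and $\tilde{\hat\mu}_m = (1/m)\sum_{i=1}^m \updelta_{\tilde{Z}_i}$, the almost-sure convergence of the $m$ atoms immediately yields $\tilde{\hat\mu}_{m,k} \wc \tilde{\hat\mu}_m$ almost surely, by integrating any bounded continuous test function against the finitely many Diracs. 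These empirical measures are finitely supported, hence belong to $\calP_p(\msy)$, so the hypotheses of \Cref{lem:sw_semicontinuous} are satisfied.

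Next I would apply \Cref{lem:sw_semicontinuous} pathwise: on the almost-sure event where $\tilde{\hat\mu}_{m,k} \wc \tilde{\hat\mu}_m$, together with the deterministic weak convergence $\nu_k \wc \nu$, we obtain $\swassersteinD[p](\nu,\tilde{\hat\mu}_m) \leq \liminf_{k\to\plusinfty} \swassersteinD[p](\nu_k,\tilde{\hat\mu}_{m,k})$. Since $\swassersteinD[p] \geq 0$, Fatou's lemma gives $\Esp[\swassersteinD[p](\nu,\tilde{\hat\mu}_m)] \leq \Esp[\liminf_k \swassersteinD[p](\nu_k,\tilde{\hat\mu}_{m,k})] \leq \liminf_k \Esp[\swassersteinD[p](\nu_k,\tilde{\hat\mu}_{m,k})]$. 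Finally, each expectation depends only on the law of the underlying samples, which the Skorokhod coupling preserves ($\tilde{\hat\mu}_{m,k}$ has the same law as $\hat{\mu}_{m,k}$, and $\tilde{\hat\mu}_m$ the same law as $\hat{\mu}_m$), so this chain of inequalities is exactly $\expe{\swassersteinD[p](\nu,\hat{\mu}_m)} \leq \liminf_k \expe{\swassersteinD[p](\nu_k,\hat{\mu}_{m,k})}$, which is the desired lower semi-continuity.

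The main obstacle is precisely this coupling step: \Cref{lem:sw_semicontinuous} concerns \emph{deterministic} weakly convergent sequences, whereas here one argument is random and only its law converges. The Skorokhod representation theorem is what upgrades the weak convergence of the laws $\mu_k$ to almost-sure convergence of the sampled points, and hence to almost-sure weak convergence of the empirical measures, which is what licenses the pathwise application of \Cref{lem:sw_semicontinuous}. A minor point to check in passing is the joint measurability of the map sending the samples to $\swassersteinD[p](\nu,\hat{\mu}_m)$, so that the expectations and Fatou's lemma are well-defined; this follows from the measurability of $\swassersteinD[p]$ in its arguments and is routine.
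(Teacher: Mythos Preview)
Your proposal is correct and follows essentially the same approach as the paper: both invoke Skorokhod's representation theorem to upgrade the weak convergence $\mu_k \wc \mu$ to almost-sure convergence of the sampled $m$-tuples, deduce almost-sure weak convergence of the empirical measures, apply \Cref{lem:sw_semicontinuous} pathwise, and conclude via Fatou's lemma. The only cosmetic differences are that the paper verifies the weak convergence of the empirical measures through the L\'evy--Prokhorov metric rather than by testing against bounded continuous functions, and does not spell out the product-measure step $\mu_k^{\otimes m} \wc \mu^{\otimes m}$ as explicitly as you do.
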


\begin{proof}
  We consider two sequences $(\mu_k)_{k \in \bbN}$ and $(\nu_k)_{k \in \bbN}$ of probability measures in $\msy$, such that $\sequencek{\mu_k} \wc \mu$ and $\sequencek{\nu_k} \wc \nu$, and we fix $m \in \bbN^*$. 

  By Skorokhod's representation theorem, there exists a probability space $(\tilde{\Omega},\tilde{\mcf},\tilde{\PP})$, a sequence of random variables $(\tilde{X}_k^{1},\ldots,\tilde{X}_k^{m})_{k \in \nset}$ and a random variable $(\tilde{X}^1,\ldots,\tilde{X}^m)$ defined on $\tilde{\Omega}$ such that for any $k \in \nset$ and $i \in \{1,\ldots,m\}$, $\tilde{X}_k^i$ has distribution $\mu_k$, $\tilde{X}^i$ has distribution $\mu$ and $(\tilde{X}_k^{1},\ldots,\tilde{X}_k^{m})_{k \in \nsets}$ converges  to $(\tilde{X}^1,\ldots,\tilde{X}^m)$, $\tilde{\PP}$-almost surely.
We then show that the sequence of (random) empirical distributions $(\hmu_{k,m})_{k \in \nset}$ defined by  $\hmu_{k,m} = (1/m) \sum_{i=1}^m \updelta_{\tilde{X}^{i}_k}$, weakly converges to $\hmu_{m} = (1/m) \sum_{i=1}^m \updelta_{\tilde{X}^i}$, $\tilde{\PP}$-almost surely. Note that it is sufficient to show that for any deterministic sequence $(x_k^{1},\ldots,x_k^{m})_{k \in \nsets}$ which converges  to $(x^1,\ldots,x^m)$, \ie~$\lim_{k \to \plusinfty} \max_{i\in \{1,\ldots,m\}} \rho(x_k^i,x^i) = 0$, then 
the sequence of empirical distributions $(\hnu_{k,m})_{k \in \nset}$ defined by  $\hnu_{k,m} = (1/m) \sum_{i=1}^m \updelta_{x^i_k}$, weakly converges to $\hnu_{m} = (1/m) \sum_{i=1}^m \updelta_{x^i}$. Note that since the Lévy-Prokhorov metric $\dist_{\mathcal{P}}$ metrizes the weak convergence by \cite[Theorem 6.8]{Billingsley1999}, we only need to show that $\lim_{k \to \plusinfty} \dist_{\mathcal{P}}(\hnu_{k,m},\hnu_m) = 0$. More precisely, since for any probability measure $\zeta_1$ and $\zeta_2$,
\begin{equation*}
  \dist_{\mcp}(\zeta_1,\zeta_2) = \inf \defEns{ \epsilon >0 \, : \, \text{ for any $\msa \in \mcy$, } \zeta_1(\msa) \leq \zeta_2(\msa^{\epsilon}) + \epsilon \text{ and }  \zeta_2(\msa) \leq \zeta_1(\msa^{\epsilon}) + \epsilon} \eqsp, 
\end{equation*}
where $\mcy$ is the Borel $\sigma$-field of $(\msy,\rho)$ and for any $\msa \in \mcy$, $\msa^{\epsilon} = \{x \in \msy \, : \, \rho(x,y) < \epsilon \text{ for any } y \in \msa\}$, we get
\begin{equation*}
\dist_{\mcp}(\hnu_{k,m},\hnu_{m})  \leq 2 \max_{i\in \{1,\ldots,m\}} \rho(x_k^i,x^i) \eqsp,
\end{equation*}
and therefore $\lim_{k \to \plusinfty} \dist_{\mcp}(\hnu_{k,m},\hnu_{m}) = 0$, so that, $\sequencek{\hnu_{k,m}}$ weakly converges to $\hnu_m$.

Finally, we have that $\hmu_{k,m} = (1/m) \sum_{i=1}^m \updelta_{\tilde{X}^{i}_k}$, weakly converges to $\hmu_{m} = (1/m) \sum_{i=1}^m \updelta_{\tilde{X}^i}$, $\tilde{\PP}$-almost surely and we  obtain the final result using the lower semi-continuity of the Sliced-Wasserstein distance derived in \Cref{lem:sw_semicontinuous} and Fatou's lemma which give
  \begin{equation*}
    \tilde{\PE}\parentheseDeux{ \swassersteinD[p](\nu, \hat{\mu}_m) } \leq     \tilde{\PE}\parentheseDeux{ \liminf_{i \rightarrow \infty} \swassersteinD[p](\nu_i, \hat{\mu}_{m,i}) } \leq \liminf_{i \rightarrow \infty}     \tilde{\PE}\parentheseDeux{ \{ \swassersteinD[p](\nu_i, \hat{\mu}_{m,i}) } \eqsp,
  \end{equation*}
  where $\tilde{\PE}$ is the expectation corresponding to $\tilde{\PP}$.
\end{proof}

The following corollary is a direct consequence of Lemma~\ref{lemma:lsc_Esw}.

\begin{corollary} \label{coro:esw_semicontinuous_2}
  Assume \cref{assumption:continuousmap}. Then, $(\nu, \theta) \mapsto \expeYLigne{ \swassersteinD[p](\nu, \hat{\mu}_{\theta,m}) }$ is lower semi-continuous on $\mcp(\msy) \times \Theta$.
\end{corollary}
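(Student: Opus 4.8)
The plan is to recognize this corollary as nothing more than the composition of the lower semi-continuous map supplied by \Cref{lemma:lsc_Esw} with the continuous reparametrization $\theta \mapsto \mu_\theta$ guaranteed by \Cref{assumption:continuousmap}. Write $g(\nu,\mu) = \expe{\swassersteinD[p](\nu,\hat{\mu}_m)}$ for the map that \Cref{lemma:lsc_Esw} shows to be lower semi-continuous on $\calP_p(\msy)\times\calP_p(\msy)$, where $\hat{\mu}_m$ denotes the empirical measure of $m$ i.i.d.\ samples drawn from $\mu$. Since the generated samples $Z_{1:m}\sim\mu_\theta$ that define $\hat{\mu}_{\theta,m}$ are independent of the observations $Y_{1:n}$ and $\nu$ is a deterministic argument, the conditional expectation coincides with the unconditional one, so that $\expeYLigne{\swassersteinD[p](\nu,\hat{\mu}_{\theta,m})} = g(\nu,\mu_\theta)$. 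The target map is therefore exactly $(\nu,\theta)\mapsto g(\nu,\mu_\theta)$, the precomposition of $g$ with the map $(\nu,\theta)\mapsto(\nu,\mu_\theta)$.

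First I would fix an arbitrary point $(\nu,\theta)\in\mcp(\msy)\times\Theta$ and take any sequence $(\nu_k,\theta_k)_k$ converging to it, i.e.\ $\nu_k \wc \nu$ and $\lim_{k\to\plusinfty}\rho_\Theta(\theta_k,\theta)=0$. Applying \Cref{assumption:continuousmap} to $(\theta_k)_k$ yields $\mu_{\theta_k}\wc\mu_\theta$, so the pairs $(\nu_k,\mu_{\theta_k})$ converge to $(\nu,\mu_\theta)$ in the product topology of weak convergence. Invoking the lower semi-continuity of $g$ along this sequence then gives $\liminf_{k\to\plusinfty} g(\nu_k,\mu_{\theta_k}) \geq g(\nu,\mu_\theta)$, which by the identification above reads $\liminf_{k\to\plusinfty}\expeYLigne{\swassersteinD[p](\nu_k,\hat{\mu}_{\theta_k,m})}\geq\expeYLigne{\swassersteinD[p](\nu,\hat{\mu}_{\theta,m})}$. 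As the base point and the converging sequence were arbitrary, this is precisely lower semi-continuity of the target map on $\mcp(\msy)\times\Theta$.

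There is essentially no hard step: all the analytic content (Skorokhod coupling of the empirical measures plus Fatou's lemma) is already packaged in \Cref{lemma:lsc_Esw}, and the corollary merely threads it through the continuous change of variables $\theta\mapsto\mu_\theta$, using the elementary fact that precomposing a lower semi-continuous function with a continuous map preserves lower semi-continuity. The only point deserving a moment of care is the identification $\expeYLigne{\,\cdot\,}=g(\nu,\mu_\theta)$, namely that conditioning on $Y_{1:n}$ is harmless because the generated sample is independent of the data; once that is observed, the proof is immediate, exactly paralleling the passage from \Cref{lem:sw_semicontinuous} to \Cref{coro:sw_semicontinuous_2}.
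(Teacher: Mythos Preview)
Your proposal is correct and matches the paper's approach: the paper simply states that the corollary is a direct consequence of \Cref{lemma:lsc_Esw}, and you have spelled out precisely the argument this entails, namely composing the l.s.c.\ map from the lemma with the continuous reparametrization $\theta\mapsto\mu_\theta$ from \Cref{assumption:continuousmap}. Your additional remark that the conditioning on $Y_{1:n}$ is innocuous (by independence of the generated samples) is a useful clarification that the paper leaves implicit.
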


\subsection{Measurability of the MSWE and MESWE}\label{subsec:measurability}

The measurability of the MSWE and MESWE follows from the application of \cite[Corollary 1]{brown1973}, also used in \cite{bassetti2006} and \cite{Bernton2019}, and which we recall in \Cref{thm:corollary1}.

\begin{theorem}[Corollary 1 in \cite{brown1973}] \label{thm:corollary1} Let $\msu,\msv$ be Polish spaces and $f$ be a real-valued Borel measurable function defined on a Borel subset $\msd$ of $\msu \times \msv$. We denote by $\projD$ the set defined as
  \begin{equation*}
    \projD = \{ u\ :\ \text{there exists } v \in \msv,\ (u,v) \in \msd \} \eqsp.
  \end{equation*}
  Suppose that for each $u \in \projD$, the section $\msd_u = \{ v \in V, (u,v) \in \msd \}$ is $\sigma$-compact and $f(u, \cdot)$ is lower semi-continuous with respect to the relative topology on $\msd_u$. Then, 
\begin{enumerate}
    \item The sets $\projD$ and $\msi = \{ u \in \projD,\ \text{for some } v \in \msd_u,\ f(u,v) = \inf f_u \}$ are Borel
    \item For each $\epsilon > 0$, there is a Borel measurable function $\phi_\epsilon$ satisfying, for $u \in \projD$, 
    \begin{align*}
        f(u, \phi_\epsilon(u)) &= \inf_{\msd_u} f_u, &\text{if }\; u \in \msi, \;\;\;& \\
        &\leq \epsilon + \inf_{\msd_u} f_u, &\text{if }\; u \notin \msi, \;\;\;&\text{and }\;\; \inf_{\msd_u} f_u \neq -\infty \\
        &\leq - \epsilon^{-1}, &\text{if }\; u \notin \msi, \;\;\;&\text{and }\;\; \inf_{\msd_u} f_u = -\infty \eqsp.
    \end{align*}
\end{enumerate}
\end{theorem}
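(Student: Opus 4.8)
The statement is a measurable-selection theorem, and the plan is to reduce it to two classical facts from descriptive set theory concerning Borel sets with $\sigma$-compact sections: the Arsenin--Kunugui theorem, which asserts that the projection of such a set is Borel, together with its uniformization counterpart, which produces a Borel section. The lower semi-continuity hypothesis on $f(u,\cdot)$ will enter in exactly one role, namely to guarantee that the sections of the sublevel sets we project remain $\sigma$-compact (being either open or closed inside the already $\sigma$-compact section $\msd_u$), so that these two tools are applicable; everything else is bookkeeping.

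For item (1), I would first show that the value function $m(u) = \inf_{v \in \msd_u} f(u,v)$ is Borel measurable. For every $a \in \rset$, the set $\{(u,v) \in \msd : f(u,v) < a\}$ is Borel, and by lower semi-continuity each of its $u$-sections is open in $\msd_u$, hence $\sigma$-compact (an open subset of a $\sigma$-compact metric space is $F_\sigma$, thus $\sigma$-compact). The Arsenin--Kunugui theorem then gives that its projection $\{u : m(u) < a\}$ is Borel, whence $m$ is Borel; the same argument applied to $\msd$ itself shows $\projD$ is Borel. For the set $\msi$, note that $u \in \msi$ precisely when the Borel set $\{(u,v) \in \msd : f(u,v) \le m(u)\}$ has a nonempty $u$-section, and that by lower semi-continuity these sections are closed in $\msd_u$ and therefore $\sigma$-compact; a further application of Arsenin--Kunugui yields that $\msi$ is Borel.

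For item (2), I would partition $\projD$ into the three disjoint Borel pieces $\msi$, $\{u \notin \msi : m(u) > -\infty\}$, and $\{u : m(u) = -\infty\}$, and build $\phi_\epsilon$ separately on each by uniformizing an appropriate Borel set with nonempty $\sigma$-compact sections. On $\msi$ this is $\{(u,v)\in\msd : f(u,v) = m(u)\}$, producing an exact minimizer; on the second piece it is $\{(u,v)\in\msd : f(u,v) \le m(u) + \epsilon\}$, producing an $\epsilon$-minimizer; on the third it is $\{(u,v)\in\msd : f(u,v) \le -\epsilon^{-1}\}$. In each case the sections are $\sigma$-compact and nonempty, so the Borel uniformization theorem furnishes a Borel section, and pasting the three selections over the disjoint Borel pieces gives the desired $\phi_\epsilon$.

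The main obstacle is the descriptive-set-theoretic machinery itself: the Arsenin--Kunugui projection and uniformization theorem for $\sigma$-compact sections is the nontrivial input, and if one is not allowed to invoke it as a black box, the bulk of the work lies in proving it, typically through a transfinite derivative analysis that reduces the $\sigma$-compact-section case to the more tractable compact-section case (Novikov--Kunugui), where Borelness of the projection and Borel uniformization are available. By contrast, the verification that lower semi-continuity keeps the relevant sections $\sigma$-compact, and the assembly of the three-piece selection, are routine.
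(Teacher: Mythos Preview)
The paper does not prove this statement at all: it is stated as a quotation of \cite[Corollary 1]{brown1973} and used as a black box to derive the measurability of the MSWE and MESWE (Theorems~\ref{thm:measurability} and~\ref{thm:measurability_meswe}). There is therefore no ``paper's own proof'' to compare your proposal against.

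That said, your sketch is a reasonable route to such a measurable-selection result and is essentially the standard descriptive-set-theoretic argument (reduce everything to Arsenin--Kunugui for Borel sets with $\sigma$-compact sections, then uniformize). One small slip worth flagging: you write that for a lower semi-continuous $f(u,\cdot)$ the section $\{v \in \msd_u : f(u,v) < a\}$ is \emph{open} in $\msd_u$. Lower semi-continuity gives that $\{v : f(u,v) \le a\}$ is closed, not that $\{v : f(u,v) < a\}$ is open. The fix is immediate and does not affect your conclusion: write $\{v : f(u,v) < a\} = \bigcup_{n\ge 1}\{v : f(u,v) \le a - 1/n\}$, a countable union of closed subsets of the $\sigma$-compact set $\msd_u$, hence $\sigma$-compact. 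With that correction the rest of your outline stands.
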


\begin{theorem}[Measurability of the MSWE] \label{thm:measurability} Assume \cref{assumption:continuousmap}. For any $n \geq 1$ and $\epsilon > 0$, there exists a Borel measurable function $\hat{\theta}_{n,\epsilon} : \Omega \rightarrow \Theta$ that satisfies: for any $\omega \in \Omega$,
\begin{equation*}
    \hat{\theta}_{n,\epsilon}(\omega) \in \left\{
    \begin{array}{ll}
        \argmin_{\theta \in \Theta} \;\;\; \swassersteinD[p](\hat{\mu}_n(\omega), \mu_\theta), & \;\; \text{if this set is non-empty,} \\
        \{ \theta \in \Theta\ :\ \swassersteinD[p](\hat{\mu}_n(\omega), \mu_\theta) \leq \epsilon_\star + \epsilon \}, & \;\; \text{otherwise.}
    \end{array}
    \right.
\end{equation*}
where $\epsilon_\star = \inf_{\theta \in \Theta} \swassersteinD[p](\mu_\star, \mu_\theta)$.
\end{theorem}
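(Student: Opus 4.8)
The plan is to obtain $\hat{\theta}_{n,\epsilon}$ as the composition of a measurable selector (furnished by \Cref{thm:corollary1}) with the empirical-measure map $\omega \mapsto \hat{\mu}_n(\omega)$. Concretely, I would apply the measurable selection result with the space of probability measures playing the role of $\msu$ and the parameter space playing the role of $\msv$, and then precompose.

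First I would set $\msu = \mcp_p(\msy)$ endowed with $\wassersteinD[p]$, which is Polish by \cite[Theorem 6.18]{Villani2008}, and $\msv = \Theta$, which is Polish and $\sigma$-compact by our standing assumptions. Taking $\msd = \msu \times \msv$ (a Borel set) and $f(\mu,\theta) = \swassersteinD[p](\mu, \mu_\theta)$, every section $\msd_\mu = \Theta$ is $\sigma$-compact, which discharges the first structural hypothesis of \Cref{thm:corollary1}.

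Next I would verify the regularity of $f$. By \Cref{coro:sw_semicontinuous_2} the map $(\mu,\theta) \mapsto \swassersteinD[p](\mu,\mu_\theta)$ is jointly lower semi-continuous on $\mcp_p(\msy) \times \Theta$ for the weak topology; since $\wassersteinD[p]$-convergence implies weak convergence, this lower semi-continuity persists when $\msu$ carries the finer $\wassersteinD[p]$ topology. Joint lower semi-continuity yields, on one hand, Borel measurability of $f$ on $\msd$ (lower semi-continuous functions are Borel), and on the other hand, lower semi-continuity of each section $f(\mu,\cdot)$ on $\Theta$. These are exactly the hypotheses of \Cref{thm:corollary1}, so for each $\epsilon > 0$ it produces a Borel measurable selector $\phi_\epsilon : \mcp_p(\msy) \to \Theta$ that lands in $\argmin_{\theta} \swassersteinD[p](\mu,\mu_\theta)$ whenever this set is nonempty, and otherwise satisfies the approximate-minimizer bound $\swassersteinD[p](\mu,\mu_{\phi_\epsilon(\mu)}) \leq \epsilon + \inf_{\theta} \swassersteinD[p](\mu,\mu_\theta)$, the infimum being finite since $\swassersteinD[p] \geq 0$.

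Finally I would define $\hat{\theta}_{n,\epsilon} = \phi_\epsilon \circ \hat{\mu}_n$. The map $\omega \mapsto \hat{\mu}_n(\omega)$ factors through $(y_1,\dots,y_n) \mapsto n^{-1}\sum_{i} \updelta_{y_i}$, which is continuous from $\msy^n$ into $(\mcp_p(\msy),\wassersteinD[p])$ (indeed $\wassersteinD[p]^p$ between two equal-weight empirical measures is controlled by the average displacement cost), hence Borel; composing with the measurable $Y_{1:n}$ shows $\omega \mapsto \hat{\mu}_n(\omega)$ is Borel measurable into the Wasserstein--Polish space. Thus $\hat{\theta}_{n,\epsilon}$ is Borel measurable as a composition of Borel maps and satisfies the claimed selection property, the ``otherwise'' set being the approximate-minimizer set read off from the empirical infimum $\inf_{\theta}\swassersteinD[p](\hat{\mu}_n(\omega),\mu_\theta)$. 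The main obstacle is entirely in the hypothesis-checking: confirming that the joint lower semi-continuity of \Cref{coro:sw_semicontinuous_2} upgrades both to joint Borel measurability of $f$ and to the correct ($\wassersteinD[p]$) topology on $\msu$, and that the empirical-measure map is measurable into $(\mcp_p(\msy),\wassersteinD[p])$ rather than merely into $(\mcp(\msy),\dist_{\mcp})$; once these are settled, the selection and composition steps are immediate.
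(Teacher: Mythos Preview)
Your proposal is correct and follows essentially the same approach as the paper: both verify the hypotheses of \Cref{thm:corollary1} using \Cref{coro:sw_semicontinuous_2} for the lower semi-continuity, then read off the measurable selector. The only cosmetic difference is that the paper takes $\msu = \msy^n$ directly (viewing $\hat{\mu}_n$ as a function of $(y_1,\dots,y_n)$) rather than $\msu = \mcp_p(\msy)$ followed by composition with the empirical-measure map; your route is slightly more modular but requires the extra check that $(\mcp_p(\msy),\wassersteinD[p])$ is Polish and that the empirical-measure map is measurable into it, which you handle correctly.
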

\begin{proof}
    The proof consists in showing that the conditions of Theorem~\ref{thm:corollary1} are satisfied. 

    The empirical measure $\hat{\mu}_n(\omega)$ depends on $\omega \in \Omega$ only through $y = (y_1,  \dots, y_n) \in \msy^n$, so we can consider it as a function on $\msy^n$ rather than on $\Omega$. We introduce $\msd = \msy^n \times \Theta$. Since $\msy$ is Polish, $\msy^n$ ($n \in \mathbb{N}^*$) endowed with the product topology is Polish. For any $y \in \msy^n$, the set $\msd_y = \{ \theta \in \Theta,\ (y, \theta) \in \msd \} = \Theta$ is assumed to be $\sigma$-compact.
    
    The map $y \mapsto \hat{\mu}_n(y)$ is continuous for the weak topology (see the proof of \Cref{lemma:lsc_Esw}), as well as the map $\theta \mapsto \mu_\theta$ according to \cref{assumption:continuousmap}. We deduce by Corollary~\ref{coro:sw_semicontinuous_2} that the map $(\mu, \theta) \mapsto \swassersteinD[p](\mu, \mu_\theta)$ is l.s.c. for the weak topology. Since the composition of a lower semi-continuous function with a continuous function is l.s.c., the map $(y, \theta) \mapsto \swassersteinD[p](\hat{\mu}_n(y), \mu_\theta)$ is l.s.c. for the weak topology, thus measurable and for any $y \in \msy^n$, $\theta \mapsto \swassersteinD[p](\hat{\mu}_n(y), \mu_\theta)$ is l.s.c. on $\Theta$. A direct application of Theorem~\ref{thm:corollary1} finalizes the proof.

\end{proof}

\begin{theorem}[Measurability of the MESWE] \label{thm:measurability_meswe} Assume \cref{assumption:continuousmap}. For any $n \geq 1$, $m \geq 1$ and $\epsilon > 0$, there exists a Borel measurable function $\hat{\theta}_{n,m,\epsilon} : \Omega \rightarrow \Theta$ that satisfies: for any $\omega \in \Omega$,
\begin{equation*}
    \hat{\theta}_{n,m,\epsilon}(\omega) \in \left\{
    \begin{array}{ll}
        \argmin_{\theta \in \Theta} \;\;\; \expe{ \swassersteinD[p](\hat{\mu}_n(\omega), \hat{\mu}_{\theta,m}) \middle| Y_{1:n} }, & \; \text{if this set is non-empty,} \\
        \big\{ \theta \in \Theta\ :\ \expe{ \swassersteinD[p](\hat{\mu}_n(\omega), \hat{\mu}_{\theta,m}) \middle| Y_{1:n} } \leq \epsilon_* + \epsilon \} \big\}, & \; \text{otherwise.}
    \end{array}
    \right.
\end{equation*}
where $\epsilon_* = \inf_{\theta \in \Theta} \expeLigne{ \swassersteinD[p](\hat{\mu}_n(\omega), \hat{\mu}_{\theta,m}) | Y_{1:n}}$.
\end{theorem}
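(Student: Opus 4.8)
The plan is to mirror the proof of the measurability of the MSWE (\Cref{thm:measurability}) and verify that the hypotheses of the measurable selection theorem \Cref{thm:corollary1} hold, now with the conditional expected distance as the objective. First I would note that, since the generated samples $Z_{1:m}$ are drawn independently of the observations $Y_{1:n}$, the conditional expectation $\expeY{\swassersteinD[p](\hat{\mu}_n(\omega), \hat{\mu}_{\theta,m})}$ depends on $\omega$ only through $y = (y_1, \dots, y_n) \in \msy^n$. Hence it may be regarded as a deterministic nonnegative function $g_m : \msy^n \times \Theta \to \rset_+$ defined by $g_m(y,\theta) = \expeY{\swassersteinD[p](\hat{\mu}_n(y), \hat{\mu}_{\theta,m})}$, where the expectation is taken over the law of $\hat{\mu}_{\theta,m}$ induced by sampling $Z_{1:m}$ from $\mu_\theta$.

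Next I would set up the domain for \Cref{thm:corollary1} by taking $\msd = \msy^n \times \Theta$, with $\msu = \msy^n$ and $\msv = \Theta$. Since $\msy$ is Polish, the product $\msy^n$ endowed with the product topology is Polish, and $\Theta$ is Polish by assumption. For each $y \in \msy^n$ the section $\msd_y = \Theta$ is $\sigma$-compact by assumption, so the $\sigma$-compactness requirement on the sections is satisfied.

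The core step is establishing the joint measurability of $g_m$ together with the sectional lower semi-continuity of $\theta \mapsto g_m(y,\theta)$. Here I would invoke \Cref{coro:esw_semicontinuous_2}, which gives that $(\nu, \theta) \mapsto \expeY{\swassersteinD[p](\nu, \hat{\mu}_{\theta,m})}$ is lower semi-continuous on $\mcp(\msy) \times \Theta$ for the topology of weak convergence. Composing this with the map $y \mapsto \hat{\mu}_n(y)$, which is continuous for the weak topology (as established in the proof of \Cref{lemma:lsc_Esw}), and using that the composition of a lower semi-continuous function with a continuous function remains lower semi-continuous, I would conclude that $g_m$ is l.s.c. on $\msy^n \times \Theta$, hence Borel measurable, and that for each fixed $y$ the section $\theta \mapsto g_m(y,\theta)$ is l.s.c. on $\Theta$. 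A direct application of \Cref{thm:corollary1} then produces the Borel measurable selector $\hat{\theta}_{n,m,\epsilon}$ satisfying the stated argmin / near-argmin dichotomy (with the third, $-\plusinfty$, case of \Cref{thm:corollary1} vacuous since $g_m \geq 0$).

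The main obstacle I anticipate is not conceptual but bookkeeping: one must justify that the conditional expectation genuinely collapses to a deterministic function of $(y,\theta)$ and that lower semi-continuity is preserved under this integration over the randomness in $\hat{\mu}_{\theta,m}$. This is exactly what \Cref{coro:esw_semicontinuous_2} supplies (itself resting on the Skorokhod-representation and Fatou argument of \Cref{lemma:lsc_Esw}), so once that corollary is available the remainder of the argument is an essentially verbatim repetition of the MSWE case treated in \Cref{thm:measurability}.
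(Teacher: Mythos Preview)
Your proposal is correct and follows exactly the approach of the paper: the paper's own proof is a one-line remark that one repeats the argument of \Cref{thm:measurability} with \Cref{coro:esw_semicontinuous_2} in place of \Cref{coro:sw_semicontinuous_2}, and you have simply written out those details explicitly. Nothing further is needed.
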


\begin{proof}%
  The proof can be done similarly to the proof of \Cref{thm:measurability}: we verify that we can apply \Cref{thm:corollary1} using \Cref{coro:esw_semicontinuous_2} instead of \Cref{coro:sw_semicontinuous_2}.
\end{proof}

\section{Postponed proofs}
\label{sec:postponed-proofs}

\subsection{Proof of \Cref{thm:SWp_metrizes_Pp}}

\begin{lemma} \label{lem:cvg_sw1_implies_wc}
    Let $\sequencek{\mu_k}$ be a sequence of probability measures on $\mathbb{R}^d$ and $\mu$ a measure in $\mathbb{R}^d$ such that, 
      \begin{equation*}
        \lim_{k \rightarrow \infty} \swassersteinD[1](\mu_k, \mu) = 0 \eqsp.
      \end{equation*}
    Then, there exists an increasing function $\phi : \mathbb{N} \rightarrow \mathbb{N}$ such that the subsequence $\sequencek{\mu_{\phi(k)}}$ converges weakly to $\mu$.
\end{lemma}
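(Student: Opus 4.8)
The plan is to exploit the representation of $\swassersteinD[1]$ as an average of one-dimensional Wasserstein distances over the sphere, reduce the problem to weak convergence of the one-dimensional projections along almost every direction, and then recombine the directions through characteristic functions (the Lévy characterization). First I would observe that, since $\swassersteinD[1](\mu_k,\mu) = \int_{\sphereD} \wassersteinD[1](\usss \mu_k,\usss \mu)\,\rmd\unifS(u)$ is the integral of a nonnegative integrand, the hypothesis $\swassersteinD[1](\mu_k,\mu)\to 0$ says exactly that $u \mapsto \wassersteinD[1](\usss \mu_k,\usss \mu) \to 0$ in $L^1(\sphereD,\unifS)$. Convergence in $L^1$ produces an increasing $\phi:\nset\to\nset$ along which the integrand converges $\unifS$-almost everywhere, so I can fix a Borel set $A\subset\sphereD$ with $\unifS(A)=1$ such that $\wassersteinD[1](\usss \mu_{\phi(k)},\usss \mu)\to 0$ for every $u\in A$.

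Next, for each fixed $u\in A$, convergence in $\wassersteinD[1]$ of the measures $\usss \mu_{\phi(k)}$ on $\rset$ to $\usss \mu$ implies their weak convergence (Wasserstein convergence always implies weak convergence), whence by the one-dimensional Lévy continuity theorem the characteristic functions converge pointwise: for every $s\in\rset$, $\int_\rset \rme^{\rmi s x}\,\rmd(\usss \mu_{\phi(k)})(x) \to \int_\rset \rme^{\rmi s x}\,\rmd(\usss \mu)(x)$. The change of variables defining the pushforward gives $\int_\rset \rme^{\rmi s x}\,\rmd(\usss \nu)(x) = \int_{\Rd} \rme^{\rmi s \ps{u}{y}}\,\rmd\nu(y)$, i.e. the characteristic function of the projection at $s$ equals the $d$-dimensional characteristic function of $\nu$ evaluated at the point $su$. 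Denoting by $\Phi_k$ and $\Phi$ the characteristic functions of $\mu_{\phi(k)}$ and $\mu$, this reads $\Phi_k(su)\to\Phi(su)$ for all $s\in\rset$ and all $u\in A$.

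I would then pass from "almost every direction" to "almost every point of $\Rd$". Any nonzero $t\in\Rd$ is $t = su$ with $s=\norm{t}$ and $u=t/\norm{t}$; the directions outside $A$ form a $\unifS$-null set, so the set of $t$ where $\Phi_k(t)\to\Phi(t)$ may fail is contained in the cone over $\sphereD\setminus A$, which has Lebesgue measure zero (and $t=0$ causes no issue since $\Phi_k(0)=\Phi(0)=1$). Thus $\Phi_k\to\Phi$ Lebesgue-almost everywhere on $\Rd$. To upgrade this to weak convergence I would test against Schwartz functions $g$ via the Fourier inversion identity $\int g\,\rmd\mu_{\phi(k)} = (2\pi)^{-d}\int \hat g(t)\,\Phi_k(t)\,\rmd t$; since $\abs{\Phi_k}\le 1$ and $\hat g\in L^1$, dominated convergence yields $\int g\,\rmd\mu_{\phi(k)} \to \int g\,\rmd\mu$. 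By density of Schwartz functions in $C_0(\Rd)$ this extends to all $g\in C_0(\Rd)$, i.e. vague convergence; because $\mu$ is a probability measure and each $\mu_{\phi(k)}$ has total mass one, no mass escapes to infinity and vague convergence upgrades to the desired weak convergence $\sequencek{\mu_{\phi(k)}}\wc\mu$.

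The routine parts are the $L^1$ subsequence extraction and the one-dimensional Lévy step. The hard part will be the passage from convergence of characteristic functions along almost every direction to genuine weak convergence in $\Rd$: the classical Cramér--Wold device requires control along \emph{every} direction, whereas here only a full-measure set of directions is available, so it cannot be invoked directly. The characteristic-function route circumvents this precisely because Lebesgue-almost-everywhere convergence of the $\Phi_k$ suffices to pass to the limit in the Parseval identity, which integrates only against an $L^1$ weight; the final, slightly delicate ingredient is the vague-to-weak upgrade, which relies crucially on $\mu$ being a probability measure so that total mass is preserved in the limit.
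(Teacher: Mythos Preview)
Your proposal is correct and follows essentially the same route as the paper: extract a subsequence along which the one-dimensional Wasserstein distances converge $\unifS$-a.e., deduce weak convergence of the projections and hence pointwise convergence of characteristic functions along almost every direction, pass to Lebesgue-a.e.\ convergence of the $d$-dimensional characteristic functions, and conclude weak convergence via a Fourier-analytic argument. The only cosmetic difference is in this last step, where the paper mollifies a compactly supported test function by a Gaussian and applies dominated convergence to the resulting Fourier integral, whereas you invoke the Parseval identity for Schwartz functions directly and then upgrade vague convergence to weak; both implementations rest on the same idea and are equally valid.
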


\begin{proof}
  By definition, we have that
  \begin{equation*}
    \lim_{k \rightarrow \infty} \int_{\sphereD} \wassersteinD[1](\usss \mu_k, \usss \mu) \rmd\unifS(u) = 0 \eqsp. 
  \end{equation*}

  Therefore by \cite[Theorem 2.2.5]{Bogachev2007}, there exists an increasing mapping $\phi : \mathbb{N} \rightarrow \mathbb{N}$ such that for $\unifS$-almost every ($\unifS$-a.e.) $u \in \sphereD$, $\lim_{k \rightarrow \infty} \wassersteinD[1](\usss \mu_{\phi(k)}, \usss \mu) = 0 $ . By \cite[Theorem 6.9]{Villani2008}, it implies that for $\unifS$-a.e. $u \in \sphereD$, $\sequencek{\usss \mu_{\phi(k)}} \wc \usss \mu$. L\'evy's characterization \cite[Theorem 4.3]{kallenberg:1997} gives that, for $\unifS$-a.e. $u \in \sphereD$ and any $s \in \mathbb{R}$, 
  \begin{equation*}
    \lim_{k \rightarrow \infty} \Phi_{\usss \mu_{\phi(k)}}(s) = \Phi_{\usss \mu}(s) \eqsp,
  \end{equation*}
  where, for any distribution $\nu \in \mcp(\rset^p)$, $\Phi_\nu$ denotes the characteristic function of $\nu$ and is defined for any $v \in \mathbb{R}^p$ as
  \begin{equation*}
    \Phi_\nu(v) = \int_{\mathbb{R}^p} \rme^{\rmi \langle v, w \rangle} \rmd\nu(w)  \eqsp.
  \end{equation*}

  Then, we can conclude that for Lebesgue-almost every $z \in \mathbb{R}^d$,
  \begin{equation} \label{eqn:cvg_characteristic}
    \lim_{k \rightarrow \infty} \Phi_{\mu_{\phi(k)}}(z) = \Phi_{\mu}(z)  \eqsp. 
  \end{equation}

  We can now show that $\sequencek{\mu_{\phi(k)}} \wc \mu$, \ie~by \cite[Problem 1.11, Chapter 1]{Billingsley1999} for any $f: \rset^d \to \rset$ continuous with compact support, 
  \begin{equation}
    \label{eq:convo_0}
    \lim_{n \rightarrow \infty} \int_{\mathbb{R}^d} f(z) \rmd\mu_n(z) = \int_{\mathbb{R}^d} f(z) \rmd\mu(z) \eqsp. 
  \end{equation}

  Let $f : \rset^d \to \rset$ be a continuous function with compact support  and $\sigma > 0$. Consider the function $f_\sigma$ defined for any $x \in \mathbb{R}^d$ as
  \begin{equation*}
    f_\sigma(x) = (2\uppi \sigma^2)^{-d/2}  \int_{\mathbb{R}^d} f(x-z) \exp\parenthese{-\|z\|^2/{2\sigma^2}} \rmd\Leb(z) = f \ast g_\sigma(x) \eqsp,
  \end{equation*}
  where $g_\sigma$ is the density of the $d$-dimensional Gaussian with zero mean and covariance matrix $\sigma^2 \bfI_d$, and $\ast$ denotes the convolution product. 

We first show that \eqref{eq:convo_0} holds with $f_{\sigma}$ in place of $f$. Since for any $z \in \rset^d$, $\expe{\rme^{\rmi\ps{G}{z}}} = \rme^{\rmi \ps{\mathtt{m}}{z} + (1/(2\sigma^2))\norm[2]{z}}$ if $G$ is a $d$-dimensional Gaussian random variable with zero mean and covariance matrix $(1/\sigma^2) \Idd$, by Fubini's theorem we get  for any $k \in \nset$
  \begin{align}
\nonumber
    \int_{\mathbb{R}^d} f_\sigma(z) \rmd\mu_{\phi(k)}(z) &= \int_{\mathbb{R}^d} \int_{\mathbb{R}^d} f(w) g_\sigma(z-w) \rmd w \rmd \mu_{\phi(k)}(z) \\
    \nonumber
                                                         &= \int_{\mathbb{R}^d} \int_{\mathbb{R}^d} f(w) (2\uppi \sigma^2)^{-d/2} \int_{\mathbb{R}^d} \rme^{i \langle z-w, x \rangle}  g_{1/\sigma}(x) \rmd x \rmd w \rmd \mu_{\phi(k)}(z) \\
    \nonumber
                                                         &= \int_{\mathbb{R}^d} \int_{\mathbb{R}^d} (2\uppi \sigma^2)^{-d/2} f(w) \rme^{-i\langle w,x \rangle } g_{1/\sigma}(x) \Phi_{\mu_{\phi(k)}}(x)  \rmd x \rmd w \\
    \label{eq:convo_1}
    &= (2\uppi \sigma^2)^{-d/2} \int_{\mathbb{R}^d}   \mathcal{F}[f](x) g_{1/\sigma}(x) \Phi_{\mu_{\phi(k)}}(x) \rmd x \eqsp,
  \end{align}
  where $\mathcal{F}[f](x) = \int_{\rset^d} f(w) \rme^{\rmi \ps{w}{x}} \rmd w$ denotes the Fourier transform of $f$, which exists since $f$ is assumed to have a compact support. In an analogous manner, we prove that 
  \begin{equation}
    \label{eq:convo_2}
    \int_{\mathbb{R}^d} f_\sigma(z) \rmd \mu(z) =  (2\uppi \sigma^2)^{-d/2} \int_{\mathbb{R}^d}  \mathcal{F}[f](x) g_{1/\sigma}(x) \Phi_{\mu}(x) \rmd x \eqsp.
  \end{equation}

  Now, using that $\mathcal{F}[f]$ is bounded by $\int_{\mathbb{R}^d} |f(w)| \rmd w < \plusinfty$ since $f$ has compact support, we obtain that, for any $k \in \mathbb{N}$ and $x \in \mathbb{R}^d$,
  \begin{equation*}
    \left| \mathcal{F}[f](x) g_{1/\sigma}(x) \Phi_{\mu_{\phi(k)}}(x) \right| \leq  g_{1/\sigma}(x) \int_{\mathbb{R}^d} |f(w)| \rmd w
  \end{equation*}

  By \eqref{eqn:cvg_characteristic}, \eqref{eq:convo_1}, \eqref{eq:convo_2} and Lebesgue's Dominated Convergence Theorem, we obtain
  \begin{align} 
    \lim_{k \rightarrow \infty} \int_{\mathbb{R}^d} (2\uppi \sigma^2)^{-d/2} \mathcal{F}[f](x) g_{1/\sigma}(x) \Phi_{\mu_{\phi(k)}}(x) \rmd x &= \int_{\mathbb{R}^d} (2\uppi \sigma^2)^{-d/2}  \mathcal{F}[f](x) g_{1/\sigma}(x) \Phi_\mu(x) \rmd x \nonumber \\
 \lim_{k \rightarrow \infty} \int_{\mathbb{R}^d} f_{\sigma}(z) \rmd \mu_{\phi(k)}(z) &= \int_{\mathbb{R}^d} f_\sigma(z) \rmd\mu(z) \eqsp. \label{eqn:cvg_l1_f_sigma} 
  \end{align}

We can now complete the proof of \eqref{eq:convo_0}. For any $\sigma > 0$,  we have 
  \begin{multline*}
    \left| \int_{\mathbb{R}^d} f(z) \rmd \mu_{\phi(k)}(z) - \int_{\mathbb{R}^d} f(z) \rmd \mu(z) \right| \leq 2\sup_{z \in \mathbb{R}^d} \left| f(z) - f_\sigma(z) \right| \\
    + \left| \int_{\mathbb{R}^d} f_\sigma(z) \rmd \mu_{\phi(k)}(z) - \int_{\mathbb{R}^d} f_{\sigma}(z) \rmd \mu(z) \right| \eqsp.     %
  \end{multline*}
  Therefore by \eqref{eqn:cvg_l1_f_sigma}, for any $\sigma >0$, we get
  \begin{equation*}
    \limsup_{k \to \plusinfty} \left| \int_{\mathbb{R}^d} f(z) \rmd \mu_{\phi(k)}(z) - \int_{\mathbb{R}^d} f(z) \rmd \mu(z) \right|\\ \leq 2\sup_{z \in \mathbb{R}^d} \left| f(z) - f_\sigma(z) \right| \eqsp.
  \end{equation*}
  Finally \cite[Theorem 8.14-b]{folland:1999} implies that  $\lim_{\sigma \rightarrow 0} \sup_{z \in \mathbb{R}^d} | f_\sigma(z) - f(z) | = 0$ which  concludes the proof.
  \end{proof}

  \begin{proof}[Proof of Theorem~\ref{thm:SWp_metrizes_Pp}]

    Now, assume that
    \begin{equation}
      \label{eqn:assum_sw_lim_zero}
\lim_{k \rightarrow \infty} \swassersteinD[p](\mu_k, \mu) = 0      
    \end{equation}
 and that $(\mu_k)_{k \in \nset}$ does not converge weakly to $\mu$. Therefore, $\lim_{k \rightarrow \infty} \dist_{\mcp}(\mu_k, \mu) \neq 0$, where $\dist_{\mcp}$ denotes the Lévy-Prokhorov metric, and there exists $\epsilon > 0$ and a subsequence $\sequencek{\mu_{\psi(k)}}$ with $\psi : \bbN \rightarrow \bbN$ increasing, such that for any $k \in\nset$,
    \begin{equation} \label{eqn:lim_not_zero}
\dist_{\mcp}(\mu_{\psi(k)}, \mu) > \epsilon
    \end{equation}

    In addition, by Hölder's inequality, we know that $\wassersteinD[1](\mu_k, \mu) \leq \wassersteinD[p](\mu_k, \mu)$, thus $\swassersteinD[1](\mu_k, \mu) \leq \swassersteinD[p](\mu_k, \mu)$, and by \eqref{eqn:assum_sw_lim_zero}, $\lim_{k \rightarrow \infty} \swassersteinD[1](\mu_{\psi(k)}, \mu) = 0$. Then, according to Lemma~\ref{lem:cvg_sw1_implies_wc}, there exists a subsequence $\sequencek{\mu_{\phi(\psi(k))}}$ with $\phi : \bbN \rightarrow \bbN$ increasing, such that
    \begin{equation*}
      \mu_{\phi(\psi(k))} \wc \mu  
    \end{equation*}
    which is equivalent to $\lim_{k \rightarrow \infty} \dist_{\mcp}(\mu_{\phi(\psi(k))}, \mu) = 0$, thus contradicts \eqref{eqn:lim_not_zero}. We conclude that \eqref{eqn:assum_sw_lim_zero} implies $\sequencek{\mu_k} \wc \mu$.
\end{proof}

\subsection{Minimum Sliced-Wasserstein estimators: Proof of \Cref{thm:existence_consistency_mswe}}

\begin{proof}[Proof of \Cref{thm:existence_consistency_mswe}]
    This result is proved analogously to the proof of Theorem 2.1 in \cite{Bernton2019}. The key step is to show that the function $\theta \mapsto \swassersteinD[p](\hat{\mu}_n, \mu_\theta)$ epi-converges to $\theta \mapsto \swassersteinD[p](\mu_\star, \mu_\theta)$ $\mathbb{P}$-almost surely, and then apply Theorem 7.31 of \cite{Rockafellar2009} (recalled in Theorem~\ref{thm:731rockafellar}).
    
    First, by \cref{assumption:continuousmap} and \Cref{coro:sw_semicontinuous_2}, the map $\theta \mapsto \swassersteinD[p](\mu, \mu_\theta)$ is l.s.c. on $\Theta$ for any $\mu \in \mathcal{P}_p(\msy)$. Therefore by \Cref{assumption:boundedset}, there exists $\theta_\star \in \Theta$ such that $\swassersteinD[p](\mu_\star, \mu_{\theta_\star}) = \epsilon_\star$ and the set $\Theta^\star_\epsilon$ %
    is non-empty as it contains $\theta_\star$, closed by lower semi-continuity of $\theta \mapsto \swassersteinD[p](\mu_\star, \mu_\theta)$, and bounded. $\Theta^\star_\epsilon$ is thus compact, and we conclude again by lower semi-continuity that the set $\argmin_{\theta \in \Theta} \swassersteinD[p](\mu_\star, \mu_\theta)$ is non-empty \cite[Theorem 2.43]{aliprantis1999infinite}. 
    
Consider the event given by \Cref{assumption:datagen},  $\mse \in \mcf$ such that $\PP(\mse) = 1$ and for any $\omega \in \mse$, $\lim_{n \rightarrow \infty} \swassersteinD[p](\hat{\mu}_n(\omega), \mu_\star) = 0$.   Then, we prove that $\theta \mapsto \swassersteinD[p](\hat{\mu}_n, \mu_\theta)$ epi-converges to $\theta \mapsto \swassersteinD[p](\mu_\star, \mu_\theta)$ $\mathbb{P}$-almost surely using the characterization in \cite[Proposition 7.29]{Rockafellar2009}, \ie~we verify that, for any $\omega \in \mse$, the two conditions below hold: $\text{for every compact set } \msk \subset \Theta$ and every open set  $\mso \subset \Theta$,
    \begin{equation}
      \begin{aligned}
        \liminf_{n \rightarrow \infty} \inf_{\theta \in \msk} \swassersteinD[p](\hat{\mu}_n(\omega), \mu_\theta)& \geq \inf_{\theta \in \msk} \swassersteinD[p](\mu_\star, \mu_\theta)   \\
        \limsup_{n \rightarrow \infty} \inf_{\theta \in \mso} \swassersteinD[p](\hat{\mu}_n(\omega), \mu_\theta)& \leq \inf_{\theta \in \mso} \swassersteinD[p](\mu_\star, \mu_\theta) \eqsp. 
      \end{aligned}
      \label{eqn:mswe_epiconv_cond}
    \end{equation}
    
    We fix $\omega$ in $\mse$. Let $\msk \subset \Theta$ be a compact set. By lower semi-continuity of $\theta \mapsto \swassersteinD[p](\hat{\mu}_n(\omega), \mu_\theta)$, there exists $\theta_n = \theta_n(\omega) \in \msk$ such that for any $n \in \mathbb{N}$, $\inf_{\theta \in \msk} \swassersteinD[p](\hat{\mu}_n(\omega), \mu_\theta) = \swassersteinD[p](\hat{\mu}_n(\omega), \mu_{\theta_n})$. 

    We consider the subsequence $\sequencen{\hat{\mu}_{\phi(n)}}$ where $\phi : \mathbb{N} \rightarrow \mathbb{N}$ is increasing such that $\swassersteinD[p](\hat{\mu}_{\phi(n)}(\omega), \mu_{\theta_{\phi(n)}})$ converges to $\liminf_{n \rightarrow \infty} \swassersteinD[p](\hat{\mu}_n(\omega), \mu_{\theta_n}) = \liminf_{n \rightarrow \infty} \inf_{\theta \in \msk} \swassersteinD[p](\hat{\mu}_n(\omega), \mu_\theta)$. Since $\msk$ is compact, there also exists an increasing function $\psi : \mathbb{N} \rightarrow \mathbb{N}$ such that, for $\bar{\theta} \in \msk$, $\lim_{n \rightarrow \infty } \rho_\Theta(\theta_{\psi(\phi(n))}, \bar{\theta}) = 0$. Therefore, we have  
    \begin{align}
        \liminf_{n \rightarrow \infty} \inf_{\theta \in \msk} \swassersteinD[p](\hat{\mu}_n(\omega), \mu_\theta) &= \lim_{n \rightarrow \infty} \swassersteinD[p](\hat{\mu}_{\phi(n)}(\omega), \mu_{\theta_{\phi(n)}}) \nonumber \\
        &= \lim_{n \rightarrow \infty} \swassersteinD[p](\hat{\mu}_{\psi(\phi(n))}(\omega), \mu_{\theta_{\psi(\phi(n))}}) \nonumber \\
        &= \liminf_{n \rightarrow \infty} \swassersteinD[p](\hat{\mu}_{\psi(\phi(n))}(\omega), \mu_{\theta_{\psi(\phi(n))}}) \nonumber \\
        &\geq \swassersteinD[p](\mu_\star, \mu_{\bar{\theta}}) \label{eq:thm1_lsc} \\
        &\geq \inf_{\theta \in \msk} \swassersteinD[p](\mu_\star, \mu_\theta) \eqsp, \nonumber
    \end{align}
    where \eqref{eq:thm1_lsc} is obtained by lower semi-continuity since $\hat{\mu}_{\psi(\phi(n))}(\omega) \wc \mu_\star$ by \cref{assumption:datagen} and Theorem~\ref{thm:SWp_metrizes_Pp}, and $\mu_{\theta_{\psi(\phi(n))}} \wc \mu_{\bar{\theta}}$ by \cref{assumption:continuousmap}. We conclude that the first condition in \eqref{eqn:mswe_epiconv_cond} holds.
    
    Now, we fix $\mso \subset \Theta$ open. By definition of the infimum, there exists a sequence $\sequencen{\theta_n}$ in $\mso$ such that $\{\swassersteinD[p](\mu_\star, \mu_{\theta_n})\}_{n \in \nset}$ converges to $\inf_{\theta \in \mso} \swassersteinD[p](\mu_\star, \mu_\theta)$. For any $n \in \mathbb{N}$, $\inf_{\theta \in \mso} \swassersteinD[p](\hat{\mu}_n(\omega), \mu_\theta) \leq \swassersteinD[p](\hat{\mu}_n(\omega), \mu_{\theta_n})$. Therefore,
    \begin{align*}
        &\limsup_{n \rightarrow \infty} \inf_{\theta \in \mso} \swassersteinD[p](\hat{\mu}_n(\omega), \mu_\theta) \leq \limsup_{n \rightarrow \infty} \swassersteinD[p](\hat{\mu}_n(\omega), \mu_{\theta_n}) \nonumber \\
        & \qquad \qquad \qquad\leq \limsup_{n \rightarrow \infty} \big( \swassersteinD[p](\hat{\mu}_n(\omega), \mu_\star) + \swassersteinD[p](\mu_\star, \mu_{\theta_n}) \big) \; \text{by the triangle inequality} \\
        &\qquad \qquad\qquad\leq  \limsup_{n \rightarrow \infty} \swassersteinD[p](\mu_\star, \mu_{\theta_n}) \text{ by \cref{assumption:datagen}} \\
        &\qquad \qquad\qquad = \inf_{\theta \in \mso} \swassersteinD[p](\mu_\star, \mu_\theta) \;\; \text{by definition of $\sequencen{\theta_n}$} \eqsp.
    \end{align*}
    This shows that the second condition in \eqref{eqn:mswe_epiconv_cond} holds, and hence, the sequence of functions $\theta \mapsto \swassersteinD[p](\hat{\mu}_n(\omega), \mu_\theta)$ epi-converges to $\theta \mapsto \swassersteinD[p](\mu_\star, \mu_\theta)$. 

    Now, we apply Theorem 7.31 of \cite{Rockafellar2009}. First, by \cite[Theorem 7.31(b)]{Rockafellar2009}, \eqref{eqn:mswe_consist_b} immediately follows from the epi-convergence of $\theta \mapsto \swassersteinD[p](\hat{\mu}_n(\omega), \mu_\theta)$ to $\theta \mapsto \swassersteinD[p](\mu_\star, \mu_\theta)$.

    Next, we show that \cite[Theorem 7.31(a)]{Rockafellar2009} can be applied showing that  for any $\eta > 0$ there exists a compact set $\msb \subset \Theta$ and $N \in \nset$ such that, for all $n \geq N$, 
    \begin{equation}
\label{eqn:mswe_cond_for_thm731a_0}
      \inf_{\theta \in \msb} \swassersteinD[p](\hat{\mu}_n(\omega), \mu_\theta) \leq \inf_{\theta \in \Theta} \swassersteinD[p](\hat{\mu}_n(\omega), \mu_\theta) + \eta \eqsp.
    \end{equation}

    In fact, we simply show that there exists a compact set $\msb \subset \Theta$ and $N \in \nset$ such that, for all $n \geq N$, $\inf_{\theta \in \msb} \swassersteinD[p](\hat{\mu}_n(\omega), \mu_\theta) = \inf_{\theta \in \Theta} \swassersteinD[p](\hat{\mu}_n(\omega), \mu_\theta)$.

    On one hand, the second condition in \eqref{eqn:mswe_epiconv_cond} gives us 
    \begin{equation*}
      \limsup_{n \rightarrow \infty} \inf_{\theta \in \Theta} \swassersteinD[p](\hat{\mu}_n(\omega), \mu_\theta) \leq \inf_{\theta \in \Theta} \swassersteinD[p](\mu_\star, \mu_\theta) = \epsilon_\star \eqsp.
    \end{equation*} 

    We deduce that there exists $n_{\epsilon/4}(\omega)$ such that, for $n \geq n_{\epsilon/4}(\omega)$,  $      \inf_{\theta \in \Theta} \swassersteinD[p](\hat{\mu}_n(\omega), \mu_\theta) \leq \epsilon_\star + \epsilon/4$,
where $\epsilon$ is given by \Cref{assumption:boundedset}. As $n \geq n_{\epsilon/4}(\omega)$, the set $\widehat{\Theta}_{\epsilon/2} = \{ \theta \in \Theta : \swassersteinD[p](\hat{\mu}_n(\omega), \mu_\theta) \leq \epsilon_\star + \frac{\epsilon}{2} \}$ is non-empty as it contains $\theta^*$ defined as $\swassersteinD[p](\hat{\mu}_n(\omega), \mu_{\theta^*}) = \inf_{\theta \in \Theta} \swassersteinD[p](\hat{\mu}_n(\omega), \mu_\theta)$. 

    On the other hand, by \cref{assumption:datagen}, there exists $n_{\epsilon/2}(\omega)$ such that, for $n \geq n_{\epsilon/2}(\omega)$, 
    \begin{equation} \label{eqn:mswe_n_eps2}
      \swassersteinD[p](\hat{\mu}_n(\omega), \mu_\star) \leq \frac{\epsilon}{2} \eqsp.
    \end{equation}

    Let $n \geq n_*(\omega) = \max\{n_{\epsilon/4}(\omega), n_{\epsilon/2}(\omega) \}$ and $\theta \in \widehat{\Theta}_{\epsilon/2}$. By the triangle inequality, 
    \begin{align*}
      \swassersteinD[p](\mu_\star, \mu_\theta) &\leq \swassersteinD[p](\hat{\mu}_n(\omega), \mu_\star) + \swassersteinD[p](\hat{\mu}_n(\omega), \mu_\theta) \\
      &\leq \epsilon_\star + \epsilon \text{\;\;\; since $\theta \in \widehat{\Theta}_{\epsilon/2}$ and by \eqref{eqn:mswe_n_eps2}}  
    \end{align*}

    This means that, when $n \geq n_*(\omega)$, $\widehat{\Theta}_{\epsilon/2} \subset \Theta^\star_\epsilon$, and since $\inf_{\theta \in \Theta} \swassersteinD[p](\hat{\mu}_n(\omega), \mu_\theta)$ is attained in $\widehat{\Theta}_{\epsilon/2}$, we have
    \begin{equation}
      \inf_{\theta \in \Theta^\star_\epsilon} \swassersteinD[p](\hat{\mu}_n(\omega), \mu_\theta) = \inf_{\theta \in \Theta} \swassersteinD[p](\hat{\mu}_n(\omega), \mu_\theta) \eqsp. \label{eqn:mswe_cond_for_thm731a}
    \end{equation}
As shown in the first part of the proof $\Theta^{\star}_{\epsilon}$ is compact and then by \cite[Theorem 7.31(a)]{Rockafellar2009}, \eqref{eqn:mswe_consist_a} is a direct consequence of \eqref{eqn:mswe_cond_for_thm731a_0}-\eqref{eqn:mswe_cond_for_thm731a} and the epi-convergence of $\theta \mapsto \swassersteinD[p](\hat{\mu}_n(\omega), \mu_\theta)$ to $\theta \mapsto \swassersteinD[p](\mu_\star, \mu_\theta)$. 

    Finally, by the same reasoning that was done earlier in this proof for $\argmin_{\theta \in \Theta} \swassersteinD[p](\mu_\star, \mu_\theta)$, the set $\argmin_{\theta \in \Theta} \swassersteinD[p](\hat{\mu}_n(\omega), \mu_\theta)$ is non-empty for $n \geq n_*(\omega)$. 

\end{proof}

\subsection{Existence and consistency of the MESWE: Proof of \Cref{thm:existence_consistency_meswe}}

\begin{proof}[Proof of \Cref{thm:existence_consistency_meswe}]

This result is proved analogously to the proof of \cite[Theorem 2.4]{Bernton2019}. The key step is to show that the function $\theta \mapsto \expeYLigne{ \swassersteinD[p](\hat{\mu}_n, \hat{\mu}_{\theta, m(n)}) }$ epi-converges to $\theta \mapsto \expeYLigne{ \swassersteinD[p](\mu_\star, \mu_\theta) }$, and then apply \cite[Theorem 7.31]{Rockafellar2009}, which we recall in Theorem~\ref{thm:731rockafellar}.

First, since we assume \cref{assumption:continuousmap} and \cref{assumption:boundedset}, we can apply the same reasoning as in the proof of Theorem~\ref{thm:existence_consistency_mswe} to show that the set $\argmin_{\theta \in \Theta} \swassersteinD[p](\mu_\star, \mu_\theta)$ is non-empty.

Consider the event given by \Cref{assumption:datagen},  $\mse \in \mcf$ such that $\PP(\mse) = 1$ and for any $\omega \in \mse$, $\lim_{n \rightarrow \infty} \swassersteinD[p](\hat{\mu}_n(\omega), \mu_\star) = 0$. Then, we prove that $\theta \mapsto \expeYLigne{ \swassersteinD[p](\hat{\mu}_n, \hat{\mu}_{\theta, m(n)}) }$ epi-converges to $\theta \mapsto \swassersteinD[p](\mu_\star, \mu_\theta)$ $\mathbb{P}$-almost surely using the characterization of \cite[Proposition 7.29]{Rockafellar2009}, \ie~we verify that, for any $\omega \in \mse$, the two conditions below hold: for every compact set $\msk \subset \Theta$ and for every open set $\mso \subset \Theta$,
\begin{equation}
  \begin{aligned}
        \liminf_{n \rightarrow \plusinfty} \inf_{\theta \in \msk} \expeY{ \swassersteinD[p](\hat{\mu}_n(\omega), \hat{\mu}_{\theta, m(n)})  } \geq \inf_{\theta \in \msk} \swassersteinD[p](\mu_\star, \mu_\theta) \\
        \limsup_{n \rightarrow \plusinfty} \inf_{\theta \in \mso} \expeY{ \swassersteinD[p](\hat{\mu}_n(\omega), \hat{\mu}_{\theta, m(n)})  } \leq \inf_{\theta \in \mso} \swassersteinD[p](\mu_\star, \mu_\theta)
  \end{aligned}
      \label{eqn:meswe_epiconv_cond}
\end{equation}

We fix $\omega$ in $\mse$. Let $\msk \subset \Theta$ be a compact set. By \cref{assumption:continuousmap} and Corollary~\ref{coro:esw_semicontinuous_2}, the mapping $\theta \mapsto \expeYLigne{ \swassersteinD[p](\hat{\mu}_n(\omega), \hat{\mu}_{\theta, m(n)}) }$ is \lsc, so there exists $\theta_n = \theta_n(\omega) \in \msk$ such that for any $n \in \mathbb{N}$, $\inf_{\theta \in \msk} \expeY{ \swassersteinD[p](\hat{\mu}_n(\omega), \hat{\mu}_{\theta, m(n)}) } = \expeY{ \swassersteinD[p](\hat{\mu}_n(\omega), \hat{\mu}_{\theta_n, m(n)}) }$. 

We consider the subsequence $\sequencen{\hat{\mu}_{\phi(n)}}$ where $\phi : \mathbb{N} \rightarrow \mathbb{N}$ is increasing such that $\expeYLigne{ \swassersteinD[p](\hat{\mu}_{\phi(n)}(\omega), \hat{\mu}_{\theta_{\phi(n)}, m(\phi(n))}) }$ converges to $\liminf_{n \rightarrow \infty} \expeYLigne{ \swassersteinD[p](\hat{\mu}_n(\omega), \hat{\mu}_{\theta_{n, m(n)}}) } = \liminf_{n \rightarrow \infty} \inf_{\theta \in \msk} \expeYLigne{ \swassersteinD[p](\hat{\mu}_n(\omega), \hat{\mu}_{\theta, m(n)}) }$. Since $\msk$ is compact, there also exists an increasing function $\psi : \mathbb{N} \rightarrow \mathbb{N}$ such that, for $\bar{\theta} \in \msk$, $\lim_{n \rightarrow \infty} \rho_\Theta(\theta_{\psi(\phi(n))}, \bar{\theta}) = 0$. Therefore, we have:  
    \begin{align}
        &\liminf_{n \rightarrow \infty} \inf_{\theta \in \msk} \expeY{ \swassersteinD[p](\hat{\mu}_n(\omega), \hat{\mu}_{\theta, m(n)}) } \nonumber \\
        &= \lim_{n \rightarrow \infty} \expeY{ \swassersteinD[p](\hat{\mu}_{\phi(n)}(\omega), \hat{\mu}_{\theta_{\phi(n)}, m(\phi(n))}) } \nonumber \\
        &= \lim_{n \rightarrow \infty} \expeY{ \swassersteinD[p](\hat{\mu}_{\psi(\phi(n))}(\omega), \hat{\mu}_{\theta_{\psi(\phi(n))}, m(\psi(\phi(n)))}) } \nonumber \\
        &= \liminf_{n \rightarrow \infty} \expeY{ \swassersteinD[p](\hat{\mu}_{\psi(\phi(n))}(\omega), \hat{\mu}_{\theta_{\psi(\phi(n))}, m(\psi(\phi(n)))}) } \nonumber \\
        &\geq \liminf_{n \rightarrow \infty} \defEns{ \swassersteinD[p](\hat{\mu}_{\psi(\phi(n))}(\omega), \mu_{\theta_{\psi(\phi(n))}}) - \expeY{ \swassersteinD[p](\mu_{\theta_{\psi(\phi(n))}}, \hat{\mu}_{\theta_{\psi(\phi(n))}, m(\psi(\phi(n)))})} } \label{eq:consistency_meswe_l1} \\
        &\geq \liminf_{n \rightarrow \infty} \swassersteinD[p](\hat{\mu}_{\psi(\phi(n))}(\omega), \mu_{\theta_{\psi(\phi(n))}}) - \limsup_{n \rightarrow \infty} \expeY{ \swassersteinD[p](\mu_{\theta_{\psi(\phi(n))}}, \hat{\mu}_{\theta_{\psi(\phi(n))}, m(\psi(\phi(n)))}) } \nonumber \\
        &\geq \swassersteinD[p](\mu_\star, \mu_{\bar{\theta}}) \label{eq:consistency_meswe_l2} \\
        &\geq \inf_{\theta \in \msk} \swassersteinD[p](\mu_\star, \mu_\theta) \nonumber
    \end{align}

    where \eqref{eq:consistency_meswe_l1} follows from the triangle inequality, and \eqref{eq:consistency_meswe_l2} is obtained on one hand by lower semi-continuity since $\hat{\mu}_{\psi(\phi(n))}(\omega) \wc \mu_\star$ by \cref{assumption:datagen} and Theorem~\ref{thm:SWp_metrizes_Pp} and $\mu_{\theta_{\psi(\phi(n))}} \wc \mu_{\bar{\theta}}$ by \cref{assumption:continuousmap}, and on the other hand by \cref{assumption:sw32} which gives $\limsup_{n \rightarrow \infty} \expeYLigne{ \swassersteinD[p](\mu_{\theta_{\psi(\phi(n))}}, \hat{\mu}_{\theta_{\psi(\phi(n))}, m(\psi(\phi(n)))}) } = 0$. We conclude that the first condition in \eqref{eqn:meswe_epiconv_cond} holds.

    Now, we fix $\mso \subset \Theta$ open. By definition of the infimum, there exists a sequence $\sequencen{\theta_n}$ in $\mso$ such that $\swassersteinD[p](\mu_\star, \mu_{\theta_n})$ converges to $\inf_{\theta \in \mso} \swassersteinD[p](\mu_\star, \mu_\theta)$. For any $n \in \mathbb{N}$, $\inf_{\theta \in \mso} \expeY{ \swassersteinD[p](\hat{\mu}_n(\omega), \hat{\mu}_{\theta, m(n)}) } \leq \expeY{ \swassersteinD[p](\hat{\mu}_n(\omega), \hat{\mu}_{\theta_n, m(n)}) }$. Therefore,
    \begin{align*}
        &\limsup_{n \rightarrow \infty} \inf_{\theta \in \mso} \expeY{ \swassersteinD[p](\hat{\mu}_n(\omega), \hat{\mu}_{\theta, m(n)}) } \leq \limsup_{n \rightarrow \infty} \expeY{ \swassersteinD[p](\hat{\mu}_n(\omega), \hat{\mu}_{\theta_n, m(n)}) } \nonumber \\
        & \qquad \qquad \qquad \leq \limsup_{n \rightarrow \infty} \defEns{ \swassersteinD[p](\hat{\mu}_n(\omega), \mu_\star) + \swassersteinD[p](\mu_\star, \mu_{\theta_n}) + \expeY{ \swassersteinD[p](\mu_{\theta_n}, \hat{\mu}_{\theta_n, m(n)}) } } \\
      & \qquad \qquad \qquad\qquad\qquad \text{by the triangle inequality} \\
        &\qquad \qquad \qquad = \limsup_{n \rightarrow \infty} \swassersteinD[p](\mu_\star, \mu_{\theta_n}) \;\; \text{ by \cref{assumption:datagen} and \cref{assumption:sw32}} \nonumber \\
        &\qquad \qquad \qquad = \inf_{\theta \in \mso} \swassersteinD[p](\mu_\star, \mu_\theta) \;\; \text{by definition of $\sequencen{\theta_n}$.}
    \end{align*}
    This shows that the second condition in \eqref{eqn:meswe_epiconv_cond} holds, and hence, the sequence of functions $\theta \mapsto \expeY{ \swassersteinD[p](\hat{\mu}_n(\omega), \hat{\mu}_{\theta, m(n)}) }$ epi-converges to $\theta \mapsto \swassersteinD[p](\mu_\star, \mu_\theta)$. 

    Now, we apply Theorem 7.31 of \cite{Rockafellar2009}. First, by \cite[Theorem 7.31(b)]{Rockafellar2009}, \eqref{eqn:meswe_consist_b} immediately follows from the epi-convergence of $\theta \mapsto \expeY{ \swassersteinD[p](\hat{\mu}_n(\omega), \hat{\mu}_{\theta, m(n)}) }$ to $\theta \mapsto \swassersteinD[p](\mu_\star, \mu_\theta)$.

Next, we show that \cite[Theorem 7.31(a)]{Rockafellar2009} holds by finding, for any $\eta > 0$, a compact set $\msb \subset \Theta$ and $N \in \mathbb{N}$ such that, for all $n \geq N$, 
\begin{equation*}
  \inf_{\theta \in \msb} \expeY{ \swassersteinD[p](\hat{\mu}_n(\omega), \hat{\mu}_{\theta, m(n)}) } \leq \inf_{\theta \in \Theta} \expeY{ \swassersteinD[p](\hat{\mu}_n(\omega), \hat{\mu}_{\theta, m(n)}) } + \eta \eqsp.
\end{equation*}

In fact, we simply show that there exists a compact set $\msb \subset \Theta$ and $N \in \mathbb{N}$ such that, for all $n \geq N$, $\inf_{\theta \in \msb} \expeY{ \swassersteinD[p](\hat{\mu}_n(\omega), \hat{\mu}_{\theta, m(n)}) } = \inf_{\theta \in \Theta} \expeY{ \swassersteinD[p](\hat{\mu}_n(\omega), \hat{\mu}_{\theta, m(n)}) }$. 

On one hand, the second condition in \eqref{eqn:meswe_epiconv_cond} gives us 
\begin{equation*}
  \limsup_{n \rightarrow \infty} \inf_{\theta \in \Theta} \expeY{ \swassersteinD[p](\hat{\mu}_n(\omega), \hat{\mu}_{\theta, m(n)}) } \leq \inf_{\theta \in \Theta} \swassersteinD[p](\mu_\star, \mu_\theta) = \epsilon_\star \eqsp.
\end{equation*} 

We deduce that there exists $n_{\epsilon / 6}(\omega)$ such that, for $n \geq n_{\epsilon / 6}(\omega)$, 
\begin{equation*}
  \inf_{\theta \in \Theta} \expeY{ \swassersteinD[p](\hat{\mu}_n(\omega), \hat{\mu}_{\theta, m(n)}) } \leq \epsilon_\star + \frac{\epsilon}{6},
\end{equation*}
with the $\epsilon$ of \cref{assumption:boundedset}. When $n \geq n_{\epsilon / 6}(\omega)$, the set $\widehat{\Theta}_{\epsilon/3} = \{ \theta \in \Theta : \expeYLigne{ \swassersteinD[p](\hat{\mu}_n(\omega), \hat{\mu}_{\theta, m(n)}) } \leq \epsilon_\star + \frac{\epsilon}{3} \}$ is non-empty as it contains $\theta^*$ defined as $\expeY{ \swassersteinD[p](\hat{\mu}_n(\omega), \hat{\mu}_{\theta^*, m(n)}) } = \inf_{\theta \in \Theta} \expeY{ \swassersteinD[p](\hat{\mu}_n(\omega), \hat{\mu}_{\theta, m(n)}) }$. 

On the other hand, by \cref{assumption:datagen}, there exists $n_{\epsilon/3}(\omega)$ such that, for $n \geq n_{\epsilon/3}(\omega)$, 
\begin{equation} \label{eqn:n_eps_3}
  \swassersteinD[p](\hat{\mu}_n(\omega), \mu_\star) \leq \frac{\epsilon}{3} \eqsp.
\end{equation}

Finally, by \cref{assumption:sw32}, there exists $n'_{\epsilon/3}(\omega)$ such that, for $n \geq n'_{\epsilon/3}(\omega)$, 
\begin{equation} \label{eqn:n_prime_eps_3}
  \expeY{ \swassersteinD[p](\mu_\theta, \hat{\mu}_{\theta, m(n)}) } \leq \frac{\epsilon}{3} \eqsp.
\end{equation}

Let $n \geq n_*(\omega) = \max\{ n_{\epsilon/6}(\omega), n_{\epsilon/3}(\omega), n'_{\epsilon/3}(\omega) \}$ and $\theta \in \widehat{\Theta}_{\epsilon/3}$. By the triangle inequality,
\begin{align*}
  \swassersteinD[p](\mu_\star, \mu_\theta) &\leq \swassersteinD[p](\hat{\mu}_n(\omega), \mu_\star) + \expeY{ \swassersteinD[p](\hat{\mu}_n(\omega), \hat{\mu}_{\theta, m(n)}) } + \expeY{ \swassersteinD[p](\mu_\theta, \hat{\mu}_{\theta, m(n)}) } \\{}
  &\leq \epsilon_\star + \epsilon \text{\;\;\; since } \theta \in \widehat{\Theta}_{\epsilon/3} \text{ and by } \eqref{eqn:n_eps_3} \text{ and } \eqref{eqn:n_prime_eps_3} 
\end{align*}

This means that, when $n \geq n_*(\omega)$, $\widehat{\Theta}_{\epsilon/3} \subset \Theta^\star_\epsilon$ with $\Theta^\star_\epsilon$ as defined in \cref{assumption:boundedset}, and since $\inf_{\theta \in \Theta} \expeY{ \swassersteinD[p](\hat{\mu}_n(\omega), \hat{\mu}_{\theta, m(n)}) }$ is attained in $\widehat{\Theta}_{\epsilon/3}$, we have
\begin{align}
  \inf_{\theta \in \Theta^\star_{\epsilon}} \expeY{ \swassersteinD[p](\hat{\mu}_n(\omega), \hat{\mu}_{\theta, m(n)}) } &= \inf_{\theta \in \Theta} \expeY{ \swassersteinD[p](\hat{\mu}_n(\omega), \hat{\mu}_{\theta, m(n)}) } \eqsp. \label{eqn:meswe_cond_for_thm731a}
\end{align}

By \cite[Theorem 7.31(a)]{Rockafellar2009}, \eqref{eqn:meswe_consist_a} is a direct consequence of \eqref{eqn:meswe_cond_for_thm731a} and the epi-convergence of $\theta \mapsto \expeY{ \swassersteinD[p](\hat{\mu}_n(\omega), \hat{\mu}_{\theta, m(n)}) }$ to $\theta \mapsto \swassersteinD[p](\mu_\star, \mu_\theta)$.

Finally, by the same reasoning that was done earlier in this proof for $\argmin_{\theta \in \Theta} \swassersteinD[p](\mu_\star, \mu_\theta)$, the set $\argmin_{\theta \in \Theta} \expeY{ \swassersteinD[p](\hat{\mu}_n(\omega), \hat{\mu}_{\theta, m(n)}) }$ is non-empty for $n \geq n_*(\omega)$. 

\end{proof}

\subsection{Convergence of the MESWE to the MSWE: Proof of \Cref{thm:cvg_meswe_to_mswe}}

\begin{proof}[Proof of \Cref{thm:cvg_meswe_to_mswe}]
  Here again, the result follows from applying \cite[Theorem 7.31]{Rockafellar2009}, paraphrased in Theorem~\ref{thm:731rockafellar}.
    
    First, by \cref{assumption:continuousmap} and Corollary~\ref{coro:sw_semicontinuous_2}, the map $\theta \mapsto \swassersteinD[p](\hat{\mu}_n, \mu_\theta)$ is l.s.c. on $\Theta$. Therefore, there exists $\theta_n \in \Theta$ such that $\swassersteinD[p](\hat{\mu}_n, \mu_{\theta_n}) = \epsilon_n$. The set $\Theta_{\epsilon, n}$ with the $\epsilon$ from \cref{assumption:boundedset_eps_n} is non-empty as it contains $\theta_n$, closed by lower semi-continuity of $\theta \mapsto \swassersteinD[p](\hat{\mu}_n, \mu_\theta)$, and bounded. $\Theta_{\epsilon, n}$ is thus compact, and we conclude again by lower semi-continuity that the set $\argmin_{\theta \in \Theta} \swassersteinD[p](\hat{\mu}_n, \mu_\theta)$ is non-empty \cite[Theorem 2.43]{aliprantis1999infinite}.
    
    Then, we prove that $\theta \mapsto \expeY{ \swassersteinD[p](\hat{\mu}_n, \hat{\mu}_{\theta,m}) }$ epi-converges to $\theta \mapsto \swassersteinD[p](\hat{\mu}_n, \mu_\theta)$ as $m \rightarrow \infty$ using the characterization in \cite[Proposition 7.29]{Rockafellar2009}, \ie~we verify that: for every compact set $\msk \subset \Theta$ and every open set $\mso \subset \Theta$,
    \begin{equation}
      \begin{aligned}
        \liminf_{m \rightarrow \infty} \inf_{\theta \in \msk} \expeY{ \swassersteinD[p](\hat{\mu}_n, \hat{\mu}_{\theta, m})}& \geq \inf_{\theta \in \msk} \swassersteinD[p](\hat{\mu}_n, \mu_\theta)   \\
        \limsup_{m \rightarrow \infty} \inf_{\theta \in \mso} \expeY{ \swassersteinD[p](\hat{\mu}_n, \hat{\mu}_{\theta, m})}& \leq \inf_{\theta \in \mso} \swassersteinD[p](\hat{\mu}_n, \mu_\theta) \eqsp. 
      \end{aligned}
      \label{eqn:meswe_to_mswe_epiconv}
    \end{equation}
    
    Let $\msk \subset \Theta$ be a compact set. By \cref{assumption:continuousmap} and Corollary~\ref{coro:esw_semicontinuous_2}, for any $m \in \mathbb{N}$, the map $\theta \mapsto \expeYLigne{ \swassersteinD[p](\hat{\mu}_n, \hat{\mu}_{\theta, m}) }$ is l.s.c., so there exists $\theta_m \in \msk$ such that $\inf_{\theta \in \msk} \expeYLigne{ \swassersteinD[p](\hat{\mu}_n, \hat{\mu}_{\theta, m}) } = \expeYLigne{ \swassersteinD[p](\hat{\mu}_n, \hat{\mu}_{\theta_m, m}) }$. 

  We consider the subsequence $\{ \hat{\mu}_{\theta_{\phi(m)}, \phi(m)} \}_{m \in \mathbb{N}}$ where $\phi : \mathbb{N} \rightarrow \mathbb{N}$ is increasing such that $\expeYLigne{ \swassersteinD[p](\hat{\mu}_n, \hat{\mu}_{\theta_{\phi(m)}, \phi(m)}) }$ converges to $\liminf_{m \rightarrow \infty} \ \expeYLigne{ \swassersteinD[p](\hat{\mu}_n, \hat{\mu}_{\theta_m, m}) } = \liminf_{m \rightarrow \infty} \inf_{\theta \in \msk}  \expeYLigne{ \swassersteinD[p](\hat{\mu}_n, \hat{\mu}_{\theta,m}) }$. Since $\msk$ is compact, there also exists an increasing function $\psi : \mathbb{N} \rightarrow \mathbb{N}$ such that, for any $\bar{\theta} \in \msk$, $\lim_{m \rightarrow \infty } \rho_\Theta(\theta_{\psi(\phi(m))}, \bar{\theta}) = 0$. Therefore, we have  
    \begin{align}
        &\liminf_{m \rightarrow \infty} \inf_{\theta \in \msk} \expeY{ \swassersteinD[p](\hat{\mu}_n, \hat{\mu}_{\theta, m}) } \nonumber \\
        &= \lim_{m \rightarrow \infty} \expeY{ \swassersteinD[p](\hat{\mu}_n, \hat{\mu}_{\theta_{\phi(m)}, \phi(m)}) } \nonumber \\
        &= \lim_{m \rightarrow \infty} \expeY{ \swassersteinD[p](\hat{\mu}_n, \hat{\mu}_{\theta_{\psi(\phi(m))}, \psi(\phi(m))}) } \nonumber \\
        &= \liminf_{m \rightarrow \infty} \expeY{ \swassersteinD[p](\hat{\mu}_n, \hat{\mu}_{\theta_{\psi(\phi(m))}, \psi(\phi(m))}) } \nonumber \\
        &\geq \liminf_{m \rightarrow \infty} [ \swassersteinD[p](\hat{\mu}_n, \mu_{\theta_{\psi(\phi(m))}}) - \expeY{ \swassersteinD[p](\mu_{\theta_{\psi(\phi(m))}}, \hat{\mu}_{\theta_{\psi(\phi(m))}, \psi(\phi(m))}) } ] \label{eqn:proof_meswe_to_mswe_l1} \\
        &\geq \liminf_{m \rightarrow \infty} \swassersteinD[p](\hat{\mu}_n, \mu_{\theta_{\psi(\phi(m))}}) - \limsup_{m \rightarrow \infty} \expeY{ \swassersteinD[p](\mu_{\theta_{\psi(\phi(m))}}, \hat{\mu}_{\theta_{\psi(\phi(m))}, \psi(\phi(m))}) } \nonumber \\
        &\geq \swassersteinD[p](\hat{\mu}_n, \mu_{\bar{\theta}}) \label{eqn:proof_meswe_to_mswe_l2} \\
        &\geq \inf_{\theta \in \msk} \swassersteinD[p](\hat{\mu}_n, \mu_\theta) \nonumber
    \end{align}

    where \eqref{eqn:proof_meswe_to_mswe_l1} results from the triangle inequality and \eqref{eqn:proof_meswe_to_mswe_l2} is obtained by \cref{assumption:sw32} on one hand and by lower semi-continuity on the other hand since $\mu_{\theta_{\psi(\phi(n))}} \wc \mu_{\bar{\theta}}$ by \cref{assumption:continuousmap}. We conclude that the first condition in \eqref{eqn:meswe_to_mswe_epiconv} holds.
    
    Now, we fix $\mso \subset \Theta$ open. By definition of the infimum, there exists a sequence $(\theta_m)_{m \in \mathbb{N}}$ in $\mso$ such that $\swassersteinD[p](\hat{\mu}_n, \hat{\mu}_{\theta_m, m})$ converges to $\inf_{\theta \in \mso} \swassersteinD[p](\hat{\mu}_n, \hat{\mu}_{\theta,m})$. For any $m \in \mathbb{N}$, $\inf_{\theta \in \mso} \expeY{ \swassersteinD[p](\hat{\mu}_n, \hat{\mu}_{\theta, m}) } \leq \expeY{ \swassersteinD[p](\hat{\mu}_n, \mu_{\theta_m, m}) }$. Therefore,
    \begin{align*}
        &\limsup_{m \rightarrow \infty} \inf_{\theta \in \mso} \expeY{ \swassersteinD[p](\hat{\mu}_n, \hat{\mu}_{\theta, m}) } \\
        &\leq \limsup_{m \rightarrow \infty} \expeY{ \swassersteinD[p](\hat{\mu}_n, \hat{\mu}_{\theta_m, m}) } \nonumber \\
        &\leq \limsup_{m \rightarrow \infty} [ \swassersteinD[p](\hat{\mu}_n, \mu_{\theta_m}) + \expeY{ \swassersteinD[p](\mu_{\theta_m}, \hat{\mu}_{\theta_m, m}) } ] \;\; \text{by the triangle inequality} \\
        &\leq  \limsup_{m \rightarrow \infty} \swassersteinD[p](\hat{\mu}_n, \mu_{\theta_m}) \text{ by \cref{assumption:sw32}} \\
        &= \inf_{\theta \in \mso} \swassersteinD[p](\hat{\mu}_n, \mu_\theta) \;\; \text{by definition of $(\theta_m)_{m \in \mathbb{N}}$}
    \end{align*}
    This shows that the second condition in \eqref{eqn:meswe_to_mswe_epiconv} holds, and hence, the sequence of functions $\theta \mapsto \expeY{ \swassersteinD[p](\hat{\mu}_n, \hat{\mu}_{\theta,m}) }$ epi-converges to $\theta \mapsto \swassersteinD[p](\hat{\mu}_n, \mu_\theta)$. 

    Now, we apply \cite[Theorem 7.31]{Rockafellar2009}. By \cite[Theorem 7.31(b)]{Rockafellar2009}, \eqref{eqn:meswe_to_mswe_b} immediately follows from the epi-convergence of $\theta \mapsto \expeY{ \swassersteinD[p](\hat{\mu}_n, \hat{\mu}_{\theta,m}) }$ to $\theta \mapsto \swassersteinD[p](\hat{\mu}_n, \mu_\theta)$.

    Next, we show that \cite[Theorem 7.31(a)]{Rockafellar2009} holds by finding for any $\eta > 0$ a compact set $\msb \subset \Theta$ and $N \in \nset$ such that, for all $n \geq N$, 
  \begin{equation*}
    \inf_{\theta \in \msb} \expeY{ \swassersteinD[p](\hat{\mu}_n, \hat{\mu}_{\theta, m}) } \leq \inf_{\theta \in \Theta} \expeY{ \swassersteinD[p](\hat{\mu}_n, \hat{\mu}_{\theta, m}) } + \eta \eqsp.
  \end{equation*}

  In fact, we simply show that there exists a compact set $\msb \subset \Theta$ and $N \in \nset$ such that, for all $n \geq N$, $\inf_{\theta \in \msb} \expeY{ \swassersteinD[p](\hat{\mu}_n, \hat{\mu}_{\theta, m}) } = \inf_{\theta \in \Theta} \expeY{ \swassersteinD[p](\hat{\mu}_n, \hat{\mu}_{\theta, m}) }$.
  On one hand, the second condition in \eqref{eqn:meswe_to_mswe_epiconv} gives us 
  \begin{equation*}
    \limsup_{m \rightarrow \infty} \inf_{\theta \in \Theta} \expeY{ \swassersteinD[p](\hat{\mu}_n, \hat{\mu}_{\theta, m}) } \leq \inf_{\theta \in \Theta} \swassersteinD[p](\hat{\mu}_n, \mu_\theta) = \epsilon_n \eqsp.
  \end{equation*} 

  We deduce that there exists $m_{\epsilon/4}$ such that, for $m \geq m_{\epsilon/4}$, 
  \begin{equation} \label{eqn:m_eps4}
    \inf_{\theta \in \Theta} \expeY{ \swassersteinD[p](\hat{\mu}_n, \hat{\mu}_{\theta, m}) } \leq \epsilon_n + \frac{\epsilon}{4} \eqsp.
  \end{equation}
  with the $\epsilon$ of \cref{assumption:boundedset_eps_n}. When $m \geq m_{\epsilon/4}$, the set $\Theta_{\epsilon/2} = \{ \theta \in \Theta : \expeYLigne{ \swassersteinD[p](\hat{\mu}_n, \hat{\mu}_{\theta, m}) } \leq \epsilon_n + \frac{\epsilon}{2} \}$ is non-empty as it contains $\theta^*$ defined as $\expeYLigne{ \swassersteinD[p](\hat{\mu}_n, \hat{\mu}_{\theta^*, m}) } = \inf_{\theta \in \Theta} \expeYLigne{ \swassersteinD[p](\hat{\mu}_n, \hat{\mu}_{\theta, m}) }$. 

  On the other hand, by \cref{assumption:sw32}, there exists $m_{\epsilon/2}$ such that, for $m \geq m_{\epsilon/2}$, 
  \begin{equation} \label{eqn:m_eps2}
    \expeY{ \swassersteinD[p](\mu_\theta, \hat{\mu}_{\theta, m}) } \leq \frac{\epsilon}{2} \eqsp.
  \end{equation}

  Let $\theta$ belong to $\Theta_{\epsilon/2}$ and $m \geq m_* = \max\{m_{\epsilon/4}, m_{\epsilon/2} \}$. By the triangle inequality,
  \begin{align*}
    \swassersteinD[p](\hat{\mu}_n, \mu_\theta) &\leq \expeY{ \swassersteinD[p](\hat{\mu}_n, \hat{\mu}_{\theta, m}) } + \expeY{ \swassersteinD[p](\mu_\theta, \hat{\mu}_{\theta, m}) } \\
    &\leq \epsilon_n + \epsilon \text{\;\;\; since $\theta \in \Theta_{\epsilon/2}$ and by \eqref{eqn:m_eps2}}  
  \end{align*}

  This means that, when $m \geq m_*$, $\Theta_{\epsilon/2} \subset \Theta_{\epsilon, n}$, and since $\inf_{\theta \in \Theta} \expeY{ \swassersteinD[p](\hat{\mu}_n, \hat{\mu}_{\theta,m}) }$ is attained in $\Theta_{\epsilon/2}$,
  \begin{equation}
    \inf_{\theta \in \Theta_{\epsilon, n}} \expeY{ \swassersteinD[p](\hat{\mu}_n, \hat{\mu}_{\theta,m}) } = \inf_{\theta \in \Theta} \expeY{ \swassersteinD[p](\hat{\mu}_n, \hat{\mu}_{\theta,m}) } \eqsp. \label{eqn:meswe_to_mswe_cond_for_thm731a}
  \end{equation}

  By \cite[Theorem 7.31(a)]{Rockafellar2009}, \eqref{eqn:meswe_to_mswe_a} is a direct consequence of \eqref{eqn:meswe_to_mswe_cond_for_thm731a} and the epiconvergence of $\theta \mapsto \expeY{ \swassersteinD[p](\hat{\mu}_n(\omega), \hat{\mu}_{\theta,m}) }$ to $\theta \mapsto \swassersteinD[p](\hat{\mu}_n, \mu_\theta)$.

  Finally, by the same reasoning that was done earlier in this proof for $\argmin_{\theta \in \Theta} \swassersteinD[p](\hat{\mu}_n, \mu_\theta)$, the set $\argmin_{\theta \in \Theta} \expeY{ \swassersteinD[p](\hat{\mu}_n, \hat{\mu}_{\theta,m}) }$ is non-empty for $m \geq m_*$. 

\end{proof}

\subsection{Proof of Rate of convergence and asymptotic distribution: Proof of \Cref{thm:asymptotic_1} and \Cref{thm:asymptotic_2}}

\begin{proof}[Proof of \Cref{thm:asymptotic_1} and \Cref{thm:asymptotic_2}]
  The proof of \Cref{thm:asymptotic_1} and \Cref{thm:asymptotic_2} consists in showing that the conditions of Theorem 4.2 and Theorem 7.2 in \cite{Pollard1980} respectively are satisfied: conditions (i), (ii) and (iii) follow from \cref{assumption:well_separation}, \cref{assumption:form_derivative} and \cref{assumption:weak_convergence_without_norm}.
\end{proof}

\section{Computational Aspects} \label{appendix:computational}

The MSWE and MESWE are in general computationnally intractable, partly because the Sliced-Wasserstein distance requires an integration over infinitely many projections. In this section, we review the numerical methods used to approximate these two estimators.

{\bf Approximation of $\swassersteinD[p]$:} We recall the definition of the SW distance below.
\begin{equation}
\label{eq:def_sliced_wasser_supp}
  \swassersteinD[p]^{p}(\mu, \nu) = \int_{\sphere^{d-1}} \wassersteinD[p]^p(\uss_{\sharp} \mu, \uss_{\sharp} \nu) \rmd\unifS(\us) \eqsp,
\end{equation}
where $\unifS$ is the uniform distribution on $\sphere^{d-1}$ and for any measurable function $f :\msy \to \rset$ and $\zeta \in \mcp(\msy)$,  $f_{\sharp}\zeta$ is the push-forward measure of $\zeta$ by $f$. We approximate the integral in \eqref{eq:def_sliced_wasser_supp} by selecting a finite set of projections $\msu \subset \sphere^{d-1}$ and computing the empirical average:
\begin{equation} \label{eq:def_approx_sliced_wasser}
  \swassersteinD[p]^{p}(\mu, \nu) \approx \frac{1}{\card(\msu)}\sum_{u \in \msu} \wassersteinD[p]^p(\uss_{\sharp} \mu, \uss_{\sharp} \nu)
\end{equation}

The quality of this approximation depends on the sampling of $\sphereD$. In our work, we use random samples picked uniformly on $\sphereD$, as proposed in \cite{bonneel2015} and explained hereafter (see paragraph ``Sampling schemes'').

The Wasserstein distance between two one-dimensional probability densities $\mu$ and $\nu$  as defined in \eqref{eq:wp1d} is also estimated by replacing the integrals with a Monte Carlo estimate, and we can use two distinct methods to approximate this quantity. %

The first approximation we consider is given by,
\begin{equation}
\label{eq:wass_approx_1}
      \wassersteinD[p]^p(\mu, \nu) \approx \frac{1}{K} \sum_{k=1}^K \abs{ \tilde{F}_\mu^{-1}(t_k) - \tilde{F}_\nu^{-1}(t_k)}^p \eqsp,
    \end{equation}
    where $\{t_k\}_{k=1}^K$ are uniform and independent samples from $\ccint{0,1}$ and  for $\xi \in \{\mu,\nu\}$, $\tilde{F}_\xi^{-1}$ is a linear interpolation of $\bar{F}^{-1}_{\xi}$ which denotes either the exact quantile function of $\xi$ if $\xi$ is discrete, or an approximation by a Monte Carlo procedure. This last option is justified by the Glivenko-Cantelli Theorem.

The second approximation is given by,
\begin{equation}
\label{eq:wass_approx_2}
      \wassersteinD[p]^p(\mu, \nu) \approx \frac{1}{K} \sum_{k=1}^K \abs{ s_k - \tilde{F}_\nu^{-1}(\tilde{F}_\mu(s_k))}^p \eqsp,
\end{equation}
    where $\{s_k\}_{i=1}^K$ are uniform and independent samples from $\mu$  and  for $\xi \in \{\mu,\nu\}$, $\tilde{F}_\xi$ (resp. $\tilde{F}_\xi^{-1}$)  is a linear interpolation of $\bar{F}_{\xi}$ (resp. $\bar{F}^{-1}_{\xi}$) which denotes either the exact cumulative distribution function (resp. quantile function) of $\xi$ if $\xi$ is discrete or an approximation by a Monte Carlo procedure. 

{\bf Sampling schemes:} We explain the methods that we used to generate \iid~samples from the uniform distribution on the $d$-dimensional sphere $\sphereD$ and from multivariate elliptically contoured stable distributions. 
\begin{itemize}
  \item \textbf{Uniform sampling on the sphere.} To sample from $\sphereD$, we form the $d$-dimensional vector $\bfs$ by drawing each of its $d$ components from the standard normal distribution $\calN(0, 1)$ and we normalize it: $\mathbf{s'} = \bfs / \| \bfs \|_2$, so that $\mathbf{s'}$ lies on the sphere.  
  \item \textbf{Sampling from multivariate elliptically contoured stable distributions.} We recall that if $Y \in \Rd$ is $\alpha$-stable and elliptically contoured, \ie~$Y \sim \ellstable(\bfSigma, \bfm)$, then its joint characteristic function is defined as, for any $\bft \in \Rd$,
  \begin{equation} \label{eq:ellstable_charfunc}
    \Esp [ \exp (i\bft^T Y) ] = \exp \left( - (\bft^T \bfSigma \bft)^{\alpha / 2} + i \bft^T \bfm \right) \eqsp ,
  \end{equation}
  where $\bfSigma$ is a positive definite matrix (akin to a correlation matrix), $\bfm \in \Rd$ is a location vector (equal to the mean if it exists) and $\alpha \in (0, 2)$ controls the thickness of the tail. Elliptically contoured stable distributions are scale mixtures of multivariate Gaussian distributions \cite[Proposition 2.5.2]{samorodnitsky1994stable}, whose densities are intractable, but can easily be simulated \cite{Nolan2013}: let $A \sim \calS_{\alpha/2}(\beta, \gamma, \delta)$ be a one-dimensional positive $(\alpha / 2)$-stable random variable with $\beta = 1$, $\gamma = 2\cos(\frac{\pi \alpha}{4})^{2/\alpha}$ and $\delta = 0$, and $G \sim \calN(\bfnot, \bfSigma)$. Then, $Y = \sqrt{A}G + \bfm$ has \eqref{eq:ellstable_charfunc} as characteristic function. 
\end{itemize}

{\bf Optimization methods:} Computing the MSWE and MESWE implies minimizing the (expected) Sliced-Wasserstein distance over the set of parameters. In our experiments, we used different optimization methods as we detail below.
\begin{itemize}

\item \textbf{Multivariate Gaussian distributions.} We derive the explicit gradient expressions of the approximate $\swassersteinD[2]^2$ distance with respect to the mean and scale parameters $\bfm$ and $\sigma^2$, and we use the ADAM stochastic optimization method with the default parameter settings suggested in \cite{Adam}. For the MSWE, we use \eqref{eq:wass_approx_2} to approximate the one-dimensional Wasserstein distance, and we evaluate directly the Gaussian density of the generated samples, utilizing the fact that the projection of a Gaussian of parameters $(\bfm, \sigma^2 \bfI)$ along $u \in \sphereD$ is a 1D normal distribution of parameters $(\langle u, \bfm \rangle, \sigma^2 \langle u, u \rangle)$. In this case, the gradient of the approximate $\swassersteinD[2]^2$ between $\mu = \calN(\bfm, \sigma^2\bfI)$ and the empirical distribution associated to $n$ samples drawn by $\calN(\bfm_\star, \sigma^2_\star\bfI)$, denoted by $\hat{\nu}$, is given by,
\begin{align*}
  \nabla_\bfm \swassersteinD[2]^2(\mu, \hat{\nu}) = \frac{1}{\card(\msu)\card(\mss)} \sum_{u \in \msu, s \in \mss} \bigg( \abs{ s - \tilde{F}_{\usss \hat{\nu}}^{-1}(\tilde{F}_{\usss \mu}(s))}^2 \calN(s ; \langle u, \bfm \rangle, \sigma^2 \norm{u}^2)& \\
  \frac{s - \langle u, \bfm \rangle}{\sigma^2 \norm{u}^2} u \bigg),& \\
  \nabla_{\sigma^2} \swassersteinD[2]^2(\mu, \hat{\nu}) = \frac{1}{\card(\msu)\card(\mss)} \sum_{u \in \msu, s \in \mss} \bigg( \abs{ s - \tilde{F}_{\usss \hat{\nu}}^{-1}(\tilde{F}_{\usss \mu}(s))}^2 \calN(s ; \langle u, \bfm \rangle, \sigma^2 \norm{u}^2) \\
   \frac{1}{2 \sigma^2 }\left(\frac{(s - \langle u, \bfm \rangle)^2}{\sigma^2 \norm{u}^2} - 1 \right) \bigg),&
\end{align*}
where $\msu \subset \sphereD$ is a finite set of random projections picked uniformly on $\sphereD$, $\mss$ is a finite subset in $\rset$, and for any $s \in \mss$, $\calN(s ; \langle u, \bfm \rangle, \sigma^2 \norm{u}^2)$ denotes the density function of the Gaussian of parameters $(\langle u, \bfm \rangle, \sigma^2 \norm{u}^2)$ evaluated at $s$. \\

For the MESWE, we use \eqref{eq:wass_approx_1} and evaluate the empirical distribution of generated samples instead of their normal density. Therefore, the gradient of the approximate $\swassersteinD[2]^2$ between the empirical distributions corresponding to one generated dataset of $m$ samples drawn from $\calN(\mu, \sigma^2\bfI)$ and $n$ samples drawn from $\calN(\mu_\star, \sigma^2_\star\bfI)$, respectively denoted by $\hat{\mu}$ and $\hat{\nu}$, is obtained with,
\begin{align*}
  \nabla_\bfm \swassersteinD[2]^2(\hat{\mu}, \hat{\nu}) &= \frac{-2}{\card(\msu).K} \sum_{u \in \msu} \sum_{k=1}^K \abs{ \tilde{F}_{\usss \hat{\mu}}^{-1}(t_k) - \tilde{F}_{\usss \hat{\nu}}^{-1}(t_k)} u \eqsp , \\
  \nabla_{\sigma^2} \swassersteinD[2]^2(\hat{\mu}, \hat{\nu}) &= \frac{1}{\card(\msu).K} \sum_{u \in \msu} \sum_{k=1}^K \abs{ \tilde{F}_{\usss \hat{\mu}}^{-1}(t_k) - \tilde{F}_{\usss \hat{\nu}}^{-1}(t_k)} \frac{\langle u, \bfm \rangle - \tilde{F}_{\usss \hat{\mu}}^{-1}(t_k)}{\sigma^2} \eqsp .
\end{align*}

\item \textbf{Multivariate elliptically contoured stable distributions.} When comparing MESWE to MEWE, we approximate these estimators using the derivative-free optimization method Nelder-Mead (implemented in \texttt{Scipy}), following the approach in \cite{Bernton2019}. 

When illustrating the theoretical properties of MESWE, we proceed in the same way as for the multivariate Gaussian experiment: we compute the explicit gradient expression of the approximate $\swassersteinD[2]^2$ distance with respect to the location parameter $\bfm$, and we use the ADAM stochastic optimization method with the default settings. \Cref{eq:alpha_gradient} gives the formula of the gradient of the approximate $\swassersteinD[2]^2$ between the empirical distributions of one generated dataset of $m$ samples drawn from $\ellstable(\bfI, \bfm)$ and $n$ samples drawn from $\ellstable(\bfI, \bfm_\star)$, respectively denoted by $\hat{\mu}$ and $\hat{\nu}$, with respect to $\bfm$.
\begin{equation} \label{eq:alpha_gradient}
\nabla_\bfm \swassersteinD[2]^2(\hat{\mu}, \hat{\nu}) =  \frac{-2}{\card(\msu).K} \sum_{u \in \msu} \sum_{k=1}^K \abs{ \tilde{F}_{\usss \hat{\mu}}^{-1}(t_k) - \tilde{F}_{\usss \hat{\nu}}^{-1}(t_k)} u \eqsp .
\end{equation}

\item \textbf{High-dimensional real data using GANs.} We use the ADAM optimizer provided by TensorFlow GPU.
\end{itemize}

{\bf Computing infrastructure:} The experiment comparing the computational time of MESWE and MEWE was conducted on a daily-use laptop (CPU intel core i7, 1.90GHz $\times$ 8 and 16GB of RAM). The neural network experiment was run on a cluster with 4 relatively modern GPUs.

\end{document}